\theoremstyle{plain}
\newtheorem{theorem}{Theorem}[section]
\newtheorem{definition}{Definition}[section]
\newtheorem{proposition}{Proposition}[section]
\newtheorem{corollary}{Corollary}[section]
\definecolor{fire}{RGB}{255, 69, 0}
\title{Differentially Private Relational Learning with Entity-level Privacy Guarantees}
\author{%
  Yinan Huang\thanks{Equal contributions, listed in alphabetical order.} \\
  Georgia Institute of Technology\\
  %Pittsburgh, PA 15213 \\
  \texttt{yhuang903@gatech.edu} \\
  % examples of more authors
  \And
  Haoteng Yin$^*$ \\ 
  Purdue University \\
  \texttt{yinht@acm.org} \\
  \And 
  Eli Chien \\
  Georgia Institute of Technology\\
  \texttt{ichien6@gatech.edu} \\
  \And 
  Rongzhe Wei\\
  Georgia Institute of Technology\\
  \texttt{rongzhe.wei@gatech.edu} \\
  \And 
  Pan Li \\ 
  Georgia Institute of Technology\\
  \texttt{panli@gatech.edu}
  % Coauthor \\
  % Affiliation \\
  % Address \\
  % \texttt{email} \\
  % \AND
  % Coauthor \\
  % Affiliation \\
  % Address \\
  % \texttt{email} \\
  % \And
  % Coauthor \\
  % Affiliation \\
  % Address \\
  % \texttt{email} \\
  % \And
  % Coauthor \\
  % Affiliation \\
  % Address \\
  % \texttt{email} \\
}
\begin{document}

\maketitle

\begin{abstract}
 Learning with relational and network-structured data is increasingly vital in sensitive domains where protecting the privacy of individual entities is paramount. Differential Privacy (DP) offers a principled approach for quantifying privacy risks, with DP-SGD emerging as a standard mechanism for private model training. However, directly applying DP-SGD to relational learning is challenging due to two key factors: (i) entities often participate in multiple relations, resulting in high and difficult-to-control sensitivity; and (ii) relational learning typically involves multi-stage, potentially coupled (interdependent) sampling procedures that make standard privacy amplification analyses inapplicable. This work presents a principled framework for relational learning with formal entity-level DP guarantees. We provide a rigorous sensitivity analysis and introduce an adaptive gradient clipping scheme that modulates clipping thresholds based on entity occurrence frequency. We also extend the privacy amplification results to a tractable subclass of coupled sampling, where the dependence arises only through sample sizes. These contributions lead to a tailored DP-SGD variant for relational data with provable privacy guarantees. Experiments on fine-tuning text encoders over text-attributed network-structured relational data demonstrate the strong utility-privacy trade-offs of our approach. Our code is available at \href{https://github.com/Graph-COM/Node_DP}{https://github.com/Graph-COM/Node\_DP}.
\end{abstract}

\section{Introduction}
%\yinan{graphs, relational learning}

Complex real-world relationships and interactions among individuals are commonly modeled as graphs, where nodes represent these individuals (or more broadly, entities) and edges depict their connections. These graph structures are invaluable for analyzing intricate systems and networks~\cite{kanehisa2000kegg, newman2003structure, leskovec2005graphs, leskovec2007dynamics, kwak2010twitter, zitnik2018modeling}. Machine learning methods that leverage these structures have demonstrated significant potential in tasks reliant on such relational information, such as recommendation systems, financial network analysis, and some healthcare applications~\citep{kipf2016semi,hamilton2017inductive, bronstein2017geometric, stokes2020deep, wang2021review, ahmedt2021graph, wu2020comprehensive}. Particularly, relational learning integrates relational information into the models, enabling them to capture and infer complex dependencies beyond isolated data points~\cite{xiong2018one, zhang2019ernie, fang2022transferring}. One promising application is fine-tuning foundation models---originally trained on text or images---using relational datasets to improve their predictive performance and adaptability to relational contexts~\cite{wu2021leveraging, zhang2022greaselm, gao2023leveraging, alsentzer2019publicly}.

Privacy concerns emerge when relational data to be incorporated into models contains sensitive information, particularly in healthcare and finance. For example, in a patient-medical record network data, patient diagnoses (entity attributes) and medical treatments (relations) represent confidential details. Machine learning models are susceptible to privacy vulnerabilities, potentially leading to unintended exposure of sensitive training data~\cite{shokri2017membership,carlini2019secret}.

Differential Privacy (DP) provides a principled framework to quantify and mitigate privacy risks for individual data~\cite{dwork2006differential}. A prominent approach to train deep learning models under the DP framework is DP-SGD~\cite{song2013stochastic, bassily2014private, abadi2016deep,ziller2021medical, klause2022differentially, fang2022differentially, adnan2022federated}. DP-SGD was initially developed for unstructured data with independent samples, where modifying a single data point impacts only one gradient term. Privacy preservation is thus achieved by controlling gradient sensitivity via per-sample gradient clipping and adding calibrated Gaussian noise to mask individual contributions and prevent privacy leakage.

%For unstructured data, the removal/insertion of a single data point only affects one gradient term. Thus, DP-SGD bounds the mini-batch gradient sensitivity by clipping the norm of per-sample gradients and then adds Gaussian noises based on the sensitivity. 

Moreover, DP-SGD benefits from a privacy analysis technique known as privacy amplification by subsampling~\cite{abadi2016deep, balle2018privacy, mironov2019r, zhu2019poission}. Specifically, applying a differentially private algorithm to a randomly subsampled dataset rather than the entire dataset strengthens privacy guarantees. This mechanism aligns with mini-batch sampling, further reinforcing the privacy protection capability of DP-SGD. %In these existing works, the dataset is an unstructured collection of independent data points, and the sampling mechanism is a single-step sampling (e.g., sampling without replacement) that selects a subset of the dataset. This occurs naturally in the context of DP-SGD, as each training iteration accesses only a randomly selected mini-batch from the training dataset.
%\yinan{technical issues of applying DP-SGD on graphs and why prior works does not apply}

% \HT{from a storytelling perspective, shall we introduce the challenge of DP relational learning in a more general way? Based on our last meeting with Günnemann, using a specific setting such as the hierarchical sampling may give people the impression that our contribution is narrow in a limited scope.}
However, deploying DP-SGD in relational learning to protect entity-level privacy presents unique challenges due to the interconnected nature of entities and the relationship-oriented training paradigm (Figure \ref{fig:RL}). These challenges include: a) \textbf{high sensitivity}: entities often participate in multiple relations, thereby appearing in \emph{multiple loss terms}, significantly increasing sensitivity to entity removal and the risk of sensitive information leakage; and b) \textbf{coupled sampling}: mini-batches in relational learning typically involve multiple interdependent sampling steps, such as sampling positive relations first and subsequently generating negative relations conditioned on these positives~\cite{chen2020simple, you2020graph}. This coupled sampling approach diverges from the standard single-step independent sampling assumed by traditional DP-SGD analyses, complicating the direct application of established privacy guarantees.

This work addresses entity-level private relational learning with the following contributions:
%\HT{(1) I felt a bit lost in the first bullet, as decoupled negative sampling or adaptive clipping was never mentioned until now. (2) for the 2nd bullet, it sounds more like designing motivations instead of contributions. maybe we should rephrase it and emphasize how this attributes to the privacy amplification results and clarify what is the challenge of the size-coupled two-stage sampling with short sentences (3) I felt we didn't emphasize or clearly convey our theoretical results. Maybe refer to the contribution section of Node-DP GN / GAP for reference.}
\begin{itemize}[leftmargin=*]
\item We provide a gradient sensitivity analysis for the general contrastive loss used in relational learning, considering scenarios where nodes participate in multiple loss terms via multiple positive or negative relations. To mitigate this high sensitivity, we propose an adaptive gradient clipping method that adjusts gradient clipping thresholds based on node occurrences within mini-batches.
\vspace{-0.2mm}
    %\item To address the high-sensitivity challenge in contrastive-based loss, we propose a negative sampling strategy as well as a gradient clipping method . \pan{do we want to emphasize negative sampling? We just need to propose clipping which is more novel here.} By leveraging negative sampling without replacements and an adaptive gradient clipping based on node occurrence, the mini-batch sensitivity is ensured to be bounded by a constant, regardless of node degree \pan{I do not think the problem of node degree was not well explained so here is also confusing}. \pan{I think empirically we should study the effect of ``clipping constant v.s. degree''.}
 %   \item While coupled sampling causes difficult calculation of privacy bound, fully decoupling it simplifies privacy bound analysis but makes sensitivity control more challenging. To balance model utility and privacy analysis, we enforce independence between negative sampling and the specific content of positive samples, ensuring only that they are equal in size. This enables us to provide a rigorous privacy guarantee by extending privacy amplification results to the size-coupled two-stage sampling scenario.
\item We formalize privacy amplification under coupled sampling. While general coupled sampling poses significant analytical challenges, we identify a tractable subclass where coupling arises solely from sample sizes across multiple stages. We derive corresponding amplification bounds for this setting and show that mini-batch construction in relational learning can be aligned with this structure through carefully designed negative sampling methods.\vspace{-0.2mm}

 % that are independent of the contents of sampled positive relations. % a new and more general setting that extends beyond standard amplification by subsampling. 
 
 %\item To address the privacy analysis difficulty from coupled sampling, the proposed negative sampling is designed to make positive and negative sampling ``decoupled'' \pan{I still think this is too detailed. We should just say that we adopt xxx; the coupling is there because and how we can deal with the coupling}: we enforce independence between negative sampling and the specific content of positive samples \pan{this is too detailed to be emphasized in the intro}, ensuring only that they are equal in size. This independence enables us to extend standard group privacy amplification \pan{there is no technique named group privacy amplification. It is subsampling amplification for group privacy. Also, group privacy has not been properly discussed} technique and provide a new amplification bound for this size-coupled sampling scenario. We perform numerical calculations and comparisons of the new amplification bound. \pan{have you emphasized the challenge of coupled-subsampling amplification? And, here we provide a solution.}
    \item Integrating the above ideas, we propose a DP-SGD variant tailored for relational learning with entity-level privacy guarantees. This variant incorporates precise sensitivity control and rigorous privacy bounds. We demonstrate its effectiveness by fine-tuning pre-trained text encoders on text-attributed network-structured datasets, highlighting promising utility-privacy trade-offs.  \vspace{-0.3mm}%\pan{have you done llm? privacy-utility tradeoff?}
    %\item We provide new privacy amplification results for hierarchical sampling where the second-stage sampling only ``weakly'' depends on the first-stage sampling via its size. This allows us to provide rigorous privacy guarantee for the proposed algorithm.
    %\item We extend prior results on group privacy amplification to the scenario of node-DP relational learning, where a sampled mini-batch consists of combinations of positive and negative edges drawn from different sampling spaces. Our analysis offers a rigorous privacy guarantee for the proposed private training algorithm.
  
\end{itemize}
\begin{figure*}[t!]
    \centering
    \includegraphics[width=0.875\linewidth]{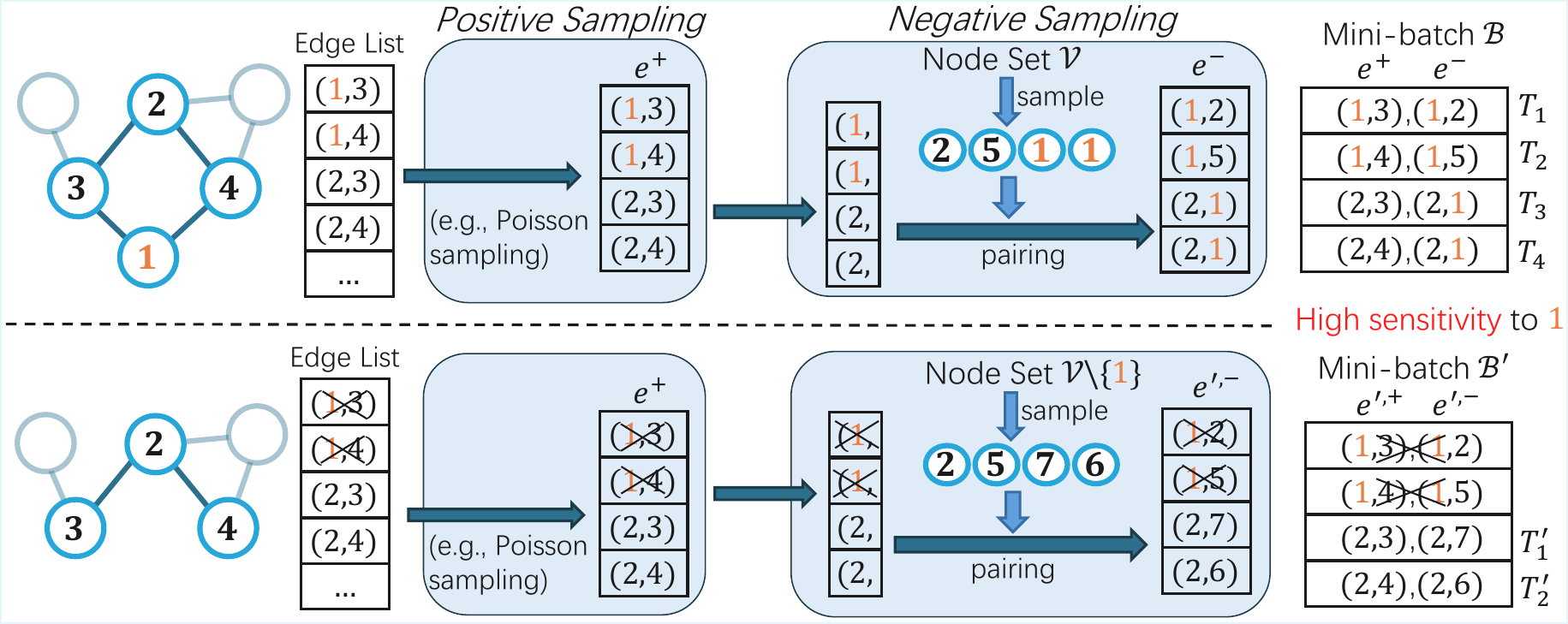}
    \caption{The mini-batch sampling process in relational learning is a two-stage sampling that first samples edges from full edge list (positive sampling) and then construct negative edges based on the sampled positive edges (negative sampling). A node can occur in multiple positive and negative edges, leading to a high sensitivity. Note than the entire batch changes due to the removal of node 1.}
    \label{fig:RL}
    \vspace{-2mm}
\end{figure*}

\vspace{-4pt}
\section{Related Works}
\textbf{Differentially Private Network Analysis and Learning.}  
Extensive research has focused on developing privacy-preserving algorithms for network analysis and learning under differential privacy (DP) guarantees~\citep{li2023private, mueller2022sok}.~\citep{mohamed2022differentially} studied differentially private community detection for stochastic block models.
~\citep{wang2013differential, hawkins2023node} considered the privatization of the graph Laplacian spectrum.  \citep{sajadmanesh2021locally} proposed a locally differentially private algorithm for graph neural networks (GNNs). For node-level learning tasks, \citep{daigavane2021node, sajadmanesh2023gap, sajadmanesh2024progap, chien2024differentially, xiang2024preserving} developed DP methods for training GNNs. \citep{olatunji2021releasing} employed a teacher-student framework to support differentially private GNN model release. Note that existing DP GNN methods are \textbf{not applicable} due to the distinct challenges of relational learning. These methods primarily address node label prediction tasks, with their privacy mechanisms focused on preventing leakage of node features during neighborhood aggregation, where the interaction data is used as the input features to the model. In contrast, the central privacy risk in relational learning stems from the relations themselves being part of the loss function computation, i.e., the loss format, where mini-batches are constructed through complex, coupled sampling of relations, and a single entity can participate in numerous such relations.

%where a single node can be involved in numerous relations, and the coupled sampling procedures typically used to construct training batches in relational learning.\yinan{last sentence?}

%introduce complex interdependencies that pose additional, significant hurdles to establishing rigorous privacy guarantees.%A concurrent work~\citep{yin2024privately} studies edge-level private relational learning \pan{no need to cite for conference submission, more extensive discussion is needed here for arxiv submission}. The node-level privacy we study is generally more challenging than edge-level privacy, as removing a node implies the removal of multiple edges and it also changes the cardinality of the node set \pan{this should be mentioned when talking about the challenge of node dp in relational learning.}.

%These methods specialize in generating private node representations and privatizing the graph exposed to encoders for feature propagation and aggregation, which do not mitigate privacy risks when relational information is used for supervision.

%\yinan{I move the discussion to privacy amplification ahead, and remove the discussion to existing amplification works in preliminary.}\\
\textbf{Privacy Amplification by Subsampling.} Privacy amplification refers to an enhanced privacy guarantee that comes from subsampling the raw dataset~\citep{kasiviswanathan2011can,li2012sampling, beimel2013characterizing}. Most existing analyses focus on neighboring datasets that differ by a single data point~\citep{balle2018privacy, mironov2019r, wang2019subsampled, zhu2019poission, steinke2022composition}. Recently, it has been extended to group privacy, allowing neighboring datasets to differ by multiple data points~\citep{ganesh2024tight, schuchardt2024unified, jiang2024calibrating}. These works commonly assume the dataset is an unstructured collection of independent data points, and the sampling mechanism simply outputs a subset of it. Our work instead extends the sampling mechanisms to coupled sampling, an interdependent sampling process from a composite dataset.

\vspace{-4pt}
\section{Preliminary}

%For a mini-batch $\mathcal B$ of tuples, the loss sum for $\mathcal B$ is computed as
%\begin{equation} \label{eq:loss}
    %\mathcal{L}_{\Theta}(\mathcal B) =\sum_{E_i \in \mathcal B} \ell (\Theta; E_i) = \sum_{E_i \in \mathcal B} \ell (\Theta; (e_i^+, \{e_{i_1}^-, \dots, e_{i_k}^-\})).
%\end{equation}

%For convenience, let $\mathbf{z}_{e}=\Gamma(\mathbf{h}_u, \mathbf{h}_w)$ denote the combined representations of entities in each relationship. 
\paragraph{Entity/Node-level Differential Privacy.} We use graph notations to denote the relationships between entities. Let $\mathcal{G}=(\mathcal V, \mathcal E, X)$ be a undirected graph defined on node (entity) set $\mathcal{V}$ and edge (relation) set $\mathcal{E}$. Node attributes $X$ assign each node $v\in\mathcal{V}$ an attribute $x_v$. The notion of differential privacy (DP) or Renyi DP (RDP) is defined based on neighboring unstructured dataset differed by one data point~\cite{dwork2006differential, mironov2017renyi}. To extend it to graphs, neighboring graphs are defined correspondingly.

\begin{definition}
    Two graphs $\mathcal{G},\mathcal{G}^{\prime}$ are called \textbf{(node-level) neighboring}, denoted by $\mathcal{G}\sim\mathcal{G}^{\prime}$, if they are differed in a single node and all its associated edges (by removing or inserting a node). 
    \label{def:neighbor}
\end{definition}
\begin{definition}
    An algorithm $M$ is called node-level $(\varepsilon,\delta)$-DP, if for all neighboring graphs $\mathcal{G}\sim\mathcal{G}^{\prime}$, and any output subset $S \subset \text{Range}(M)$, it satisfies $\mathbb{P}( M(\mathcal{G}) \in S) \leq e^\varepsilon \mathbb{P}(M(\mathcal{G}^{\prime}) \in S)+\delta$.
\end{definition}
\begin{definition}
    An algorithm $M$ is called node-level $(\alpha,\varepsilon)$-RDP, if for all neighboring graphs $\mathcal{G}\sim\mathcal{G}^{\prime}$, it satisfies $\frac{1}{\alpha}\log \mathbb{E}_{z\sim Q}(P(z)/Q(z))^\alpha\le \varepsilon$, where $P,Q$ are the distributions of $M(\mathcal{G}),M(\mathcal{G}^{\prime})$.
\end{definition}

\textbf{DP-SGD for Private Learning.} DP-SGD is a variant of the standard SGD algorithm, designed to ensure that the training process of a machine learning model adheres to differential privacy~\citep{song2013stochastic, abadi2016deep, bassily2014private}.
%The key idea is to regulate each data sample's influence on parameter updates by constraining the sensitivity of mini-batch gradients and applying the Gaussian mechanism~\citep{dwork2014algorithmic}. 
It assumes that the mini-batch gradient is decomposable with respect to each training sample: for a mini-batch $\mathcal{B} = \{x_1, \ldots, x_b\}$, the gradient satisfies $\mathbf{g}(\mathcal{B}) = \sum_{x_i \in \mathcal{B}} \mathbf{g}(x_i)$. This assumption enables bounding the sensitivity $\max_{\mathcal{B} \sim \mathcal{B}'} \|\mathbf{g}(\mathcal{B}) - \mathbf{g}(\mathcal{B}')\|_2 \leq C$ via per-sample gradient clipping, where each $\mathbf{g}(x_i)$ is replaced by $\mathbf{g}(x_i) / \max(\|\mathbf{g}(x_i)\|, C)$. We refer to this per-sample, independently applied clipping as standard clipping throughout the paper.
%\HT{avoid to use CLIP here, it commonly refers to the famous \textbf{Contrastive Language-Image Pre-Training}.}
%\begin{equation}
    %$\tilde{\mathbf{g}}(\mathcal{B})=\frac{1}{|\mathcal{B}|}\left(\sum_{x_i\in\mathcal{B}}\mathrm{Clip}(\mathbf{g}(x_i))+\mathcal{N}(0, \sigma^2C^2\mathbf{I})\right)$.
%\end{equation}

%\yinan{I need to emphasize two key components: (1) DP notion: single or group?; (2) sampling method. We focus on generalizing the second one.}\\
\textbf{Privacy Amplification by Subsampling.} Let $D$ be a generic dataset, and $S$ be a sampling mechanism that produces a random mini-batch $\mathcal{B}=S(D)$. Traditionally, the dataset is a unstructured collection of samples $D=\{x_1,...,x_N\}$, and $S(D)$ a random subset $\mathcal{B}=\{x_{s_1},...,x_{s_b}\}\subseteq D$, e.g., $S$ is sampling without replacement or Poisson sampling. In relational learning, however, $D$ is a graph $\mathcal{G}$ and $S$ involves a complex, multi-stage sampling process over nodes and edges, which we will elaborate in Section~\ref{sec:privacy_amplification}. Let $M \circ S$ denote the mechanism that first applies sampling to the dataset and then runs a randomized algorithm $M$. Given a $(\varepsilon, \delta)$-DP algorithm $M$, the goal of privacy amplification is to estimate the privacy parameters $(\varepsilon', \delta')$ for the composed mechanism $M \circ S$~\citep{steinke2022composition}. %A typical privacy amplification analysis involves two key components: (1) the definition of neighboring datasets under DP, and (2) the specification of the sampling mechanism $S$. Most existing analyses focus on neighboring datasets that differ by a single data point~\citep{balle2018privacy, mironov2019r, wang2019subsampled, zhu2019poission}. More recently, this notion has been extended to group privacy, allowing neighboring datasets to differ by multiple data points~\citep{ganesh2024tight, schuchardt2024unified, jiang2024calibrating}. These works typically assume that $D$ is an unstructured collection of independent data points, and that $S$ is a sampling mechanism that outputs a subset of $D$, thereby ignoring structured scenarios such as those in relational learning.

\vspace{-4pt}
\section{Problem Formulation and Main Results}
%\subsection{Relational Learning}
%\label{sec:PF}

%\yinan{feature-based recommendation system; inductive recommendation; zero-shot generalization}

Relational learning aims to develop node attribute encoders for predicting and inferring relationships between nodes~\citep{yasunaga2022linkbert,duan2023simteg,xie2023graph}. A key advantage is that once trained, these refined encoders can be deployed for new, unseen entities to predict potential relationships in a zero-shot manner. This capability is particularly valuable in applications such as addressing zero-shot recommendation systems~\citep{leroy2010cold,zhuo2022tiger,wang2023pre,wu2021towards}, and anomaly detection, where interactions involving the new entities are inherently sparse~\citep{ranshous2015anomaly,reiss2024zero}. %Node features are trained to predict the existence of edges by some forms of loss function.~\citep{hadsell2006dimensionality,schroff2015facenet,song2015deep,sohn2016improved,ying2018graph,oord2018representation}. 
Formally, suppose a node attribute encoder $f_{\Theta}$ maps entity attributes $x_u$ to a feature vector $h_u = f_{\Theta}(x_u)$. The goal of relational learning is to leverage the edge set $\mathcal{E}$ to learn the parameters $\Theta$. We use a scoring function $\mathrm{score}_{(u,v)}=\mathrm{score}(h_u, h_v)$  that reflects the likelihood of an edge existing between entities $u$ and $v$. To provide supervision signals, we use both positive (observed) edges $e^+ \in \mathcal{E}$ and negative (missing) edges $e^- \notin \mathcal{E}$, typically grouped into edge tuples $T_i = (e^+_i, e^-_{i,1}, \ldots, e^-_{i,k_\mathrm{neg}})$ consisting of one positive edge versus $k_{\mathrm{neg}}$ negative edges~\citep{nickel2015review}. Positive edges are usually sampled from $\mathcal{E}$, while various strategies exist for sampling negative edges~\citep{chen2020simple, you2020graph}. An illustration of the mini-batch sampling process is provided in Figure~\ref{fig:RL}. Once the edge tuples are constructed, training proceeds by minimizing a loss function $\mathcal{L}(\Theta, T_i)$ that encourages higher scores for positive edges and lower scores for negative ones. Common loss functions include the InfoNCE loss, $\mathcal{L}(\Theta, T_i) = -\log\left(\exp(\mathrm{score}_{e_i^+}) / \sum_{e \in T_i} \exp(\mathrm{score}_e)\right)$~\citep{oord2018representation}, and the Hinge loss, $\mathcal{L}(\Theta, T_i) = \sum_{j=1}^k \max(0, \gamma - \mathrm{score}_{e^+_i} + \mathrm{score}_{e^-_{i,j}})$, with margin parameter $\gamma$~\citep{bordes2013translating, wang2014knowledge, yang2014embedding, lin2015learning}. For conceptual simplicity, our discussion focuses on binary relations where edges are either present or absent, though the approach can be extended to multi-relational settings.

 %in which edges are represented as triples $(u, r, v)$, with $r$ denoting the relation type.

Consider a mini-batch of edge tuples $\mathcal{B}=\{T_1, ..., T_b\}$, whose gradient $\mathbf{g}(\mathcal{B})$ has the general form:
{\setlength{\abovedisplayskip}{7pt}
 \setlength{\belowdisplayskip}{7pt}
\begin{equation}
    \mathbf{g}(\mathcal{B})=\sum_{i}\mathbf{g}(T_i)=\sum_i\mathbf{g}(e_i^+, e_{i,1}^-, e_{i,2}^-,...,e_{i,k_{\mathrm{neg}}}^-).
    \label{eq:undecomposable_gradient}
\end{equation}}
\hspace{-5pt} Eq.~\eqref{eq:undecomposable_gradient} fundamentally departs from the standard DP-SGD setting, as the sensitive data \emph{unit}—a node, in this case—may appear in multiple edge tuples $T_i$, thereby influencing several gradient terms $\mathbf{g}(T_i)$. This setup is also technically distinct from group differential privacy~\citep{ganesh2024tight}, since a node's presence in positive versus negative edges can yield different contributions to the overall sensitivity. A detailed discussion on this point and sensitivity analysis, along with an adaptive gradient clipping method for sensitivity control, will be presented in Section~\ref{sec:sen}.

Beyond the sensitivity issue, another key challenge is that $\mathcal{B}$ cannot be interpreted as a direct sample (i.e., a subset) from the dataset $D$. Instead, $\mathcal{B}$ consists of edge tuples composed of both positive and negative edges, each sampled from distinct mechanisms (e.g., from the set of observed edges and from randomly paired nodes, respectively). Moreover, depending on the specific negative sampling strategy employed, these two sampling processes are typically mutually dependent. In Section~\ref{sec:privacy_amplification}, we formalize this sampling procedure as \emph{coupled sampling}, a novel and generally challenging setting for privacy analysis. One of our main contributions is to provide a privacy amplification bound for a subclass of coupled sampling where the coupling arises solely from sample size constraints across multiple stages. Notably, one such coupled sampling strategy can be employed in practice and achieves reasonably good utility in relational learning. % by leveraging certain negative sampling strategies.

\vspace{-12pt}
\subsection{Sensitivity Analysis}
\vspace{-3pt}
\label{sec:sen}
%\yinan{for the current version, I am going to just present our algorithm and then explain the intuition very briefly.}
%\yinan{Definition of neighboring batches, and comment: it is related to the dominating pairs in privacy amplification analysis}\\
Let us fix a node $u^*$ to be removed and analyze the local sensitivity $\norm{\mathbf{g}(\mathcal{B})-\mathbf{g}(\mathcal{B})}$ where $\mathcal{B}^{\prime}$ is a neighboring mini-batch of $\mathcal{B}$ by removing a node $u^*$ from $\mathcal{B}$. Formally, we define neighboring mini-batches $\mathcal{B}\sim\mathcal{B}^{\prime}$ as follows: let $\mathcal{B} = \{T_1, \ldots, T_b\}$ where each edge tuple is of the form $T_i = (e_i^+, e_{i,1}^-, \ldots, e_{i,k_\mathrm{neg}}^-)$. If node $u^*$ appears in the positive edge $e_i^+$, then the whole tuple $T_i$ is removed in $\mathcal{B}^{\prime}$; If, instead, node $u^*$ appears only in a negative edge $e_{i,j}^-$ and not in $e_i^+$, then $\mathcal{B}^{\prime}$ contains the correponding tuple $T_i$ but replaces all occurrences of 
$u^*$ with arbitrary nodes sampled from the graph, as shown in Fig.~\ref{fig:RL}. We adopt this definition of neighboring mini-batches because it appears in our later privacy amplification analysis, corresponding to the dominating pairs that capture the statistical divergence of the subsampled mechanism when applied to neighboring graphs.

%\pan{the above statement depends on some given negative sampling approaches. Need some revision. Possion to have the positive edges; each positive is paired with a fixed number of negtive} This process defines a meaningful notion of adjacent mini-batches. We say that $\mathcal{B}$ and $\mathcal{B}'$ are \emph{neighboring mini-batches}, denoted $\mathcal{B} \sim \mathcal{B}'$, if one can be obtained from the other by removing a single node $u$.

For a mini-batch $\mathcal{B}$, we define: $\mathcal{B}_+(u)=\{T_i\in\mathcal{B}: u\in e_i^+\}$ as edge tuples whose positive edges contain node $u$, $\mathcal{B}_-(u)=\{T_i\in\mathcal{B}: u\notin e_i^+, \exists j\in[k_{\mathrm{neg}}], u\in e_{i,j}^-\}$ as edge tuples whose positive edges do not involve node $u$ but negative edges do, and $\mathcal{B}_0(u)=\mathcal{B}\backslash (\mathcal{B}_+(u)\cup\mathcal{B}_-)$. Given the removal node $u^*$, we can partition $\mathcal{B}$ into three parts: $\mathcal{B}=\mathcal{B}_0(u^*)\cup\mathcal{B}_+(u^*)\cup\mathcal{B}_-(u^*)$. Similarly, mini-batch $\mathcal{B}^{\prime}$ can be partitioned into two parts: $\mathcal{B}^{\prime}=\mathcal{B}_0(u^*)\cup\mathcal{B}_-^{\prime}(u^*)$, where $\mathcal{B}_-^{\prime}(u^*)$ shares the same positive edges as $\mathcal{B}_-(u^*)$, but every node $u^*$ in $\mathcal{B}_-(u^*)$ is replaced by some other nodes in $\mathcal{B}_-^{\prime}(u^*)$. By applying triangle inequality, the local sensitivity w.r.t. removal of $u^*$ becomes
{\setlength{\abovedisplayskip}{7pt}
 \setlength{\belowdisplayskip}{7pt}
\begin{align}
    \norm{\mathbf{g}(\mathcal{B})-\mathbf{g}(\mathcal{B}^{\prime})} \le \sum_{T_i\in\mathcal{B}_+(u^*)}\norm{\mathbf{g}(T_i)}+\sum_{T_i\in\mathcal{B}_-(u^*)}\norm{\mathbf{g}(T_i)}+\sum_{T_i^{\prime}\in \mathcal{B}_-^{\prime}(u^*)}\norm{\mathbf{g}(T_i^{\prime})}. \label{eq:sensitivity}
\end{align}}
%Note that if we ignore the terms from $\mathcal{B}^{\prime}_-(u^*)$, it becomes the case of group privacy where multiple data points are allowed to be removed/replaced from a unstructured data list. However, 
\hspace{-4pt}Equation \eqref{eq:sensitivity} is  technically more difficult to analyze, as positive part and negative part jointly contributes to the sensitivity with different forms of impact.

In standard DP-SGD, a common practice for controlling mini-batch sensitivity is to apply per-sample gradient clipping with a fixed constant i.e., force $\norm{\mathbf{g}(T_i)}\le C$ for any $T_i$. Eq.\eqref{eq:sensitivity} implies this induces a local sensitivity $(|\mathcal{B}_+(u^*)|+2|\mathcal{B}_-(u^*)|)\cdot C$ w.r.t. the removal of $u^*$. Although $|\mathcal{B}_+(u^*)|$ could be bounded by maximal node degree, there is no guarantee for $|\mathcal{B}_-(u^*)|$. At the worst case, node $u^*$ could appear in every $T_i$ via negative edges (while not appeared in any positive edges), resulting in a global sensitivity of $2\cdot |\mathcal{B}|\cdot C$, which is too large to afford.

\textbf{Adaptive Clipping.} A key observation is that there is no fundamental obstacle to defining a dynamic clipping threshold, i.e., forcing $\norm{\mathbf{g}(T_i)}\le C(T_i,\mathcal{B})$ where the threshold $C(T_i,\mathcal{B})$ can depend on the edge tuple $T_i$ to be clipped and the whole mini-batch $\mathcal{B}$. Intuitively, when a mini-batch $\mathcal{B}$ is sampled, some nodes may exhibit high sensitivity (appearing frequently), while others may have low sensitivity (appearing infrequently). By allowing the clipping threshold to vary accordingly, we can tailor the gradient clipping to the actual sensitivity profile of the batch, rather than always accounting for the worst-case scenario as in constant clipping. Proposition~\ref{thm:clipping} demonstrates that a simple adaptive clipping strategy (implemented by Algorithm~\ref{alg:clipping}) can effectively reduce sensitivity.

%\yinan{emphasize local sensitivity bound.}\\
\begin{proposition}[Local sensitivity of adaptive clipping]
\label{thm:clipping}
   %Given a node $u$, let $\mathcal{B}=\{T_1,...,T_b\}$ be a mini-batch of edge tuples and $\mathcal{B}^{\prime}=\{T_1^{\prime},...,T_b^{\prime}\}$ be the neighboring mini-batch where node $u$ is removed. 
   For any $\mathcal{B}= \{T_1, \ldots, T_b\}$, 
   define clipped gradient $\bar{\mathbf{g}}(T_i)= \mathbf{g}(T_i)/\max\{1, \norm{\mathbf{g}(T_i)/C(T_i, \mathcal{B})}\}$ with
       $C(T_i, \mathcal{B})\coloneqq C/(\sup_{u\in T_i}|\mathcal{B}_+(u)|+|\mathcal{B}_-(u)|).
       $
   %where $|\mathcal{B}_+(v)|,|\mathcal{B}_-(v)|$ is the size of $\mathcal{B}_+(v),\mathcal{B}_-(v)$ w.r.t. removal of node $v$ respectively. 
   Then for any pair of neighboring mini-batches $\mathcal{B}^{\prime}\sim \mathcal{B}$ that differs with respective to a node $u^*$, it satisfies 
       $\norm{\bar{\mathbf{g}}(\mathcal{B})-\bar{\mathbf{g}}(\mathcal{B}^{\prime})} \le (1+|\mathcal{B}_-(u^*)|)\cdot C.
   $
   %\pan{the clipping should not depend on a particular node in Eq.(4). Worst case should be considered.}
   %\pan{explicitly mention this is local.}
\end{proposition}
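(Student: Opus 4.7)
The starting point is Eq.~\eqref{eq:sensitivity}: its derivation is just a triangle inequality on the decomposition $\mathcal{B}=\mathcal{B}_0(u^*)\cup\mathcal{B}_+(u^*)\cup\mathcal{B}_-(u^*)$ vs.\ $\mathcal{B}^{\prime}=\mathcal{B}_0(u^*)\cup\mathcal{B}_-^{\prime}(u^*)$, with the $\mathcal{B}_0(u^*)$ contributions canceling. The same decomposition carries over to the clipped gradients $\bar{\mathbf{g}}(T_i)$, and the adaptive clip guarantees $\norm{\bar{\mathbf{g}}(T_i)}\le C(T_i,\mathcal{B})$ by construction, so bounding $\norm{\bar{\mathbf{g}}(\mathcal{B})-\bar{\mathbf{g}}(\mathcal{B}^{\prime})}$ reduces to bounding
\[
\sum_{T_i\in\mathcal{B}_+(u^*)} C(T_i,\mathcal{B}) \;+\; \sum_{T_i\in\mathcal{B}_-(u^*)} C(T_i,\mathcal{B}) \;+\; \sum_{T_i^{\prime}\in\mathcal{B}_-^{\prime}(u^*)} C(T_i^{\prime},\mathcal{B}^{\prime}).
\]

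The crucial step, and what makes the adaptive clipping strictly better than the constant one, is to handle the first two sums \emph{jointly}. For every $T_i\in\mathcal{B}_+(u^*)\cup\mathcal{B}_-(u^*)$ the node $u^*$ itself belongs to $T_i$ by the very definition of these sets, so plugging $u=u^*$ into the supremum defining $C(T_i,\mathcal{B})$ yields the uniform bound $C(T_i,\mathcal{B})\le C/(|\mathcal{B}_+(u^*)|+|\mathcal{B}_-(u^*)|)$. The union $\mathcal{B}_+(u^*)\cup\mathcal{B}_-(u^*)$ has exactly $|\mathcal{B}_+(u^*)|+|\mathcal{B}_-(u^*)|$ tuples, so summing cancels the denominator and the two sums contribute at most $C$ in total, regardless of how the occurrences of $u^*$ split between positive and negative edges. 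Constant clipping, by contrast, would produce $(|\mathcal{B}_+(u^*)|+|\mathcal{B}_-(u^*)|)\,C$ from the same pair of sums, so this joint accounting is exactly where the improvement comes from.

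For the third sum the crudest lower bound $\sup_{u\in T_i^{\prime}}(|\mathcal{B}_+^{\prime}(u)|+|\mathcal{B}_-^{\prime}(u)|)\ge 1$ already suffices (any node of $T_i^{\prime}$ lies in at least one tuple of $\mathcal{B}^{\prime}$, namely $T_i^{\prime}$ itself), so $C(T_i^{\prime},\mathcal{B}^{\prime})\le C$; since $|\mathcal{B}_-^{\prime}(u^*)|=|\mathcal{B}_-(u^*)|$ by construction, the third sum is at most $|\mathcal{B}_-(u^*)|\,C$. Adding this to the $C$ from the first two sums gives the stated $(1+|\mathcal{B}_-(u^*)|)\,C$. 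The only conceptual hurdle is spotting the joint bound above; once one notices that the number of tuples in $\mathcal{B}_+(u^*)\cup\mathcal{B}_-(u^*)$ matches the denominator of the shared upper bound on $C(T_i,\mathcal{B})$, everything else is routine, including the degenerate case $|\mathcal{B}_+(u^*)|+|\mathcal{B}_-(u^*)|=0$ in which $\mathcal{B}=\mathcal{B}^{\prime}$ and both sides vanish.
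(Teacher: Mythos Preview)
Your proposal is correct and follows essentially the same approach as the paper: apply the triangle inequality Eq.~\eqref{eq:sensitivity} to the clipped gradients, use $u^*\in T_i$ to bound each term in $\mathcal{B}_+(u^*)\cup\mathcal{B}_-(u^*)$ by $C/(|\mathcal{B}_+(u^*)|+|\mathcal{B}_-(u^*)|)$ so that these sums collapse to $C$, and bound each term in $\mathcal{B}_-^{\prime}(u^*)$ by the pessimistic $C$. Your write-up adds a couple of minor justifications the paper leaves implicit (why the denominator in $C(T_i',\mathcal{B}')$ is at least $1$, and the degenerate case), but the argument is the same.
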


\textbf{Constant global sensitivity by restricting $|\mathcal{B}_-(u)|$.} Proposition \ref{thm:clipping} implies that if we can restrict $|\mathcal{B}_-(u^*)|$ by some constant for any $u^*$, then the local sensitivity can also be bounded by a constant for any node $u^*$ for removal, thus bounding the global sensitivity $\sup_{\mathcal{B}\sim\mathcal{B}^{\prime}}\norm{\mathbf{g}(\mathcal{B})-\mathbf{g}(\mathcal{B}^{\prime})}$ by the same constant. This can be achieved by choosing any negative edge sampling algorithms that every node  appears at most once in the negative edges of the minibatch, yielding $|\mathcal{B}_-(u^*)|\le 1$.

\begin{algorithm}[tb]
   \caption{Frequency-based Adaptive Gradient Clipping (FREQ-CLIP)}
   \label{alg:clipping}
   {\small
\begin{algorithmic}
   \State {\bfseries Input:} Batch $\mathcal{B}=\{T_1, ..., T_b\}$ where $T_i=(e_i^+, e^-_{i,1},...,e^-_{i, k_\mathrm{neg}})$, gradient terms $\{\mathbf{g}(T_1),..., \mathbf{g}(T_b)\}$ and gradient norm clipping threshold $C$.
    %\STATE \textbf{Output}: clipped mini-batch gradient $\sum_{i}\bar{\mathbf{g}}(T_i)$.
    \State Find the set of nodes $\mathcal{V}_{T_i}$ occurred in each edge tuple $T_i$ and let $\mathcal{V}_{\mathcal{B}}=\bigcup_{i}\mathcal{V}_{T_i}$.
     \For{$v\in \mathcal{V}_{\mathcal{B}}$}
        %\STATE $\text{freq}(v)\leftarrow |\{E_i\in \mathcal{B}: v\in e_i^+\text{ or }v\in e_{i,j}^-\text{ for some }j\in\{1, ..., k\} \}|$
        \State $\text{freq}(v)\leftarrow \sum_{T_i\in \mathcal{B}}\mathbb{I}(v\in e_i^+\text{ or }v\in \bigcup_{j=1}^{k_{\mathrm{neg}}}e_{i,j}^-)$ \Comment{Compute $|\mathcal{B}_+(v)|+|\mathcal{B}_-(v)|$}
    \EndFor
    \For{$T_i$ in $\mathcal{B}$}
        \State $\text{max-freq}(T_i)\leftarrow \max_{v\in \mathcal{V}_{T_i}}\text{freq}(v)$
        \State$\bar{\mathbf{g}}(T_i)\leftarrow \mathbf{g}(T_i)/(\max\{1, 2\cdot\text{max-freq}(T_i)\norm{\mathbf{g}(T_i)}/C\}$
    \EndFor
    \State \textbf{Return} $\mathbf{\bar{g}}=\sum_{i}\bar{\mathbf{g}}(E_i)$ \Comment{Clipped mini-batch gradient with sensitivity $C$}
\end{algorithmic}
}
\end{algorithm}

\textbf{Comparison with Standard Clipping.} Compared to standard clipping by $C$, adaptive clipping (Algorithm~\ref{alg:clipping}) reduces sensitivity and thus requires less noise for the same privacy budget, but at the cost of ``penalizing'' the gradients of high-degree nodes due to their potentially large $\mathcal{B}_+(u)$. We argue that this penalty may not significantly harm relational learning in practice. As positive edges are sampled uniformly, high-degree nodes are more frequently included and thus more prone to overfitting. Down-weighting their gradients can help mitigate overfitting and improve generalization to low-degree nodes. Our empirical studies in Section~\ref{sec:exp_LLM} shows that adaptive clipping indeed has a better utility-privacy trade-off than standard gradient clipping. %consistent with our observations in the ablation study (Table~\ref{tab:degree}, Appendix~\ref{appx:exp}) \pan{first, should be properly trimmed, too wordy. Second, I still think a hyperparameter of clipping scale depending on the frequency might be more interesting.}

\textbf{Remark.} Adaptive clipping in DP-SGD has been extensively studied for non-relational learning~\cite{shulgin2024convergence}, which typically adjusts the clipping threshold based on statistics of individual gradient norms, such as quantiles~\cite{andrew2021differentially} or distributional moments~\cite{wei2025dcsgd}. In contrast, our approach for relational learning employs a normalization strategy based on node occurrences within a batch to address the unique challenges of this setting.

%Adaptive clipping for DP-SGD has been well-explored in non-relational learning~\cite{shulgin2024convergence}, where adaptive strategies commonly adjust the clipping threshold by monitoring statistics of individual sample gradient norms (like their quantiles~\cite{andrew2021differentially} or distributional moments~\cite{wei2025dcsgd}). Our approach for relational learning utilizes a distinct normalization strategy based on node occurrences within a batch to tackle the unique challenges this setting presents. % : we use node occurrences within a batch for normalization, a technique specifically tailored to address the unique challenges posed by relational data.

%\pan{\ref{tab:degree} is degree capping rather than clipping addressing degree bias.}
\vspace{-4pt}
\subsection{Coupled Sampling and Its Privacy Amplification}
\label{sec:privacy_amplification}

We study the privacy amplification of noisy gradient $\mathbf{g}(\mathcal{B})+\mathcal{N}(0, \sigma^{2}\mathrm{I})$ with a randomly drawn mini-batch $\mathcal{B}$ in relational learning. The mini-batch is constructed by two steps of sampling: sampling of positive edges $e_i^+$ and sampling of negative edges $e_{i,1}^-,...,e^{-}_{i,{k_{\mathrm{neg}}}}$.
 We formalize this problem in a general manner, by introducing an abstract, two-step sampling called coupled sampling. 

\begin{definition}[Coupled Sampling]
    Let $D=(D^{(1)},D^{(2)})$ be a composite dataset consisting of two datasets $D^{(1)},D^{(2)}$ possibly from different domains. A coupled sampling mechanism $S$ consists of two steps of sampling defined by the following procedure: (1) sample a subset $\mathcal{B}^{(1)}$ from $D^{(1)}$: $\mathcal{B}^{(1)}\sim p(\cdot|D^{(1)})$ with $\mathcal{B}^{(1)}\in 2^{D^{(1)}}$;
    (2) sample $\mathcal{B}^{(2)}$ conditioned on $D^{(2)}$ and $\mathcal{B}^{(1)}$: $\mathcal{B}^{(2)}\sim q(\cdot|D^{(2)}, \mathcal{B}^{(1)})$;
    (3) the resulting mini-batch is $\mathcal{B}=S(D)=\{(\mathcal{B}^{(1)}_i, \mathcal{B}^{(2)}_i):i=1,...,|\mathcal{B}^{(1)}|\}$.
\end{definition}
Here coupling refers to the dependency of $\mathcal{B}^{(2)}$ on $\mathcal{B}^{(1)}$. Coupled sampling is a general framework that covers most relational learning sampling settings. For instance, a common practice is to first sample some positive edges and ``perturb'' one end of these positive edges to form negative edges~\citep{bordes2013translating}. In this case, $D^{(1)}=\mathcal{E}, D^{(2)}=\mathcal{E}^{c}$ (non-edges), and: (1) $p(\mathcal{B}^{(1)}|D^{(1)})=p(e^{+}_1,e^{+}_2,...|\mathcal{E})$ randomly sample a few positive edges from $\mathcal{E}$ (e.g., Poisson sampling); (2) to sample negative edges $\mathcal{B}^{(2)}_i=(e^{-}_{i, 1}, ..., e^{-}_{i,{k_\mathrm{neg}}})$, first randomly sample one end of $e_i^+$, denoted by $\tilde{u}_i$, and then sample $k_{\mathrm{neg}}$ negative edges $\mathcal{B}^{(2)}_i$ from $\{(\tilde{u}_i, v): (\tilde{u}_i,v)\in \mathcal{E}^c\}$, i.e., non-edges localized at $\tilde{u}_i$; (3) the resulting mini-batch is $\mathcal{B}=\{(e_i^+, e^-_{i,1}, ..., e^-_{i,{k_{\mathrm{neg}}}}): i=1,...,|\mathcal{B}^{(1)}|\}$.

Coupled sampling extends the existing privacy amplification framework by generalizing the sampling process to a multi-step, dependent procedure, where the flexible and potentially intricate dependencies between steps pose new analytical challenges. In the absence of such dependencies, that is, $\mathcal{B}^{(2)} \sim q(\cdot \mid D^{(2)})$, the contributions of each sampling step can be decomposed, allowing existing privacy amplification results to be adapted: in this decoupled case, the mini-batch $\mathcal{B} = \{(\mathcal{B}^{(1)}_i, \mathcal{B}^{(2)}_i)\}_{i=1,2,\ldots}$ can be viewed as a single-step sampling from the Cartesian product dataset $D^{(1)} \times D^{(2)}$. However, when dependencies are introduced, this separation no longer holds. %, and the sampling steps can no longer be treated independently. %On the other hand, suppose there is no coupling, i.e., $y_1,...,y_b\sim q(\cdot |D_2)$, then standard privacy amplification results could be applicable, by treating the mini-batch $\mathcal{B}=\{(x_i, y_i)\}_{i=1}^b$ as a one-step regular sampling from the Cartesian product dataset $D_1\times D_2$.
\begin{algorithm}[t]
   \caption{Negative Sampling Without Replacement (NEG-SAMPLE-WOR)} 
   \label{alg:neg_sample}
{\small
\begin{algorithmic}
   \State {\bfseries Input:} A mini-batch of positive edges $E^+=\{e_1^{+}, e_2^+, ..., e_b^+\}$, number of negative samples per positive edge $k_{\mathrm{neg}}$, and node set $\mathcal{V}$.  
 %   \STATE \textbf{Output}: A mini-batch of edge tuples $\mathcal{B}=\{T_1, T_2, ..., T_b\}$ with $T_i=(e_i^{+}, e_{i,1}^-, ..., e^-_{i, k_{\mathrm{neg}}})$.
    \State Randomly sample $b\cdot k_{\mathrm{neg}}$ nodes $\{v_{1,1},...,v_{1, k_{\mathrm{neg}}},v_{2,1},...,v_{b, k_{\mathrm{neg}}}\}$ without replacement from $\mathcal{V}$. 
     %\FOR{$i=1$ {\bfseries to} $b$}
     %\FOR{$j=1$ {\bfseries to} $k_{\mathrm{neg}}$}  
     \For{$v_{i,j}$ in $\{v_{1,1},...,v_{1, k_{\mathrm{neg}}},v_{2,1},...,v_{b, k_{\mathrm{neg}}}\}$}
        \State $w\leftarrow \text{a random end in } e_{i}^+$, and $e_{i,j}^-\leftarrow (w, v_{i,j})$ \Comment{pair $w$ with the sampled node $v_{i,j}$}
    \EndFor
        %\STATE $T_i\leftarrow (e_i^+, e_{i,1}^-,...,e_{i,k_{\mathrm{neg}}}^-)$
    %\ENDFOR
    \State \textbf{Return} $\mathcal{B}=\{T_1, T_2, ...,T_b\}=\{(e_i^+, e_{i,1}^-,...,e_{i,k_{\mathrm{neg}}}^-)\}_{i=1,...,b}$
\end{algorithmic}
}
\end{algorithm}

\begin{algorithm}[tb]
   \caption{Relational Learning with Entity-level Differential Privacy}
   \label{alg:node_dp}
   {\small
\begin{algorithmic}
   \State {\bfseries Input:} node attribute encoder $f_\Theta$, graph $\mathcal{G}=(\mathcal{V}, \mathcal{E}, X)$, loss function $\mathcal{L}$, maximal node degree $K$, learning rate $\eta_t$, batch size $b$, number of negative samples per positive sample $k_{\mathrm{neg}}$, gradient norm clipping threshold $C$, noise multiplier $\sigma$.
   \State \textbf{Initialize} Randomly drop edges from $\mathcal{E}$, so that maximal node degree is capped by $K$ and obtain $\bar{\mathcal{E}}$~\citep[Algorithm 1]{daigavane2021node}. Let positive sampling rate $\gamma\leftarrow b/|\bar{\mathcal{E}}|$.\Comment{Node degree capping}
   
   %For all nodes in $\mathcal{V}$, find an edge set $\bar{\mathcal{E}}$ with size of $|\mathcal{V}|/2$, where each node only appears once in $\bar{\mathcal{E}}$.
   % {\color{red}
   % Counter examples:
   % \begin{itemize}
   %      \item For two adjacent complete graphs, $\mathcal{V}_1=\{a,b,c\}$ and $\mathcal{V}_2=\{a,b,c,d\}$. For $\mathcal{G}_1$, we have $\bar{\mathcal{E}_1}=\{(a,c),(b,c)\}$. For $\mathcal{G}_2$, we have $\bar{\mathcal{E}_2}=\{(a,c),(b,c),(d,c))\}$ or $\bar{\mathcal{E}_2}=\{(a,(c,d)),(b,(c,d))\}$. 
   %     \item For two adjacent graphs $\mathcal{G}_1$ and $\mathcal{G}_2$ that contain a circle, we have $\mathcal{V}_1=\{a,b,c,e\}$ and $\mathcal{V}_2=\{a,b,c,d,e\}$; the edge sets (undirected) are $\mathcal{E}_1=\{(a,b),(b,c),(c,e),(e,a)\}$ and $\mathcal{E}_2=\{(a,d),(a,b),(b,c),(c,e),(e,a)\}$. Even when $d$ has no pairing in $\bar{\mathcal{E}_2}$, we still have the case that $\bar{\mathcal{E}_1}=\{(e,a),(b,c)\}$ and $\bar{\mathcal{E}_2}=\{(a,b),(c,e),(d,?)\}$. %Assume both of them are complete graphs.
   % \end{itemize}}
   
   \For{$t=1$ {\bfseries to} $T$}
      \State
      positive edges $E^+$ $\leftarrow$ independently choosing each positive relation from $\bar{\mathcal{E}}$ with probability $\gamma$. 
    \State 
      Generate mini-batch edge tuples $\mathcal{B}_t=\{T_1, T_2, ...\}\leftarrow \text{NEG-SAMPLE-WOR}(E^+, k_{\mathrm{neg}}, \mathcal{V})$
      %and sample $b*k$-many entities $\{v_{i,j}\}_{i=1,\dots,b;j=1,\dots,k}$ without replacement from $\mathcal V$ to form the batch $\mathcal{B}_t$. For each positive relation $e_i^+$, pair $k$ sampled entities with its one end as a tuple of $k+1$ relations $E_i=(e^+_i, \{e_{i_j}^-\}_{j=1}^k)$. Let $\mathcal{N}(E_i)$ denote the set of unique nodes that appeared in the tuple $E_i$.
      %(WOR or Poisson)
      % For each sampled positive relation $e^+_i$ in the batch, randomly sample $k$ entities $(v_{i_1}, \dots, v_{i_k})$ without replacement from $\mathcal{V}$ and pair them with one end of $e^+_i$ as negatives $\{e_{i_j}^-\}_{j=1}^k$, which forms a tuple of $k+1$ relations as $E_i=(e^+_i, \{e_{i_j}^-\}_{j=1}^k)$.
      % \HT{We can add another constraint that sampling nodes from $\mathcal V \backslash \mathcal{N}(\mathcal B_t)$, which excludes all nodes already been sampled in $\mathcal B_t$.}
      \State 
      $\mathbf{g}_t(T_i)\leftarrow \partial \mathcal{L}(\Theta_t, T_i)/\partial \Theta_t$
      % $\mathbf{g}_t(E_i) \leftarrow \nabla \ell(\Theta_t; \mathcal S(E_j))=\sum_{v \in \mathcal{S}(E_j)}\nabla \ell(\Theta_t; X_v)=\sum_{v \in \mathcal{S}(E_j)}\sum_l \nabla \ell(\Theta_t; X_{v,l})$.
      %$\mathbf{g}_t(E_i)=\sum_{e' \in E_i}\sum_{u \in e'}\frac{\partial \ell(\Theta;E_i)}{\partial \mathbf z_{e'}}\cdot\frac{\partial \mathbf z_{e'}}{\partial \mathbf h_u}\cdot\frac{\partial \mathbf h_u}{\partial \Theta}$, where $\mathbf z_{e'}=\Gamma(\mathbf h_u, \mathbf h_v)$ for relation $e'=(u,v)$ and $\mathbf h_u = f_\Theta (X_u)$ for entity $u$.
      % and $\frac{\partial \mathbf h_u}{\partial \Theta}=\sum_{s=1}^S \frac{\partial \mathbf h_u}{\partial \mathbf h_{u,s}}\cdot \frac{\partial \mathbf h_{u,s}}{\partial \Theta}$ if entity $u$ has $S$ tokens as attributes $X_u=\{X_{u,s}\}_{s=1}^S$.
      %\STATE \textbf{Track \& Recalibrate Sensitivity}\\
      %Count the frequency $\text{freq}(\cdot)$ of each node $v \in \mathcal{N}(E_i)$ appeared in all tuples $E_i \in \mathcal B_t$, and set the scaling ratio to $r(E_i)=1/\max_{v \in \mathcal{N}(E_i)} \text{freq}(v)$;\\
      % For Edge-DP, set $r(E_i)=1$.
      \State 
      %$\bar{\mathbf{g}}_t(E_i) \leftarrow \mathbf{g}_t(E_i)/\max\left(1, 2r(E_i) \cdot ||\mathbf{g}_t(E_i)||_2/C\right)$\\
      %$\{\bar{\mathbf{g}}_t(E_i)\}\leftarrow \text{CLIPPING}(\mathcal{B}_t, \{\mathbf{g}_t(E_i)\}_{i=1,...,b}, C)$
      $\bar{\mathbf{g}}_t\leftarrow \text{FREQ-CLIP}(\mathcal{B}_t, \{\mathbf{g}_t(T_1),\mathbf{g}_t(T_2),...\}, C)$ \Comment{Adaptive clipping Algorithm~\ref{alg:clipping}}
     \State 
      %$\tilde{\mathbf{g}}_t \leftarrow \frac{1}{b}\left[\sum_{E_i \in \mathcal B_t}\bar{\mathbf{g}}_t(E_i)+\mathcal{N}(0, \sigma^2C^2\mathbf{I})\right]$
      $\tilde{\mathbf{g}}_t \leftarrow \frac{1}{b}\left[\bar{\mathbf{g}}_t+\mathcal{N}(0, \sigma^2C^2\mathbf{I})\right]$ 
     \State
      $\Theta_{t+1} \leftarrow \Theta_t-\eta_t \tilde{\mathbf{g}}_t$
   \EndFor
    \State \textbf{Output} $\Theta_T$
\end{algorithmic}
}
\end{algorithm}

\textbf{Cardinality-dependent Sampling.}  
While general coupled sampling can be complex, there exists a tractable subclass where the coupling arises solely from cardinality constraints. That is, the sampling of $\mathcal{B}^{(2)}$ depends only on the cardinality of $\mathcal{B}^{(1)}$, and not on its specific contents: $
q(\mathcal{B}^{(2)} \mid D^{(2)}, \mathcal{B}^{(1)}) = q(\mathcal{B}^{(2)} \mid D^{(2)}, |\mathcal{B}^{(1)}|).$

%Specifically, the dependence of $\mathcal{B}^{(2)}$ on $\mathcal{B}^{(1)}$ is limited to the sample size: $
%q(\mathcal{B}^{(2)} \mid D^{(2)}, \mathcal{B}^{(1)}) = q(\mathcal{B}^{(2)} \mid D^{(2)}, |\mathcal{B}^{(1)}|).$ 
%In other words, the sampling of $\mathcal{B}^{(2)}$ depends only on the cardinality of $\mathcal{B}^{(1)}$, and not on its specific contents. 

In the context of relational learning, cardinality-dependent sampling can be realized by choosing negative sampling methods that are independent of the contents of the sampled positive edges. For instance, we adopt Poisson subsampling as positive sampling, and adopt Algorithm \ref{alg:neg_sample} for negative sampling, which first randomly draws some nodes from the node set (independent of specific positive edges), and then constructs contrastive pairs by pairing one end of positive edges with the sampled nodes. Note that this pairing may occasionally produce negative edges that are actually true positives—a known issue in node pairing methods like in-batch sampling~\citep{you2020graph}. However, this occurs with a negligible probability, especially given the sparsity of real-world graphs.

Treating the above pairing process as a post-processing operation, the entire coupled sampling can be viewed as a cardinality-dependent sampling with $D^{(1)}=\mathcal{E}, D^{(2)}=\mathcal{V}$: (1) $\mathcal{B}^{(1)}\sim \mathrm{Poisson}_{\gamma}(\mathcal{E})$ includes each positive edge independently with probability $\gamma$; (2) $\mathcal{B}^{(2)}\sim \mathrm{Sample}_{\mathrm{WOR}}(\mathcal{V};k_{\mathrm{neg}}\cdot|\mathcal{B}^{(1)}|)$ draws nodes without replacement of size $k_{\mathrm{neg}}\cdot\mathcal{B}^{(1)}$ from the node set $\mathcal{V}$. %\pan{have you defined $\mathrm{Sample}_{\mathrm{WOR}}$?} 

The negative sampling Algorithm \ref{alg:neg_sample} has advantages in two folds: (1) it decouples negative and positive samplings to cardinality-dependent only; (2) it also restricts $|\mathcal{B}_-(u)|\le 1$ for any node $u$, which is crucial to control the sensitivity as discussed in Section~\ref{sec:sen}. Though there may exist other designs of negative sampling algorithms that also satisfy both properties, we adopt node sampling without replacement due to its implementation simplicity, as well as its regular sample size that allows to efficiently perform the negative edge pairing (e.g., compared to Poisson sampling).
 % \pan{As you later only analyze poisson for D1 and without replacement for D2, I suggest, here, following the previous sentence, say, we adopt poisson for D1 and without replacement for d2. Otherwise, it reads weird why you only focus on this case.} \pan{perhaps, directly refers to Alg.1}

% \pan{By specifying the above setting, the following paragraph and the theorem can be revised accordingly and the language will be more clear.} 
 Here, we provide the first privacy amplification bound for cardinality-dependent sampling. For the notion of neighboring mini-batches, we adopt removals/insertions of $K$ data points for $\mathcal{B}^{(1)}$ (denote by $\mathcal{B}^{(1)}\sim_{\Delta, K}\mathcal{B}^{(1),\prime}$) and replacement of one data point for $\mathcal{B}^{(2)}$ (denoted by $\mathcal{B}^{(2)}\sim_{r}\mathcal{B}^{(2),\prime}$). These neighboring notions align with the Poisson subsampling for $\mathcal{B}^{(1)}$ and the sampling without replacement for $\mathcal{B}^{(2)}$. The parameter $K$ accounts for the fact that multiple associated edges can be removed due to the removal of a node. Also note that neighboring definition $\mathcal{B}^{(1)}\sim_{\Delta, K}\mathcal{B}^{(1),\prime}$, $\mathcal{B}^{(2)}\sim_{r}\mathcal{B}^{(2),\prime}$ matches the neighboring definition $\mathcal{B}\sim\mathcal{B}^{\prime}$ in Section~\ref{sec:sen}.
 
% For the notion of neighboring datasets in the DP definition, we adopt group privacy for generality, i.e., neighboring datasets $D^{(1)}, D^{(2)}$ allow for differing in multiple data points. It aligns naturally with the context of relational learning, where the removal of a node induces the removal of multiple associated edges. We denote two generic datasets $d,d^{\prime}$ that are differed by removals/insertions of $K$ data point by $d\sim_{\Delta, K}d^{\prime}$, and $d\sim_{r, K^{\prime}}d^{\prime}$ for $K^{\prime}$ replacements. 

\begin{restatable}{thm}{thmposneg} Consider a composite dataset $D=(D^{(1)}, D^{(2)})$ of size $|D^{(1)}|=m,|D^{(2)}|=n$. Consider a cardinality-dependent coupled sampling $S(D)$: (1) $\mathcal{B}^{
(1)}\sim \mathrm{Poisson}_{\gamma}(D^{(1)})$ is Poisson sampling with rate $\gamma$; (2) $\mathcal{B}^{(2)}\sim \mathrm{Sample_{WOR}}(D^{(2)};k_{\mathrm{neg}}\cdot|\mathcal{B}^{(1)}|)$  is sampling without replacement of size $k_{\mathrm{neg}}\cdot|\mathcal{B}^{(1)}|$. Assume function $f$ satisfies $\norm{f(\mathcal{B}^{(1)},\mathcal{B}^{(2)})-f(\mathcal{B}^{(1), \prime},\mathcal{B}^{(2),\prime})}\le C$ for any $\mathcal{B}^{(1)}\sim_{\Delta,K}\mathcal{B}^{(1),\prime}$ and $\mathcal{B}^{(2)}\sim_{r}\mathcal{B}^{(2), \prime}$. Then for any $\alpha\ge 1$, Gaussian mechanism $f(S(D))+\mathcal{N}(0, \sigma^2C^2I)$ is $(\alpha, \varepsilon(\alpha))$-RDP with 
\begin{equation}
\label{eq:rdp_amp}
    \varepsilon(\alpha)=\frac{1}{\alpha-1}\log \mathbb{E}_{\ell\sim\mathrm{Bin}(m, \gamma)}\Psi_{\alpha}\left((1-\Gamma_{\ell})\cdot \mathcal{N}(0, \sigma^2)+\Gamma_{\ell}\cdot \mathcal{N}(1, \sigma^2)\Vert\mathcal{N}(0, \sigma^2)\right),
\end{equation}
where $\mathrm{Bin}(m, \gamma)$ stands for the Binomial distribution, effective sampling rate $\Gamma_{\ell}\coloneqq1-(1-\gamma)^K(1-\frac{\ell\cdot k_{\mathrm{neg}}}{n})$ and 
%\begin{equation}
    %$A(\gamma)\coloneqq\Psi_{\alpha}\left((1-\gamma)\cdot \mathcal{N}(0, \sigma^2)+\gamma\cdot \mathcal{N}(1, \sigma^2)\Vert\mathcal{N}(0, \sigma^2)\right)$,
$\Psi_{\alpha}(P\Vert Q)\coloneqq\mathbb{E}_{x\sim Q}(P(x)/Q(x))^{\alpha}$ for two distributions $P, Q$.
\label{thm:amp_rdp}
%\end{theorem}
\end{restatable}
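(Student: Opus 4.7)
The plan is to bound $D_\alpha(M(D')\|M(D))$ for arbitrary neighboring composite datasets via three steps: a coupling construction, a conditional ``safe/unsafe'' decomposition, and a reduction to a scalar Gaussian-mixture divergence. Without loss of generality, suppose $D$ has $|D^{(1)}|=m$, $|D^{(2)}|=n$, while $D'$ differs by inserting up to $K$ elements into $D^{(1)}$ (the edges incident to a single inserted node) and replacing one element $v^*\leftrightarrow \tilde v$ in $D^{(2)}$. I would construct a joint coupling in which the Poisson coin flips are shared for the $m$ common elements of $D^{(1)}$, producing a common sample $\mathcal{B}^{(1)}_c$ with $|\mathcal{B}^{(1)}_c|\sim\mathrm{Bin}(m,\gamma)$; the Poisson sample $\mathcal{B}^{(1)}_d$ on the $K$ extras is drawn independently; and the WOR samples on $D^{(2)}$ and $D^{(2),\prime}$ are coupled to agree on non-distinguishing elements.

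Next, I would condition on $\ell\coloneqq|\mathcal{B}^{(1)}_c|$, which has the same $\mathrm{Bin}(m,\gamma)$ marginal under both $D$ and $D'$. The ``safe'' event $E=\{|\mathcal{B}^{(1)}_d|=0\}\cap\{v^*\notin\mathcal{B}^{(2)}\}$ has conditional probability $(1-\gamma)^K(1-\ell k_{\mathrm{neg}}/n)=1-\Gamma_\ell$, using the independence of Poisson sampling on the extras and the symmetry of WOR. Under $E$, the coupling forces $\mathcal{B}^{(1)}=\mathcal{B}^{(1),\prime}$ and $\mathcal{B}^{(2)}=\mathcal{B}^{(2),\prime}$, so $f(S(D))=f(S(D'))$; under $E^c$, the sensitivity assumption gives $\|f(S(D))-f(S(D'))\|\le C$. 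This yields mixture decompositions $p_{M(D)\mid\ell}=(1-\Gamma_\ell)q_0+\Gamma_\ell q_1$ and $p_{M(D')\mid\ell}=(1-\Gamma_\ell)q_0+\Gamma_\ell q_1'$, where $q_0$ is the shared ``safe'' Gaussian-mixture component and $q_1,q_1'$ are ``unsafe'' Gaussian-mixture components whose corresponding means differ by at most $C$ in norm.

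The remaining work is to bound $\Psi_\alpha(p_{M(D')\mid\ell}\|p_{M(D)\mid\ell})$ by the scalar quantity $\Psi_\alpha((1-\Gamma_\ell)\mathcal{N}(0,\sigma^2)+\Gamma_\ell\mathcal{N}(1,\sigma^2)\|\mathcal{N}(0,\sigma^2))$, and then apply joint convexity of $\Psi_\alpha$ across the shared marginal $\ell\sim\mathrm{Bin}(m,\gamma)$ to obtain $\Psi_\alpha(M(D')\|M(D))\le \mathbb{E}_\ell[\Psi_\alpha(p_{M(D')\mid\ell}\|p_{M(D)\mid\ell})]$; taking $\frac{1}{\alpha-1}\log$ then produces the stated formula. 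The main obstacle is the scalarization step. Because the unsafe components $q_1,q_1'$ are themselves mixtures of high-dimensional Gaussians whose centers (the various $f$-values) shift in different directions across sampling realizations, reducing to a common unit-length, single-direction shift requires a worst-case ``dominating pair'' argument tailored to this two-stage coupled sampling, extending the classical Poisson-subsampled Gaussian reduction. A secondary subtlety is that when $|\mathcal{B}^{(1)}_d|>0$ the WOR sample sizes on $D^{(2)}$ and $D^{(2),\prime}$ differ, so the sensitivity hypothesis must be read as applying to all coupled neighboring mini-batches produced by this procedure, which is precisely what the adaptive clipping scheme of Section~\ref{sec:sen} is engineered to guarantee.
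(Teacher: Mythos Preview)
Your high-level structure is on the right track: conditioning on $\ell=|\mathcal{B}^{(1)}_c|\sim\mathrm{Bin}(m,\gamma)$ and identifying the ``safe'' probability $1-\Gamma_\ell$ are exactly the ingredients that appear in the final bound. However, the decomposition you write down does not lead to the target scalar quantity, and the obstacle you flag as ``the scalarization step'' is precisely the place where your approach diverges from a workable one.

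The difficulty is this: your symmetric decomposition $p_{M(D)\mid\ell}=(1-\Gamma_\ell)q_0+\Gamma_\ell q_1$ and $p_{M(D')\mid\ell}=(1-\Gamma_\ell)q_0+\Gamma_\ell q_1'$ compares two mixtures that share the component $q_0$, whereas the claimed bound is $\Psi_\alpha\bigl((1-\Gamma_\ell)\mathcal{N}(0,\sigma^2)+\Gamma_\ell\mathcal{N}(1,\sigma^2)\,\|\,\mathcal{N}(0,\sigma^2)\bigr)$, a mixture versus a \emph{single} Gaussian. Your shared component $q_0=p_{M(D)\mid\ell,E}$ is \emph{not} equal to $p_{M(D)\mid\ell}$: conditioning on $v^*\notin\mathcal{B}^{(2)}$ genuinely changes the distribution. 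Moreover, your $q_1$ is itself a mixture that again contains $q_0$ as a sub-component (because $E^c$ includes the event $\{|\mathcal{B}^{(1)}_d|>0,\,v^*\notin\mathcal{B}^{(2)}\}$, on which $M(D)$ has the same law as under $E$). So after your decomposition you are left with a mixture-versus-mixture comparison for which there is no off-the-shelf reduction to the scalar form; the classical Poisson--Gaussian dominating pair argument relies on the second argument being the pure base distribution.

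The paper's proof closes this gap by a different coupling construction, extending the conditional-coupling technique of Schuchardt et al. Rather than coupling a single pair $(S(D),S(D'))$, it builds a coupling $\pi$ on the product space $\mathbb{X}^{K+1}\times\mathbb{X}'\times\mathbb{Y}^{2(K+1)}\times\mathbb{Y}'$: simultaneously $K{+}1$ copies $x_0,\dots,x_K$ of the first-stage sample (where $x_i$ is conditioned to contain exactly $i$ of the extra edges), $2(K{+}1)$ copies $y_i^{(j)}$ of the second-stage sample (indexed by $i$ and by whether the removed node is present), and a single sample $(x',y')$ from the smaller dataset, with the marginal constraints arranged so that $(x',y')$ coincides with the $(i,j)=(0,0)$ copy $(x_0,y_0^{(0)})$. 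After Jensen's inequality over $\pi$, each summand has the form $\Psi_\alpha\bigl(\sum_{i,j}\omega(A_i)\omega(B_j\mid x_i)\mu_{x_i,y_i^{(j)}}\,\big\|\,\mu_{x',y'}\bigr)$, i.e., a finite mixture of single Gaussians against the \emph{single} Gaussian $\mu_{x',y'}=\mu_{x_0,y_0^{(0)}}$, with the $(0,0)$ mixture weight exactly $1-\Gamma_\ell$. The constant-sensitivity hypothesis then makes every non-$(0,0)$ component a unit-shifted Gaussian, and rotational invariance yields the univariate bound directly. Your pairwise safe/unsafe coupling cannot reproduce this structure because, once you fix a full realization of $S(D)$, whether $v^*$ was sampled is deterministic and you lose the $\Gamma_\ell$ mixture weight at the level where you need a single Gaussian in the second slot.
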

 The proof is deferred to Appendix \ref{appx:proof2}. $\Psi_{\alpha}$ here is a univariate integral and can be numerically calculated up to arbitrary precision~\citep{mironov2019r}. Although Theorem~\ref{thm:amp_rdp} adopts $S^{(1)}$ as Poisson sampling and $S^{(2)}$ as sampling without replacement, the analysis and results can be extended to other combinations of sampling strategies. We leave these generalizations to the extended version of this work.

So far we have addressed the two key challenges: a) \emph{sensitivity}: the mini-batch gradient sensitivity can be bounded by a constant (Proposition~\ref{thm:clipping}), by adopting adaptive clipping (Algorithm~\ref{alg:clipping}) and restricting $|\mathcal{B}_-(u)|\le 1$ (achieved by Algorithm~\ref{alg:neg_sample}); b) \emph{coupled sampling}: the mini-batch sampling can be simplified to cardinality-dependent sampling (also achieved by Algorithm~\ref{alg:neg_sample}), and the privacy cost can be computed by Eq.~\eqref{eq:rdp_amp}. Integrating these ideas, we propose a DP-SGD for relational learning that leverage node sampling without replacement for negative sampling and adaptive gradient clipping, as described in Algorithm~\ref{alg:node_dp}. The assumption in Theorem~\ref{thm:amp_rdp} that $f$ (i.e., gradients) has a constant sensitivity $C$ regardless of the number of removals is guaranteed by adaptive clipping and $|\mathcal{B}_-(u)|\le 1$. As a result, Proposition~\ref{thm:clipping} and Theorem~\ref{thm:amp_rdp} directly implies Corollary~\ref{thm:overall}.
\begin{corollary}
\label{thm:overall}
Algorithm \ref{alg:node_dp} achieves $(\alpha,\varepsilon(\alpha))$-RDP with $\varepsilon(\alpha)$ defined in Eq.\eqref{eq:rdp_amp}, where $D^{(1)}=\mathcal{E},D^{(2)}=\mathcal{V}$, and $K$ is the maximal node degree.

%\begin{equation}
  %  \resizebox{0.47\textwidth}{!}{$A(\gamma)\coloneqq\Psi_{\alpha}\left((1-\gamma)\cdot \mathcal{N}(0, \sigma^2)+\gamma\cdot \mathcal{N}(C, \sigma^2)\Vert\mathcal{N}(0, \sigma^2)\right)$},
 %   \label{eq:A_gamma}
%\end{equation}
%where $\Psi_{\alpha}(P\Vert Q)\coloneqq\mathbb{E}_{x\sim Q}(P(x)/Q(x))^{\alpha}$ for two distributions $P, Q$.
%\label{thm:amp_rl_rdp}
\end{corollary}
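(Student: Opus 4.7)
The plan is to check that the hypotheses of Theorem~\ref{thm:amp_rdp} are satisfied by each iteration of Algorithm~\ref{alg:node_dp} with $D^{(1)}=\bar{\mathcal{E}}$ and $D^{(2)}=\mathcal{V}$, and then invoke the theorem directly. Three items need to be verified: (i) the joint distribution of the mini-batch $\mathcal{B}_t$ coincides with the cardinality-dependent coupled sampling assumed by Theorem~\ref{thm:amp_rdp}; (ii) after FREQ-CLIP, the clipped gradient has uniform sensitivity $C$; and (iii) a node-level neighboring pair $\mathcal{G}\sim\mathcal{G}'$ induces exactly the $(\sim_{\Delta,K},\sim_{r})$ neighboring relation on $(\bar{\mathcal{E}},\mathcal{V})$.

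For (i), the positive-edge draw in Algorithm~\ref{alg:node_dp} is Poisson subsampling of $\bar{\mathcal{E}}$ at rate $\gamma$ by construction, and NEG-SAMPLE-WOR (Algorithm~\ref{alg:neg_sample}) draws $k_{\mathrm{neg}}\cdot|\mathcal{B}^{(1)}|$ nodes without replacement from $\mathcal{V}$ using only the cardinality of $\mathcal{B}^{(1)}$; the pairing between the drawn nodes and the positive-edge endpoints is a deterministic post-processing step that does not weaken the RDP guarantee. For (ii), I would combine Proposition~\ref{thm:clipping}, which yields a local sensitivity bound of $(1+|\mathcal{B}_-(u^*)|)\cdot C$ after adaptive clipping, with the observation that sampling nodes without replacement in Algorithm~\ref{alg:neg_sample} places each drawn node in a unique negative edge, so $|\mathcal{B}_-(u)|\le 1$ deterministically for every $u$; the extra factor of $2$ built into FREQ-CLIP (Algorithm~\ref{alg:clipping}) absorbs the resulting $(1+1)=2$, leaving a uniform sensitivity bound of exactly $C$.

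For (iii), the degree-capping preprocessing guarantees that removing a node $u^*$ from $\mathcal{G}$ deletes at most $K$ incident edges from $\bar{\mathcal{E}}$, which matches $\bar{\mathcal{E}}\sim_{\Delta,K}\bar{\mathcal{E}}'$, and simultaneously removes $u^*$ from $\mathcal{V}$, which can be coupled with the without-replacement draw to a single-point replacement of $u^*$ by an independently sampled fresh node, matching $\mathcal{V}\sim_{r}\mathcal{V}'$. The main obstacle I anticipate is making this coupling precise: one must exhibit a joint distribution on the two mini-batch samplers (one on $\mathcal{G}$, one on $\mathcal{G}'$) under which the drawn pairs $(\mathcal{B}^{(1)},\mathcal{B}^{(2)})$ and $(\mathcal{B}^{(1)\prime},\mathcal{B}^{(2)\prime})$ satisfy the two neighboring relations simultaneously with probability one, relying on the content-agnostic nature of both sampling stages and on the fact that any node can enter $\mathcal{B}^{(2)}$ in at most one slot. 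Once these three items are in place, applying Theorem~\ref{thm:amp_rdp} to the Gaussian mechanism $\tilde{\mathbf{g}}_t$ yields the per-iteration RDP bound $\varepsilon(\alpha)$ of Eq.~\eqref{eq:rdp_amp} (and standard RDP composition over the $T$ iterations gives the end-to-end guarantee), which is exactly the statement of the corollary.
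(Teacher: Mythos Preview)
Your proposal is correct and follows exactly the paper's approach, which is simply to observe that Proposition~\ref{thm:clipping} (combined with $|\mathcal{B}_-(u)|\le 1$ from NEG-SAMPLE-WOR, and the factor of $2$ in Algorithm~\ref{alg:clipping} that you correctly spotted) and the cardinality-dependent structure of the sampling verify the hypotheses of Theorem~\ref{thm:amp_rdp}. The one place you are over-elaborating is item~(iii): the joint coupling of the two samplers is already constructed inside the proof of Theorem~\ref{thm:amp_rdp}, so at the corollary level you only need the observation that after degree capping a node removal deletes at most $K$ edges and one vertex, which matches $(\sim_{\Delta,K},\sim_r)$.
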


Note that the bound Eq.\eqref{eq:rdp_amp} depends on the maximal node degree $K$. To mitigate the influence of high-degree nodes, in practice we cap each node’s degree at a desired constant $K$ by randomly sampling its neighbors, same as Algorithm 1 of \citep{daigavane2021node}. The effect of degree capping on utility is explored in Figure~\ref{fig:ablation} in our later experiments.

\vspace{-5pt}
\section{Experimental Results}
\vspace{-5pt}

\begin{figure}[t!]
    \centering
    \includegraphics[width=\textwidth]{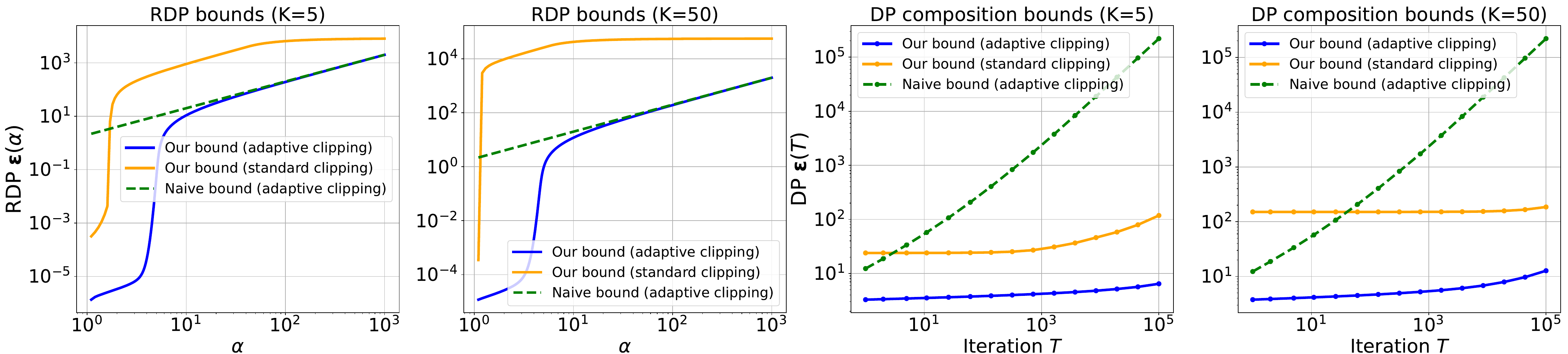}
    \caption{Comparison of per-iteration RDP bound $\varepsilon(\alpha)$ (left two figures) and DP composition bound $\varepsilon_{\mathrm{DP}}(T)$ over  $T$ iterations (right two figures), under different capped node degree $K$. Our bound (adaptive clipping) refers bound Eq.\eqref{eq:rdp_amp}. Our bound (standard clipping) uses similar amplification analysis but with standard clipping. Naive bound is the Gaussian RDP bound $\varepsilon(\alpha)=\alpha/\sigma^2$.}
    \label{fig:dp}
    \vspace{-10pt}
\end{figure}

 In this section, we empirically evaluate the privacy and utility characteristics of our proposed method. First, we numerically compute and compare the privacy bounds (Eq.\eqref{eq:rdp_amp}). Second, we consider an application of finetuning text encoders on relational data to evaluate the privacy-utility trade-offs. In particular, the experiments demonstrate the superiority of our method over standard DP-SGD. %the standard %gradient clipping.
 
 %in its utility-privacy tradeoff. %via its superior model utility over the one trained with traditional gradient clipping.

%In this section, we show numerical results of privacy amplification bound Eq.\eqref{eq:rdp_amp} and study its behaviors with different parameters and under the composition of multiple iterations. The numerical calculation of $A(\gamma)$ in Eq.\eqref{eq:rdp_amp} is performed by the Opacus library~\citep{yousefpour2021opacus}. Finally, we consider an application of fine-tuning pre-trained large language models on private text-attributed graph data. It shows that models fine-tuned with the proposed adaptive clipping method is able to achieve better privacy-utility trade-offs, compared to standard per-sample gradient clipping. %adopted from~\citet{yin2024privately} \pan{Please be self-contained. Need to describe the setting here. We cannot just cite Haoteng's work.} 
\vspace{-5pt}
\subsection{Numerical Results for Eq.~\eqref{eq:rdp_amp}}
\label{sec:numerical}
\textbf{Parameters.} We set the parameters based on the real-world graph statistics (Appendix~\ref{appx:exp}, Table \ref{tab:dataset}) we will use in the latter experiments. By default: number of nodes $n=10^{6}$, number of edges $m=5\times 10^{6}$, capped node degree $K=5$, sampling rate $\gamma=10^{-5}$, Gaussian noise levels $\sigma=0.5$ and number of negative edges per positive edge $k_{\mathrm{neg}}=4$. The clipping threshold $C$ is set to $1$. We also evaluate different combinations of parameters, which can be found at Appendix \ref{appdx:numerical}. 

\textbf{Per-iteration RDP Bounds.} We evaluate the RDP bound $\varepsilon(\alpha)$ (Eq.\eqref{eq:rdp_amp}), shown in the left two subfigures of Figure~\ref{fig:dp}, which characterizes the per-iteration privacy loss in DP-SGD. Our bound is compared against two baselines: (a) the naive Gaussian mechanism bound $\varepsilon_{\mathrm{Gaussian}}(\alpha)=\alpha/2\sigma^2$~\citep{mironov2017renyi} without any amplification analyses; (b) the amplification bound with standard gradient clipping in place of the proposed adaptive clipping. The latter is also a coupled-sampling amplification bound we derived using an analysis similar to that of Theorem~\ref{thm:amp_rdp}, and is provided in Appendix~\ref{appdx:linear_sen}. %\pan{check this paragraph, the original writing is unclear}

\textbf{DP Bounds over Composition.} We also compare the accumulated privacy loss of applying multiple iterations of DP-SGD in terms of $(\varepsilon,\delta)$-DP, tracked by composition theorems of RDP~\citep{mironov2017renyi} and translating RDP to DP~\cite{balle2020hypothesis}. We choose privacy budget $\delta=1/m=0.2\times 10^{-6}$ and compare $\varepsilon_\mathrm{DP}(T)$ as a function of iteration $T$.
The results of DP bound over composition are shown in the right two subfigures in Figure \ref{fig:dp}. 

We observe that the composite DP bound of our method is remarkably smaller than the naive Gaussian one as well as the amplification one with standard clipping. %For instance, in terms of the composition DP bound, the $\varepsilon_{\mathrm{DP}}$ of adaptive clipping overall is ten times smaller than that of standard clipping. 
This strongly suggests a better model utility our proposed method can bring, which we explore in the next section.

\subsection{Private Relational Learning for Pretrained Text Encoders}
\label{sec:exp_LLM}

We study the application of private relational learning for finetuning a text encoder on where relational information is injected into the encoder. We use relation prediction on text-attributed graphs as the evaluation task. The fine-tuned text encoder can be used to predict relations for unseen node sets, analogous to zero-shot link recommendation.
Specifically, the training graph is treated as private, and the encoder weights are privately fine-tuned using Algorithm~\ref{alg:node_dp}. After fine-tuning, model utility is evaluated by the relation prediction performance on a separate test graph. Note that although we adopt fine-tuning text encoders as a specific application, our algorithm can be applied to more general relational learning scenarios (e.g., to train the weights of MLPs when entities hold dense features).
 
 %\citet{yin2024privately} studied how relational learning is used to enhance pre-trained large language models (LLMs) by privately fine-tuning them on relational data in sensitive domains \pan{no need to write this in two sentences: directly say we adopt xxx, where xxx}. We adopt their settings of private relational learning: LLMs serve as encoders by mapping textual attributes of nodes into embeddings. The internal representation of nodes is refined through the supervision of the relational information from the training graph by the loss of Eq. (\ref{eq:undecomposable_gradient}). After the fine-tuning, we measure the utility by evaluating the models on new testing domains for relation prediction. Unlike previous work, our proposed Algorithm \ref{alg:node_dp} provides node-level differential privacy of the training graph, including node features and their associated edges.% See Section 4.1 in~\citet{yin2024privately} for a more detailed background.
%\pan{this description is better to be put into Sec. 3.3. Here, duplicate once more.}

\begin{table*}[ht!]
    \centering
    \caption{Results on relation prediction with \textbf{entity-level} differentially private relational learning.  MAG(CHN$\rightarrow$USA) means the model is fine-tuned on MAG-CHN and then tested on MAG-USA.}
    \resizebox{1\textwidth}{!}{
    \begin{tabular}{llrrrrrrrrr}
        \toprule
        \multirow{2}{*}{Privacy} & \multirow{2}{*}{Pretrained Models} & \multicolumn{2}{c}{MAG(CHN$\rightarrow$USA)} & \multicolumn{2}{c}{MAG(USA$\rightarrow$CHN)} & \multicolumn{2}{c}{AMAZ(Sports$\rightarrow$Cloth)} & \multicolumn{2}{c}{AMAZ(Cloth$\rightarrow$Sports)}\\
        & & PREC@1 & MRR & PREC@1 & MRR & PREC@1 & MRR & PREC@1 & MRR\\\midrule
        \multirow{5}{*}{\parbox{4cm}{base model\\(w/o fine-tuning)}} & BERT.base & 4.41 & 9.94 & 6.48 & 12.69 & 14.90 & 22.41 & 8.36 & 14.04\\
         & BERT.large & 2.00 & 5.48 & 2.71 & 6.39 & 5.72 & 10.11 & 3.78 & 7.37\\
        % BERT.base & 110M & MLM.in & & & & & 34.56 & 46.25 & 20.90 & 31.01\\
        % BERT.large & 340M & MLM.in & & & & & 34.59 & 46.77 & 22.88 & 34.36\\
        % BERT.base & 110M & MLM+few-shot & & & & & 42.71±0.31 & 54.54±0.35 & 29.36±0.09 & 41.60±0.05 \\
        % RoBERTa & base 123M & few-shot & & & & & 22.72 & 31.12 & 15.02 & 22.49\\
        % RoBERTa & large 355M & few-shot & \\
        & SciBERT & 8.70 & 17.12 & 13.89 & 23.96 & - & - & - & -\\
        & LinkBERT.large & 1.09 & 4.01 & 1.46 & 4.75 & 4.01 & 8.60 & 2.06 & 5.37\\
        & Llama2-7B & 4.24 & 8.68 & 5.21 & 9.71 & 19.45 & 27.41 & 6.13 & 10.11\\\midrule\midrule
        %\rowcolor{gray!35}
        & BERT.base & 28.07 & 39.11 & 41.93 & 53.91 & 36.13 & 47.07 & 29.84 & 39.61\\
        %\rowcolor{gray!35}
        & BERT.large & 26.37 & 37.73 & 40.90 & 53.16 & 36.89 & 47.50 & 29.30 & 39.76\\
        %\rowcolor{gray!35}
        \multirow{-3}{*}{\parbox{4cm}{$\epsilon=\infty$\\(non-private fine-tuning)}} & Llama2-7B & 32.80 & 46.67 & 45.65 & 58.59 & 41.01 & 52.39 & 29.21 & 41.44\\\midrule
        & BERT.base & 12.44 & 21.80 & 28.26 & 40.29 & 23.66 & 33.31 & 20.18 & 29.59\\
        & BERT.large & 11.36 & 20.67 & 28.15 & 40.57 & 12.39 & 19.91 & 21.33 & 31.10\\
        \multirow{-3}{*}{\parbox{4cm}{$\epsilon=10$\\(standard clipping)}} & Llama2-7B & 11.90 & 21.35 & 20.10 & 32.50 & 29.99 & 41.04 & 18.14 & 27.46\\ \midrule
        % & BERT.base & 28.92 & 40.05 & 42.34 & 54.58 & 37.47 & 48.22 & 30.13 & 39.98\\
        % \rowcolor{gray!35}
        % & BERT.large & 27.67 & 39.23 & 40.40 & 52.79 & 36.25 & 46.90 & 30.75 & 41.36\\
        % \rowcolor{gray!35}
        % \multirow{-3}{*}{$\epsilon=\infty$ (K=5)} & Llama2-7B & 39.68 & 53.35 & 46.29 & 59.96 & 38.73 & 50.95 & 32.54 & 45.16 \\\midrule
        % % \rowcolor{gray!15}
        & BERT.base & 17.39 & 27.51 & 29.93 & 42.01 & 27.76 & 37.80 & 22.80 & 32.71\\
        % % \rowcolor{gray!15}
        & BERT.large & 18.43 & 29.36 & 31.25 & 43.83 & 19.07 & 28.17 & 24.88 & 35.01\\
        % % \rowcolor{gray!15}
        \multirow{-3}{*}{$\epsilon=10$ (Ours)} & Llama2-7B & 18.23 & 30.27 & 26.15 & 39.80 & 34.82 & 46.43 & 22.96 & 33.46\\\midrule
        & BERT.base & 11.03 & 20.05 & 26.69 & 38.53 & 21.24 & 30.63 & 18.31 & 27.10\\
        & BERT.large & 8.85 & 17.25 & 22.42 & 32.20 & 7.33 & 13.01 & 18.64 & 27.63\\
        \multirow{-3}{*}{\parbox{4cm}{$\epsilon=4$\\(standard clipping)}} & Llama2-7B & 9.73 & 17.84 & 17.96 & 29.46 & 28.00 & 38.49 & 13.86 & 21.56 \\ \midrule
        & BERT.base & 15.18 & 25.07 & 28.00 & 39.86 & 26.13 & 36.03 & 21.88 & 31.52\\
        % \rowcolor{gray!15}
        & BERT.large & 15.32 & 25.78 & 28.22 & 40.78 & 16.09 & 24.57 & 23.72 & 33.76\\
        % \rowcolor{gray!15}
        \multirow{-3}{*}{$\epsilon=4$ (Ours)} & Llama2-7B & 15.69 & 26.93 & 23.66 & 36.89 & 32.69 & 44.04 & 21.20 & 31.02\\\midrule
        % \multirow{3}{*}{$\epsilon=10$ (Ours)} & BERT.base & 23.29 & 33.98 & 35.64 & 47.74 & 32.63 & 43.17 & 26.66 & 36.76\\
        % & BERT.large & 22.71 & 33.76 & 35.18 & 47.03 & 31.20 & 41.28 & 28.18 & 38.68\\
        % & Llama2-7B & 24.07 & 37.53 & 34.58 & 48.76 & 40.16 & 51.25 & 29.54 & 39.90 \\\midrule
        % \multirow{3}{*}{$\epsilon=4$~~~(Ours)} & BERT.base & 22.08 & 32.69 & 31.42 & 43.54 & 33.24 & 43.67 & 26.82 & 36.80\\
        % & BERT.large & 21.78 & 32.60 & 34.84 & 46.62 & 29.73 & 39.63 & 27.63 & 38.06\\
        % & Llama2-7B & 22.55 & 35.47 & 32.50 & 46.68 & 39.67 & 51.09 & 29.25 & 39.35\\
    \end{tabular}}
    \vspace{-4mm}
    \label{tab:lpzs}
\end{table*}

\begin{figure*}[ht!]
    \centering
    \includegraphics[height=0.22\linewidth]{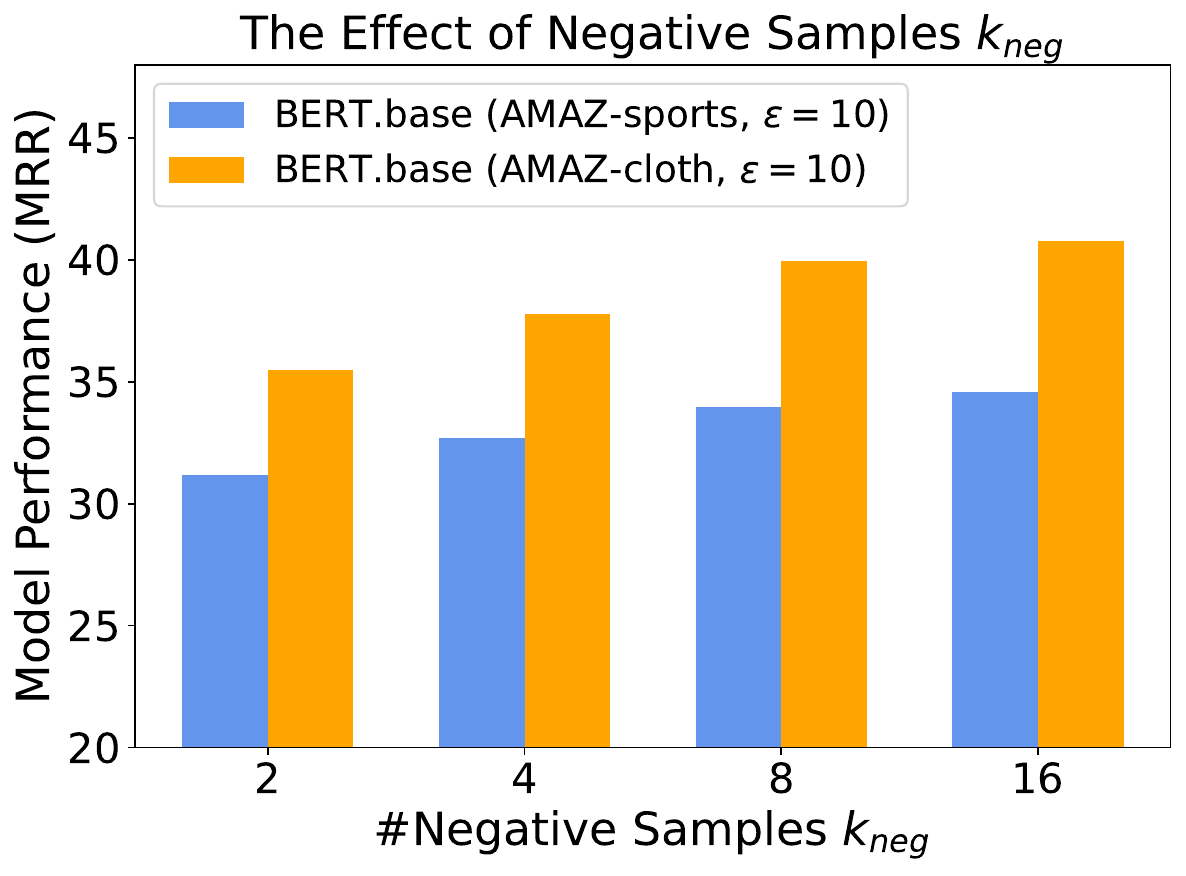}
    \hfill
    \includegraphics[height=0.22\linewidth]{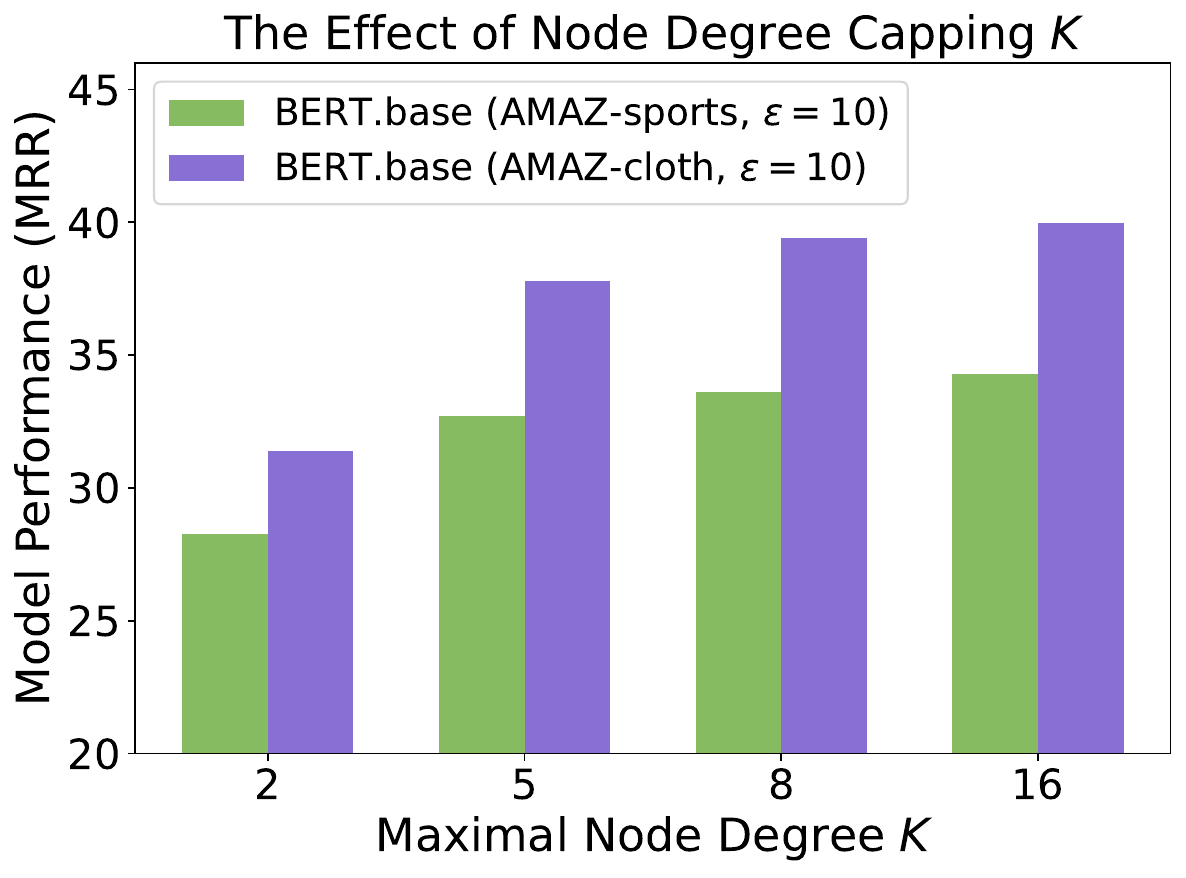}
    \hfill
    \includegraphics[height=0.22\linewidth]{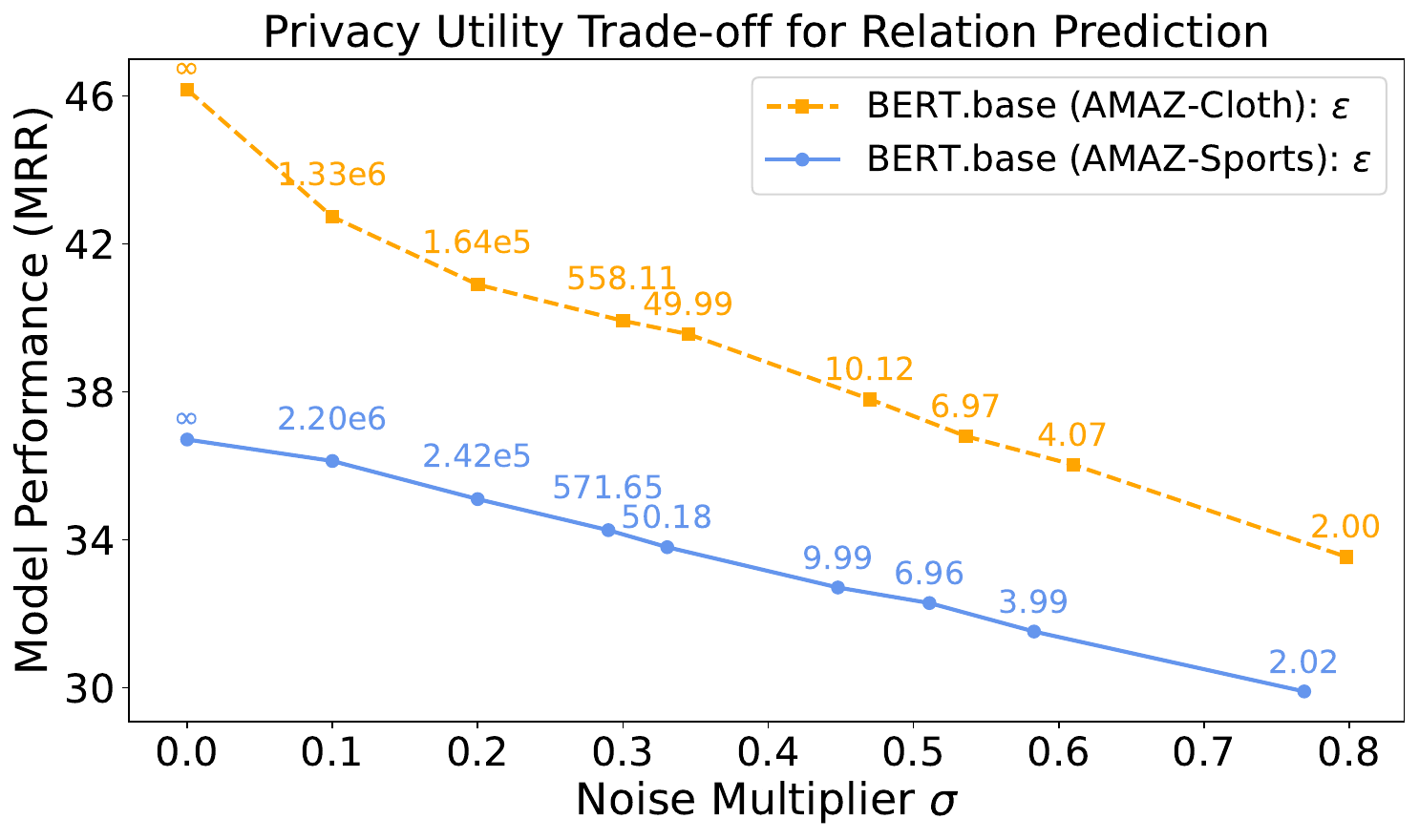}
    \caption{Utility of different $\#$negative sample per positive $k_{\text{neg}}$, degree capping $K$, and noise multiplier $\sigma$ for zero-shot relation prediction. The legend AMAZ-sports means training on AMAZ-cloth and testing on AMAZ-sports, and similar for the legend AMAZ-cloth. In the rightmost figure, numbers along each line indicate the corresponding privacy parameter $\varepsilon_{\mathrm{DP}}$ at different noise levels.}
    \label{fig:ablation}
    \vspace{-10pt}
\end{figure*}

\textbf{Experimental Settings.} We adopt two pre-trained language models BERT~\citep{devlin2018bert} and Llama2~\citep{touvron2023llama} as the text encoders, and four text-attributed graphs from two subdomain pairs: two citation networks (MAG-CHN, MAG-USA)~\citep{sinha2015overview} and two co-purchase networks (AMZA-Sports, AMZA-Cloth)~\citep{mcauley2015image}. The models are first privately fine-tuned on one network (e.g., MAG-CHN), and then its utility is evaluated on another same-domain network (e.g., MAG-USA). We report the overall privacy loss in terms of $(\varepsilon,\delta)$-DP, where $\delta$ is set to $1/|\mathcal E_{\text{train}}|$, the size of the relation set used for training after degree capping. We use the ranking metrics of top@1 precision (PREC@1) and mean reciprocal rank (MRR) to evaluate the relation prediction performance. The maximal node degree is capped by $K=5$ according to Algorithm~\ref{alg:node_dp}. See Appendix \ref{appx:exp} for more implementation details.
%\pan{please motivate a bit why consider this setting, what real-world applications you may expect for this scenario}

% \pan{a lot of duplication/detail can be put in the appendix: Only the setting similar to Yin could mention and similar dataset. All other coi}
\textbf{Baselines.} We compare against the following baselines: (a) Base models without fine-tuning: text encoders are applied directly to the test graph without any relational learning on the training graph and thus preserve the privacy; (b) Non-private fine-tuning: text encoders are fine-tuned on the training graph without any privacy constraints; and (c) standard clipping: text encoders are fine-tuned using Algorithm~\ref{alg:node_dp}, but with standard per-sample gradient clipping in place of our proposed method.

\textbf{Evaluation Results.} Table \ref{tab:lpzs} shows that
%shows the results of relation prediction evaluated among two subdomain pairs, i.e., citation relations of papers written by USA and China authors from Microsoft Academia Graphs (MAG, \cite{sinha2015overview}) and co-purchased relations in clothing and sports categories from Amazon Shopping Graphs (AMAZ, \cite{mcauley2015image}). 
all models fined-tuned by our proposed algorithm outperform their non-fine-tuned base models on all four corresponding test domains for relation prediction by a large margin, even under the constraints of node-level DP $\epsilon\in\{4,10\}$. Meanwhile, it also consistently outperforms the one with standard gradient clipping under the same privacy budgets. This results validates the effectiveness of our proposed privacy-preserving algorithm for relational learning.%, which offers stronger privacy guarantees than previous work at the node level. %Compared to BERT models, LLama2 models suffer more from the aggressive gradient clipping and noise adding required by the node-level DP. Another interesting finding is that BERT models actually achieve the same or slightly better performance when $\epsilon=10$ than non-private fine-tuning ($\epsilon=\infty$). This can be attributed to node degree capping introduced by Algorithm \ref{alg:node_dp}, which eliminates many noisy relations from nodes with extremely high degrees. 

\textbf{Ablation study.} Figure~\ref{fig:ablation} shows an ablation study of our method, examining the effect of the number of negative samples per positive $k_{\mathrm{neg}}$, the capped node degree $K$, and privacy–utility trade-offs. We find that increasing $K, k_{\mathrm{neg}}$ generally brings better model utility under the same privacy budget.

\vspace{-8pt}
\section{Conclusion}
\vspace{-6pt}
We propose a DP-SGD variant tailored for relational learning with rigorous entity-level privacy guarantees. Gradient sensitivity is tightly controlled via a carefully chosen negative sampling strategy and novel adaptive clipping. We extend privacy amplification analysis to account for mini-batches with positive and negative samples drawn from distinct yet dependent sampling mechanisms. Experiments on real-world relational data show that our method achieves better utility-privacy trade-offs than existing approaches.

\begin{ack}
The work is supported by NSF awards PHY-2117997, IIS-2239565, CCF-2402816, IIS-2428777, JPMC faculty award and Meta award. 
%Do {\bf not} include this section in the anonymized submission, only in the final paper. You can use the \texttt{ack} environment provided in the style file to automatically hide this section in the anonymized submission.
\end{ack}

%\section*{References}

\bibliography{ref}

@inproceedings{andrew2021differentially,
  title={Differentially Private Learning with Adaptive Clipping},
  author={Andrew, Galen and Thakkar, Om and McMahan, H. Brendan and Ramaswamy, Swaroop},
  booktitle={Advances in Neural Information Processing Systems},
  volume={34},
  pages={20461--20475},
  year={2021},
  editor={Ranzato, M. and Beygelzimer, A. and Dauphin, Y. and Liang, P. and Vaughan, J. W.},
  publisher={Curran Associates, Inc.},
  url={https://proceedings.neurips.cc/paper/2021/file/91cff01af640a24e7f9f7a5ab407889f-Paper.pdf}
}

@article{wei2025dcsgd,
  title={{DC-SGD}: Differentially Private {SGD} with Dynamic Clipping through Gradient Norm Distribution Estimation},
  author={Wei, Chengkun and Wu, Xianrong and Liu, Yawen and Chen, Yiguang},
  journal={arXiv preprint arXiv:2503.22988},
  year={2025},
  url={https://arxiv.org/abs/2503.22988}
}

@article{shulgin2024convergence,
  title={On the Convergence of {DP-SGD} with Adaptive Clipping},
  author={Shulgin, Egor and Richt{\'a}rik, Peter},
  journal={arXiv preprint arXiv:2401.12916},
  year={2024},
  note={A version of this work appeared as arXiv:2401.12916. Previous search results mentioned arXiv:2412.19916 which appears to be a typo and likely refers to a very different paper given the date. The most relevant result for adaptive clipping by Shulgin \& Richtárik that was found consistently in search results related to "On Convergence of DP-SGD with Adaptive Clipping" pointed to versions from early 2024, for example, one version of the paper with this title and authors is arXiv:2401.12916. If there's a specific December 2024 version (2412.xxxx), its BibTeX would be similar but with the updated arXiv ID and potentially year if it's a major revision or new paper.},
  url={https://arxiv.org/abs/2401.12916}
}

@article{ranshous2015anomaly,
  title={Anomaly detection in dynamic networks: a survey},
  author={Ranshous, Stephen and Shen, Shitian and Koutra, Danai and Harenberg, Steve and Faloutsos, Christos and Samatova, Nagiza F},
  journal={Wiley Interdisciplinary Reviews: Computational Statistics},
  volume={7},
  number={3},
  pages={223--247},
  year={2015},
  publisher={Wiley Online Library}
}

@inproceedings{leroy2010cold,
  title={Cold start link prediction},
  author={Leroy, Vincent and Cambazoglu, B Barla and Bonchi, Francesco},
  booktitle={Proceedings of the 16th ACM SIGKDD international conference on Knowledge discovery and data mining},
  pages={393--402},
  year={2010}
}

@inproceedings{reiss2024zero,
  title={From Zero to Hero: Cold-Start Anomaly Detection},
  author={Reiss, Tal and Kour, George and Zwerdling, Naama and Tavor, Ateret Anaby and Hoshen, Yedid},
  booktitle={Findings of the Association for Computational Linguistics ACL 2024},
  pages={7607--7617},
  year={2024}
}

@inproceedings{wu2021towards,
  title={Towards open-world recommendation: An inductive model-based collaborative filtering approach},
  author={Wu, Qitian and Zhang, Hengrui and Gao, Xiaofeng and Yan, Junchi and Zha, Hongyuan},
  booktitle={International Conference on Machine Learning},
  pages={11329--11339},
  year={2021},
  organization={PMLR}
}

@inproceedings{devlin2018bert,
    title = "{BERT}: Pre-training of Deep Bidirectional Transformers for Language Understanding",
    author = "Devlin, Jacob  and
      Chang, Ming-Wei  and
      Lee, Kenton  and
      Toutanova, Kristina",
    booktitle = "Proceedings of the 2019 Conference of the North {A}merican Chapter of the Association for Computational Linguistics: Human Language Technologies, Volume 1 (Long and Short Papers)",
    year = "2019",
    pages = "4171--4186",
}

@article{touvron2023llama,
  title={Llama 2: Open foundation and fine-tuned chat models},
  author={Touvron, Hugo and Martin, Louis and Stone, Kevin and Albert, Peter and Almahairi, Amjad and Babaei, Yasmine and Bashlykov, Nikolay and Batra, Soumya and Bhargava, Prajjwal and Bhosale, Shruti and others},
  journal={arXiv preprint arXiv:2307.09288},
  year={2023}
}

@inproceedings{dwork2006differential,
  title={Differential privacy},
  author={Dwork, Cynthia},
  booktitle={International colloquium on automata, languages, and programming},
  pages={1--12},
  year={2006},
  organization={Springer}
}

@inproceedings{li2012sampling,
  title={On sampling, anonymization, and differential privacy or, k-anonymization meets differential privacy},
  author={Li, Ninghui and Qardaji, Wahbeh and Su, Dong},
  booktitle={Proceedings of the 7th ACM Symposium on Information, Computer and Communications Security},
  pages={32--33},
  year={2012}
}

@inproceedings{song2013stochastic,
  title={Stochastic gradient descent with differentially private updates},
  author={Song, Shuang and Chaudhuri, Kamalika and Sarwate, Anand D},
  booktitle={2013 IEEE Global Conference on Signal and Information Processing},
  pages={245--248},
  year={2013},
  organization={IEEE}
}

@inproceedings{abadi2016deep,
  title={Deep learning with differential privacy},
  author={Abadi, Martin and Chu, Andy and Goodfellow, Ian and McMahan, H Brendan and Mironov, Ilya and Talwar, Kunal and Zhang, Li},
  booktitle={Proceedings of the 2016 ACM SIGSAC Conference on Computer and Communications Security},
  pages={308--318},
  year={2016}
}

@inproceedings{mironov2017renyi,
  title={R{\'e}nyi differential privacy},
  author={Mironov, Ilya},
  booktitle={2017 IEEE 30th computer security foundations symposium (CSF)},
  pages={263--275},
  year={2017},
  organization={IEEE}
}

@inproceedings{balle2018privacy,
  title={Privacy amplification by subsampling: Tight analyses via couplings and divergences},
  author={Balle, Borja and Barthe, Gilles and Gaboardi, Marco},
  booktitle={Advances in Neural Information Processing Systems},
  volume={31},
  year={2018}
}

@inproceedings{wang2019subsampled,
  title={Subsampled r{\'e}nyi differential privacy and analytical moments accountant},
  author={Wang, Yu-Xiang and Balle, Borja and Kasiviswanathan, Shiva Prasad},
  booktitle={The 22nd International Conference on Artificial Intelligence and Statistics},
  pages={1226--1235},
  year={2019},
  organization={PMLR}
}

@inproceedings{zhu2019poission,
  title={Poission subsampled r{\'e}nyi differential privacy},
  author={Zhu, Yuqing and Wang, Yu-Xiang},
  booktitle={International Conference on Machine Learning},
  pages={7634--7642},
  year={2019},
  organization={PMLR}
}

@inproceedings{mcauley2015image,
  title={Image-based recommendations on styles and substitutes},
  author={McAuley, Julian and Targett, Christopher and Shi, Qinfeng and Van Den Hengel, Anton},
  booktitle={Proceedings of the 38th International ACM SIGIR Conference on Research and Development in Information Retrieval},
  pages={43--52},
  year={2015}
}

@inproceedings{sinha2015overview,
  title={An overview of microsoft academic service (mas) and applications},
  author={Sinha, Arnab and Shen, Zhihong and Song, Yang and Ma, Hao and Eide, Darrin and Hsu, Bo-June and Wang, Kuansan},
  booktitle={Proceedings of the 24th International Conference on World Wide Web},
  pages={243--246},
  year={2015}
}

@article{oord2018representation,
  title={Representation learning with contrastive predictive coding},
  author={Oord, Aaron van den and Li, Yazhe and Vinyals, Oriol},
  journal={arXiv preprint arXiv:1807.03748},
  year={2018}
}

@inproceedings{xie2023graph,
  title={Graph-aware language model pre-training on a large graph corpus can help multiple graph applications},
  author={Xie, Han and Zheng, Da and Ma, Jun and Zhang, Houyu and Ioannidis, Vassilis N and Song, Xiang and Ping, Qing and Wang, Sheng and Yang, Carl and Xu, Yi and others},
  booktitle={Proceedings of the 29th ACM SIGKDD Conference on Knowledge Discovery and Data Mining},
  pages={5270--5281},
  year={2023}
}

@article{duan2023simteg,
  title={Simteg: A frustratingly simple approach improves textual graph learning},
  author={Duan, Keyu and Liu, Qian and Chua, Tat-Seng and Yan, Shuicheng and Ooi, Wei Tsang and Xie, Qizhe and He, Junxian},
  journal={arXiv preprint arXiv:2308.02565},
  year={2023}
}

@inproceedings{yasunaga2022linkbert,
    title = "{L}ink{BERT}: Pretraining Language Models with Document Links",
    author = "Yasunaga, Michihiro  and
      Leskovec, Jure  and
      Liang, Percy",
    booktitle = "Proceedings of the 60th Annual Meeting of the Association for Computational Linguistics (Volume 1: Long Papers)",
    year = "2022",
    pages = "8003--8016",
}

@inproceedings{kipf2016semi,
  title={Semi-supervised classification with graph convolutional networks},
  author={Kipf, Thomas N and Welling, Max},
  booktitle={International Conference on Learning Representations},
  year={2017}
}

@inproceedings{hamilton2017inductive,
  title={Inductive representation learning on large graphs},
  author={Hamilton, Will and Ying, Zhitao and Leskovec, Jure},
  booktitle={Advances in Neural Information Processing Systems},
  volume={30},
  year={2017}
}

@inproceedings{sajadmanesh2023gap,
  title={$\{$GAP$\}$: Differentially Private Graph Neural Networks with Aggregation Perturbation},
  author={Sajadmanesh, Sina and Shamsabadi, Ali Shahin and Bellet, Aur{\'e}lien and Gatica-Perez, Daniel},
  booktitle={32nd USENIX Security Symposium (USENIX Security 23)},
  pages={3223--3240},
  year={2023}
}

@inproceedings{chien2024differentially,
  title={Differentially private decoupled graph convolutions for multigranular topology protection},
  author={Chien, Eli and Chen, Wei-Ning and Pan, Chao and Li, Pan and Ozgur, Ayfer and Milenkovic, Olgica},
  booktitle={Advances in Neural Information Processing Systems},
  volume={36},
  year={2024}
}

@inproceedings{sajadmanesh2024progap,
  title={Progap: Progressive graph neural networks with differential privacy guarantees},
  author={Sajadmanesh, Sina and Gatica-Perez, Daniel},
  booktitle={Proceedings of the 17th ACM International Conference on Web Search and Data Mining},
  pages={596--605},
  year={2024}
}

@inproceedings{chen2020simple,
  title={A simple framework for contrastive learning of visual representations},
  author={Chen, Ting and Kornblith, Simon and Norouzi, Mohammad and Hinton, Geoffrey},
  booktitle={International Conference on Machine Learning},
  pages={1597--1607},
  year={2020},
  organization={PMLR}
}

@article{li2023private,
  title={Private graph data release: A survey},
  author={Li, Yang and Purcell, Michael and Rakotoarivelo, Thierry and Smith, David and Ranbaduge, Thilina and Ng, Kee Siong},
  journal={ACM Computing Surveys},
  volume={55},
  number={11},
  pages={1--39},
  year={2023},
  publisher={ACM New York, NY}
}

@article{daigavane2021node,
  title={Node-level differentially private graph neural networks},
  author={Daigavane, Ameya and Madan, Gagan and Sinha, Aditya and Thakurta, Abhradeep Guha and Aggarwal, Gaurav and Jain, Prateek},
  journal={arXiv preprint arXiv:2111.15521},
  year={2021}
}

@article{olatunji2021releasing,
    title={Releasing Graph Neural Networks with Differential Privacy Guarantees},
    author={Iyiola Emmanuel Olatunji and Thorben Funke and Megha Khosla},
    journal={Transactions on Machine Learning Research},
    year={2023},
}

@article{gao2023leveraging,
  title={Leveraging a medical knowledge graph into large language models for diagnosis prediction},
  author={Gao, Yanjun and Li, Ruizhe and Caskey, John and Dligach, Dmitriy and Miller, Timothy and Churpek, Matthew M and Afshar, Majid},
  journal={arXiv preprint arXiv:2308.14321},
  year={2023}
}

@inproceedings{zhang2022greaselm,
  title={GreaseLM: Graph REASoning Enhanced Language Models},
  author={Zhang, Xikun and Bosselut, Antoine and Yasunaga, Michihiro and Ren, Hongyu and Liang, Percy and Manning, Christopher D and Leskovec, Jure},
  booktitle={International Conference on Learning Representations},
  year={2022}
}

@article{wu2021leveraging,
  title={Leveraging graph-based hierarchical medical entity embedding for healthcare applications},
  author={Wu, Tong and Wang, Yunlong and Wang, Yue and Zhao, Emily and Yuan, Yilian},
  journal={Scientific reports},
  volume={11},
  number={1},
  pages={5858},
  year={2021},
  publisher={Nature Publishing Group UK London}
}

@article{wu2020comprehensive,
  title={A comprehensive survey on graph neural networks},
  author={Wu, Zonghan and Pan, Shirui and Chen, Fengwen and Long, Guodong and Zhang, Chengqi and Philip, S Yu},
  journal={IEEE transactions on neural networks and learning systems},
  volume={32},
  number={1},
  pages={4--24},
  year={2020},
  publisher={IEEE}
}

@inproceedings{sajadmanesh2021locally,
  title={Locally private graph neural networks},
  author={Sajadmanesh, Sina and Gatica-Perez, Daniel},
  booktitle={Proceedings of the 2021 ACM SIGSAC Conference on Computer and Communications Security},
  pages={2130--2145},
  year={2021}
}

@inproceedings{kingma2014adam,
  title={Adam: A method for stochastic optimization},
  author={Kingma, Diederik P and Ba, Jimmy},
  booktitle={International Conference on Learning Representations},
  year={2015}
}

@inproceedings{bordes2013translating,
  title={Translating embeddings for modeling multi-relational data},
  author={Bordes, Antoine and Usunier, Nicolas and Garcia-Duran, Alberto and Weston, Jason and Yakhnenko, Oksana},
  booktitle={Advances in Neural Information Processing Systems},
  volume={26},
  year={2013}
}

@inproceedings{lin2015learning,
  title={Learning entity and relation embeddings for knowledge graph completion},
  author={Lin, Yankai and Liu, Zhiyuan and Sun, Maosong and Liu, Yang and Zhu, Xuan},
  booktitle={Proceedings of the AAAI Conference on Artificial Intelligence},
  volume={29},
  year={2015}
}

@inproceedings{wang2014knowledge,
  title={Knowledge graph embedding by translating on hyperplanes},
  author={Wang, Zhen and Zhang, Jianwen and Feng, Jianlin and Chen, Zheng},
  booktitle={Proceedings of the AAAI Conference on Artificial Intelligence},
  volume={28},
  year={2014}
}

@inproceedings{yang2014embedding,
  title={Embedding entities and relations for learning and inference in knowledge bases},
  author={Yang, Bishan and Yih, Wen-tau and He, Xiaodong and Gao, Jianfeng and Deng, Li},
  booktitle={International Conference on Learning Representations},
  year={2015}
}

@inproceedings{you2020graph,
  title={Graph contrastive learning with augmentations},
  author={You, Yuning and Chen, Tianlong and Sui, Yongduo and Chen, Ting and Wang, Zhangyang and Shen, Yang},
  booktitle={Advances in Neural Information Processing Systems},
  volume={33},
  pages={5812--5823},
  year={2020}
}

@inproceedings{kwak2010twitter,
  title={What is Twitter, a social network or a news media?},
  author={Kwak, Haewoon and Lee, Changhyun and Park, Hosung and Moon, Sue},
  booktitle={Proceedings of the 19th International Conference on World Wide Web},
  pages={591--600},
  year={2010}
}

@article{leskovec2007dynamics,
  title={The dynamics of viral marketing},
  author={Leskovec, Jure and Adamic, Lada A and Huberman, Bernardo A},
  journal={ACM Transactions on the Web (TWEB)},
  volume={1},
  number={1},
  pages={5--es},
  year={2007},
  publisher={ACM New York, NY, USA}
}

@inproceedings{shokri2017membership,
  title={Membership inference attacks against machine learning models},
  author={Shokri, Reza and Stronati, Marco and Song, Congzheng and Shmatikov, Vitaly},
  booktitle={2017 IEEE symposium on security and privacy (SP)},
  pages={3--18},
  year={2017},
  organization={IEEE}
}

@article{newman2003structure,
  title={The structure and function of complex networks},
  author={Newman, Mark EJ},
  journal={SIAM review},
  volume={45},
  number={2},
  pages={167--256},
  year={2003},
  publisher={SIAM}
}

@article{zitnik2018modeling,
  title={Modeling polypharmacy side effects with graph convolutional networks},
  author={Zitnik, Marinka and Agrawal, Monica and Leskovec, Jure},
  journal={Bioinformatics},
  volume={34},
  number={13},
  pages={i457--i466},
  year={2018},
  publisher={Oxford University Press}
}

@article{kanehisa2000kegg,
  title={KEGG: kyoto encyclopedia of genes and genomes},
  author={Kanehisa, Minoru and Goto, Susumu},
  journal={Nucleic acids research},
  volume={28},
  number={1},
  pages={27--30},
  year={2000},
  publisher={Oxford University Press}
}

@inproceedings{leskovec2005graphs,
  title={Graphs over time: densification laws, shrinking diameters and possible explanations},
  author={Leskovec, Jure and Kleinberg, Jon and Faloutsos, Christos},
  booktitle={Proceedings of the eleventh ACM SIGKDD international conference on Knowledge discovery in data mining},
  pages={177--187},
  year={2005}
}

@inproceedings{xiong2018one,
  title={One-Shot Relational Learning for Knowledge Graphs},
  author={Xiong, Wenhan and Yu, Mo and Chang, Shiyu and Guo, Xiaoxiao and Wang, William Yang},
  booktitle={Proceedings of the 2018 Conference on Empirical Methods in Natural Language Processing},
  pages={1980--1990},
  year={2018}
}

@inproceedings{zhang2019ernie,
  title={ERNIE: Enhanced Language Representation with Informative Entities},
  author={Zhang, Zhengyan and Han, Xu and Liu, Zhiyuan and Jiang, Xin and Sun, Maosong and Liu, Qun},
  booktitle={Proceedings of the 57th Annual Meeting of the Association for Computational Linguistics},
  pages={1441--1451},
  year={2019}
}

@article{fang2022transferring,
  title={Transferring image-clip to video-text retrieval via temporal relations},
  author={Fang, Han and Xiong, Pengfei and Xu, Luhui and Luo, Wenhan},
  journal={IEEE Transactions on Multimedia},
  volume={25},
  pages={7772--7785},
  year={2022},
  publisher={IEEE}
}

@article{alsentzer2019publicly,
  title={Publicly Available Clinical BERT Embeddings},
  author={Alsentzer, Emily and Murphy, John R and Boag, Willie and Weng, Wei-Hung and Jin, Di and Naumann, Tristan and Redmond, WA and McDermott, Matthew BA},
  journal={NAACL HLT 2019},
  pages={72},
  year={2019}
}

@inproceedings{carlini2019secret,
  title={The secret sharer: Evaluating and testing unintended memorization in neural networks},
  author={Carlini, Nicholas and Liu, Chang and Erlingsson, {\'U}lfar and Kos, Jernej and Song, Dawn},
  booktitle={28th USENIX security symposium (USENIX security 19)},
  pages={267--284},
  year={2019}
}

@article{yin2024privately,
  title={Privately Learning from Graphs with Applications in Fine-tuning Large Language Models},
  author={Yin, Haoteng and Wei, Rongzhe and Chien, Eli and Li, Pan},
  journal={arXiv preprint arXiv:2410.08299},
  year={2024}
}

@inproceedings{bassily2014private,
  title={Private empirical risk minimization: Efficient algorithms and tight error bounds},
  author={Bassily, Raef and Smith, Adam and Thakurta, Abhradeep},
  booktitle={2014 IEEE 55th annual symposium on foundations of computer science},
  pages={464--473},
  year={2014},
  organization={IEEE}
}

@article{kasiviswanathan2011can,
  title={What can we learn privately?},
  author={Kasiviswanathan, Shiva Prasad and Lee, Homin K and Nissim, Kobbi and Raskhodnikova, Sofya and Smith, Adam},
  journal={SIAM Journal on Computing},
  volume={40},
  number={3},
  pages={793--826},
  year={2011},
  publisher={SIAM}
}

@inproceedings{beimel2013characterizing,
  title={Characterizing the sample complexity of private learners},
  author={Beimel, Amos and Nissim, Kobbi and Stemmer, Uri},
  booktitle={Proceedings of the 4th conference on Innovations in Theoretical Computer Science},
  pages={97--110},
  year={2013}
}

@article{mironov2019r,
  title={R$\backslash$'enyi differential privacy of the sampled gaussian mechanism},
  author={Mironov, Ilya and Talwar, Kunal and Zhang, Li},
  journal={arXiv preprint arXiv:1908.10530},
  year={2019}
}

@article{steinke2022composition,
  title={Composition of differential privacy \& privacy amplification by subsampling},
  author={Steinke, Thomas},
  journal={arXiv preprint arXiv:2210.00597},
  year={2022}
}

@article{ganesh2024tight,
  title={Tight Group-Level DP Guarantees for DP-SGD with Sampling via Mixture of Gaussians Mechanisms},
  author={Ganesh, Arun},
  journal={arXiv preprint arXiv:2401.10294},
  year={2024}
}

@article{jiang2024calibrating,
  title={Calibrating Noise for Group Privacy in Subsampled Mechanisms},
  author={Jiang, Yangfan and Luo, Xinjian and Yang, Yin and Xiao, Xiaokui},
  journal={arXiv preprint arXiv:2408.09943},
  year={2024}
}

@inproceedings{schuchardt2024unified,
  title={Unified Mechanism-Specific Amplification by Subsampling and Group Privacy Amplification},
  author={Schuchardt, Jan and Stoian, Mihail and Kosmala, Arthur and G{\"u}nnemann, Stephan},
  booktitle={The Thirty-eighth Annual Conference on Neural Information Processing Systems},
  year={2024}
}

@article{stokes2020deep,
  title={A deep learning approach to antibiotic discovery},
  author={Stokes, Jonathan M and Yang, Kevin and Swanson, Kyle and Jin, Wengong and Cubillos-Ruiz, Andres and Donghia, Nina M and MacNair, Craig R and French, Shawn and Carfrae, Lindsey A and Bloom-Ackermann, Zohar and others},
  journal={Cell},
  volume={180},
  number={4},
  pages={688--702},
  year={2020},
  publisher={Elsevier}
}

@article{bronstein2017geometric,
  title={Geometric deep learning: going beyond euclidean data},
  author={Bronstein, Michael M and Bruna, Joan and LeCun, Yann and Szlam, Arthur and Vandergheynst, Pierre},
  journal={IEEE Signal Processing Magazine},
  volume={34},
  number={4},
  pages={18--42},
  year={2017},
  publisher={IEEE}
}

@article{wang2021review,
  title={A review on graph neural network methods in financial applications},
  author={Wang, Jianian and Zhang, Sheng and Xiao, Yanghua and Song, Rui},
  journal={arXiv preprint arXiv:2111.15367},
  year={2021}
}

@article{ahmedt2021graph,
  title={Graph-based deep learning for medical diagnosis and analysis: past, present and future},
  author={Ahmedt-Aristizabal, David and Armin, Mohammad Ali and Denman, Simon and Fookes, Clinton and Petersson, Lars},
  journal={Sensors},
  volume={21},
  number={14},
  pages={4758},
  year={2021},
  publisher={MDPI}
}

@article{adnan2022federated,
  title={Federated learning and differential privacy for medical image analysis},
  author={Adnan, Mohammed and Kalra, Shivam and Cresswell, Jesse C and Taylor, Graham W and Tizhoosh, Hamid R},
  journal={Scientific reports},
  volume={12},
  number={1},
  pages={1953},
  year={2022},
  publisher={Nature Publishing Group UK London}
}

@article{fang2022differentially,
  title={Differentially private recommender system with variational autoencoders},
  author={Fang, Le and Du, Bingqian and Wu, Chuan},
  journal={Knowledge-Based Systems},
  volume={250},
  pages={109044},
  year={2022},
  publisher={Elsevier}
}

@article{klause2022differentially,
  title={Differentially private training of residual networks with scale normalisation},
  author={Klause, Helena and Ziller, Alexander and Rueckert, Daniel and Hammernik, Kerstin and Kaissis, Georgios},
  journal={arXiv preprint arXiv:2203.00324},
  year={2022}
}

@article{ziller2021medical,
  title={Medical imaging deep learning with differential privacy},
  author={Ziller, Alexander and Usynin, Dmitrii and Braren, Rickmer and Makowski, Marcus and Rueckert, Daniel and Kaissis, Georgios},
  journal={Scientific Reports},
  volume={11},
  number={1},
  pages={13524},
  year={2021},
  publisher={Nature Publishing Group UK London}
}

@article{mueller2022sok,
  title={SoK: Differential privacy on graph-structured data},
  author={Mueller, Tamara T and Usynin, Dmitrii and Paetzold, Johannes C and Rueckert, Daniel and Kaissis, Georgios},
  journal={arXiv preprint arXiv:2203.09205},
  year={2022}
}

@inproceedings{balle2020hypothesis,
  title={Hypothesis testing interpretations and renyi differential privacy},
  author={Balle, Borja and Barthe, Gilles and Gaboardi, Marco and Hsu, Justin and Sato, Tetsuya},
  booktitle={International Conference on Artificial Intelligence and Statistics},
  pages={2496--2506},
  year={2020},
  organization={PMLR}
}

@article{van2014renyi,
  title={R{\'e}nyi divergence and Kullback-Leibler divergence},
  author={Van Erven, Tim and Harremos, Peter},
  journal={IEEE Transactions on Information Theory},
  volume={60},
  number={7},
  pages={3797--3820},
  year={2014},
  publisher={IEEE}
}

@article{hawkins2023node,
  title={Node and edge differential privacy for graph laplacian spectra: Mechanisms and scaling laws},
  author={Hawkins, Calvin and Chen, Bo and Yazdani, Kasra and Hale, Matthew},
  journal={IEEE Transactions on Network Science and Engineering},
  year={2023},
  publisher={IEEE}
}

@inproceedings{wang2013differential,
  title={Differential privacy preserving spectral graph analysis},
  author={Wang, Yue and Wu, Xintao and Wu, Leting},
  booktitle={Advances in Knowledge Discovery and Data Mining: 17th Pacific-Asia Conference, PAKDD 2013, Gold Coast, Australia, April 14-17, 2013, Proceedings, Part II 17},
  pages={329--340},
  year={2013},
  organization={Springer}
}

@inproceedings{xiang2024preserving,
  title={Preserving node-level privacy in graph neural networks},
  author={Xiang, Zihang and Wang, Tianhao and Wang, Di},
  booktitle={2024 IEEE Symposium on Security and Privacy (SP)},
  pages={4714--4732},
  year={2024},
  organization={IEEE}
}

@article{nickel2015review,
  title={A review of relational machine learning for knowledge graphs},
  author={Nickel, Maximilian and Murphy, Kevin and Tresp, Volker and Gabrilovich, Evgeniy},
  journal={Proceedings of the IEEE},
  volume={104},
  number={1},
  pages={11--33},
  year={2015},
  publisher={IEEE}
}

@inproceedings{zhuo2022tiger,
  title={Tiger: Transferable interest graph embedding for domain-level zero-shot recommendation},
  author={Zhuo, Jianhuan and Lian, Jianxun and Xu, Lanling and Gong, Ming and Shou, Linjun and Jiang, Daxin and Xie, Xing and Yue, Yinliang},
  booktitle={Proceedings of the 31st ACM International Conference on Information \& Knowledge Management},
  pages={2806--2816},
  year={2022}
}

@article{wang2023pre,
  title={Pre-trained neural recommenders: A transferable zero-shot framework for recommendation systems},
  author={Wang, Junting and Krishnan, Adit and Sundaram, Hari and Li, Yunzhe},
  journal={arXiv preprint arXiv:2309.01188},
  year={2023}
}

@inproceedings{mohamed2022differentially,
  title={Differentially private community detection for stochastic block models},
  author={Mohamed, Mohamed S and Nguyen, Dung and Vullikanti, Anil and Tandon, Ravi},
  booktitle={International Conference on Machine Learning},
  pages={15858--15894},
  year={2022},
  organization={PMLR}
}
\bibliographystyle{unsrt}

\clearpage
%\input{draft/checklist}

%%%%%%%%%%%%%%%%%%%%%%%%%%%%%%%%%%%%%%%%%%%%%%%%%%%%%%%%%%%%

\appendix

\onecolumn

\section{Proof of Proposition \ref{thm:clipping}}
\label{appdx:proof1}
\begin{proof}

Recall in Section~\ref{sec:sen}, we derive the sensitivity of removing node $u^*$ for neighboring mini-batches $\mathcal{B}\sim\mathcal{B}^{\prime}$:
    \begin{align}
    \norm{\mathbf{g}(\mathcal{B})-\mathbf{g}(\mathcal{B}^{\prime})} \le \sum_{T_i\in\mathcal{B}_+(u^*)}\norm{\mathbf{g}(T_i)}+\sum_{T_i\in\mathcal{B}_-(u^*)}\norm{\mathbf{g}(T_i)}+\sum_{T_i^{\prime}\in \mathcal{B}_-^{\prime}(u^*)}\norm{\mathbf{g}(T_i^{\prime})}. 
\end{align}

Now consider the adaptive gradient clipping $\bar{\mathbf{g}}(T_i):=\mathbf{g}(T_i)/\max\{1, \norm{\mathbf{g}(T_i)/C(T_i, \mathcal{B})}\}$ with $C(T_i,\mathcal{B})=C/(\sup_{u\in T_i}|\mathcal{B}_+(u)|+|\mathcal{B}_-(u)|)$.  For $T_i\in \mathcal{B}_+(u^*)$ or $T_i\in  \mathcal{B}_-(u^*)$, since $u^*\in T_i$, we have  
\begin{equation}
    \norm{\bar{\mathbf{g}}(T_i)}\le \frac{C}{(\sup_{u\in T_i}|\mathcal{B}_+(u)|+|\mathcal{B}_-(u)|)}\le \frac{C}{(|\mathcal{B}_+(u^*)|+|\mathcal{B}_-(u^*)|)}.
\end{equation}
For $T^{\prime}_i\in \mathcal{B}_-^{\prime}(u^*)$, the node $u^*$ is replaced by arbitrary other nodes, so generally $u^*\notin T_i^{\prime}$. We can only use pessimistic bound $\norm{\mathbf{g}(T_i^{\prime})}\le C$. Overall, the sensivity becomes
\begin{equation}
\begin{split}
    &\norm{\bar{\mathbf{g}}(\mathcal{B})-\bar{\mathbf{g}}(\mathcal{B}^{\prime})} \\&\le (|\mathcal{B}_+(u^*)|+|\mathcal{B}_-(u^*)|)\cdot \frac{C}{(|\mathcal{B}_+(u^*)|+|\mathcal{B}_-(u^*)|)} + |\mathcal{B}^{\prime}_-(u^*)|\cdot C=(1+|\mathcal{B}_-(u^*)|)\cdot C,
\end{split}
\end{equation}
where we use the fact $|\mathcal{B}^{\prime}_-(u^*)|=|\mathcal{B}_-(u^*)|$.

\end{proof}

\section{Proof of Theorem \ref{thm:amp_rdp}}
\label{appx:proof2}
%The previous sections of separate analysis provide us the senses to correctly construct conditional coupling. Now we are ready to build our complete proof for the whole algorithm. Note that this section is self-contained and can be read independently, except that one should know the technique of conditional coupling as proposed in \cite{schuchardt2024unified}.
\thmposneg*
%\rw{Should define maximal degree K.}
\begin{proof}
\textbf{Notation.} We use the removal/insertion notation of DP: we have two neighboring graphs $G=(\mathcal{V}, \mathcal{E}), G^{\prime}=(\mathcal{V}^{\prime}, \mathcal{E}^{\prime})$ where $\mathcal{V}=\{1, 2, ..., n\}$, $\mathcal{V}^{\prime}=\{1, 2, ..., n-1\}$ and $\mathcal{E}=\mathcal{E}^{\prime}\cup\{(n, u_i):i=1,2,...,K\}$. Let $m=|\mathcal{E}|$. Let $\mathbb{X}=2^\mathcal{E}, \mathbb{X}^{\prime}=2^{\mathcal{E}^{\prime}}$ be the all possible subset of edges that represents the space of positive samplings. Let $\mathbb{Y}=\{(y_1, y_2..., y_{b\cdot k_{\mathrm{neg}}}): y_i\in V, b\in [m], y_i\neq y_j\}$ represent all possible ordered node set that represents the space of negative samplings. For simplicity, we will assume $k_{\mathrm{neg}}=1$, i.e., each positive sample is only paired with one negative sample. In the end we will show how to adapt to case $k_{\mathrm{neg}}>1$. An actual dataset we are computing on is $(x,y)$ for some $x\in\mathbb{X}, y\in\mathbb{Y}$ or $(x^{\prime}, y^{\prime})$ for some $x^{\prime}\in\mathbb{X}^{\prime}, y\in \mathbb{Y}^{\prime}$ (again, $\mathbb{X}^{\prime}, \mathbb{Y}^{\prime}$ are the sampling space of positive edges and ordered node set defined on neighboring graph $\mathcal{G}^{\prime}$). We denote $\mu_{x,y}$ as the output distribution of the mechanism acting on a fixed sampled dataset $(x,y)$, and denote $p,q$ as the mixture output distribution of the mechanism acting on a randomly sampled dataset $(x,y)$ and $(x^{\prime}, y^{\prime})$ respectively. By definition, we have 
\begin{equation}
    p=\sum_{x,y}\omega(x,y)\mu_{x,y},\quad q=\sum_{x^{\prime}, y^{\prime}}\omega^{\prime}(x^{\prime}, y^{\prime})\mu_{x^{\prime}, y^{\prime}},
\end{equation}
where $\omega(x,y),\omega^{\prime}(x^{\prime},y^{\prime})$ are probabilistic distributions of the samplings. With an abuse of notation, we also write $\omega(x,y)=\omega(x)\omega(y|x)$ and $\omega^{\prime}(x^{\prime}, y^{\prime})=\omega^{\prime}(x^{\prime})\omega^{\prime}(y^{\prime}|x^{\prime})$. Our goal is to bound $\Psi_{\alpha}(p||q)=\mathbb{E}_{q}(p/q)^\alpha$ and $\Psi_{\alpha}(q||p)=\mathbb{E}_p(q/p)^\alpha$, i.e., the logarithm of R\'enyi divergence.

%\rw{Consider starting with a brief paragraph to explain the high-level intuition behind the proof. Otherwise, readers may struggle to follow the reasoning due to heavy notations. But can leave it as it is if we are lack of time.}

\textbf{Proof idea.} The proof is basically divided into three steps: (1) we will show how it is possible to extend Conditional Coupling~\citep{schuchardt2024unified} to this product data $(x,y)$; (2) a concrete conditional coupling is constructed and proved to be valid. Using the conditional coupling, we can apply Jensen's inequality to obtain a bound for RDP $\varepsilon$; (3) finally, we derive the final bound by considering the properties of specific Algorithm~\ref{alg:node_dp}, i.e., a constant-sensitivity mechanism. 

\textbf{Jensen's inequality by conditional coupling. }The key idea is to apply Jensen's inequality to $\Psi_{\alpha}(\cdot||\cdot)$ as $\Psi_{\alpha}$ is jointly convex~\cite{van2014renyi}. To do so, we need to re-write $p, q$ in terms of balancing summation. we will generalize the conditional coupling technique~\cite{schuchardt2024unified} to our case. In the following we will show how we can achieve this in a high-level view. Consider the conditional distributions $\omega(x|A_i)$, where $A_i$ represents the event ``$x$ has $i$ removed edges (i.e., the edges involve the removed node $n$)'', and conditional distributions $\omega(y|B_j, x)$, $j=0,1$, where $B_0$($ B_1$) represents the event ``$[y]_{S(x)}$ has (no) removed node $n$'', where $S(x)$ is a subset of indices defined by $S(x)\equiv \{\ell\in\{1, 2, ...,|x|\}: x_{\ell} \text{ has no removed edges}\}$. We define coupling $\pi$ as a probabilistic measure on $\mathbb{X}^{K+1}\times \mathbb{X}^{\prime}\times \mathbb{Y}^{2(K+1)}\times \mathbb{Y}^{\prime}$. As a shorthand we write $\vec{x}=(x_0, ..., x_K)\in \mathbb{X}^{K+1}$, $\vec{y}^{(j)}=(y^{(j)}_0, ..., y^{(j)}_K)\in \mathbb{Y}^{K+1}$ and $\vec{y}=(\vec{y}^{(0)}, \vec{y}^{(1)})$. We require coupling $\pi(\vec{x}, x^{\prime}, \vec{y}, y^{\prime})=\pi(\vec{x}, x^{\prime})\pi(\vec{y}, y^{\prime}|\vec{x},x^\prime)$ to satisfy:
\begin{align}
    \sum_{\vec{x}\backslash \{x_i\}, x^{\prime}
    }\pi(\vec{x}, x^{\prime})&=\omega(x_i|A_i),\label{eqn:coupling1}\\
    \sum_{\vec{x}}\pi(\vec{x}, x^{\prime})&=\omega^{\prime}(x^{\prime}),\label{eqn:coupling2}\\
    \sum_{\vec{y}\backslash\{y_i^{(j)}\}, y^{\prime}}\pi(\vec{y}, y^{\prime}|\vec{x}, x^{\prime})&=\omega(y^{(j)}_i|B_j, x_i),\label{eqn:coupling3}\\
    \sum_{\vec{y}}\pi(\vec{y}, y^{\prime}|\vec{x}, x^{\prime})&=\omega^{\prime}(y^{\prime}|x^{\prime}).\label{eqn:coupling4}
\end{align}
Specifically, let us denote the size $|S(x_i)|$ by $L$ and define $p_{n,m}\equiv 1/n(n-1)\cdots (n-m+1)$ be the inverse of falling factorial from $n$ to $n-m+1$, then we can explicitly write down the marginal probability:
\begin{align}
    \omega(x_i|A_i)&=\frac{\gamma^{|x_i|}(1-\gamma)^{m-|x_i|}}{\binom{K}{i}\gamma^i(1-\gamma)^{K-i}}=\frac{1}{\binom{K}{i}}\gamma^{|x_i|-i}(1-\gamma)^{m-K-|x_i|+i}, \\
    \omega^{\prime}(x^{\prime})&= \gamma^{|x^{\prime}|}(1-\gamma)^{m-K-|x^{\prime}|},\\
    \omega(y_i^{(0)}|B_0, x_i)&=p_{n-1, L}p_{n-L, |x_i|-L},\\
    \omega(y_i^{(1)}|B_1, x_i)&=\frac{1}{L}p_{n-1, L-1}p_{n-L, |x_i|-L}=\frac{1}{L}p_{n-1, |x_i|-1},\\
    \omega^{\prime}(y^{\prime})&=p_{n-1, |x^{\prime}|}.
\end{align}
%\rw{The definition of L. Otherwise will get lost.}

Suppose these requirements are satisfied, then one can rewrite $p, q$ in terms of the coupling distribution $\pi(\vec{x}, x^\prime, \vec{y}, y^{\prime})$. We start with rewriting $p$:
\begin{align}
    p&=\sum_{x,y}\omega(x,y)\mu_{x,y}=\sum_{x,y}\omega(x)\omega(y|x)\mu_{x,y}\\&=\sum_{x,y}\left(\sum_i\omega(x|A_i)\omega(A_i)\right)\left(\sum_{j}\omega(y|x,B_j)\omega(B_j|A)\right)\mu_{x,y}\\
    &=\sum_{i,j}\sum_{x,y}\omega(x
    |A_i)\omega(A_i)\omega(y|x, B_j)\omega(B_j|x)\mu_{x,y}
\end{align}
Now we for each index $i,j$, we can rename the dummy variable $x,y$ by $x_i, y_i^{(j)}$, 
\begin{align}
&\sum_{i,j}\sum_{x,y}\omega(x
    |A_i)\omega(A_i)\omega(y|x, B_j)\omega(B_j|x)\mu_{x,y}\\
    &=\sum_{i,j}\sum_{x_i, y^{(j)}_i}\omega(x_i
    |A_i)\omega(y_i^{(j)}|x_i, B_j)\omega(A_i)\omega(B_j|x_i)\mu_{x_i,y_i^{(j)}}\\
    &=\sum_{i,j}\sum_{x_i, y^{(j)}_i}\left(\sum_{\vec{x}\backslash\{x_i\}, x^{\prime}}\pi(\vec{x}, x^{\prime})\right)\left(\sum_{\vec{y}\backslash\{y^{(j)}_i\}, y^{\prime}}\pi(\vec{y}, y^{\prime}|\vec{x}, x^{\prime})\right)\omega(A_i)\omega(B_j|x_i)\mu_{x_i, y^{(j)}_i}\\
    &=\sum_{i,j}\sum_{\vec{x},x^{\prime}, \vec{y}, y^{\prime}}\pi(\vec{x}, x^{\prime})\pi(\vec{y}, y^{\prime}|\vec{x}, x^{\prime})\omega(A_i)\omega(B_j|x_i)\mu_{x_i, y^{(j)}_i}\\
    &=\sum_{\vec{x}, x^{\prime}, \vec{y}, y^{\prime}}\pi(\vec{x}, x^{\prime}, \vec{y}, y^{\prime})\sum_{i,j}\omega(A_i)\omega(B_j|x_i)\mu_{x_i, y^{(j)}_i}.
\end{align}
Similarly, we can rewrite $q$ in terms of $\pi$:
\begin{align}
    q=\sum_{x^{\prime}, y^{\prime}}\omega^{\prime}(x^{\prime}, y^{\prime})\mu_{x^{\prime}, y^{\prime}}&=\sum_{x^{\prime}, y^{\prime}}\omega^{\prime}(x^{\prime})\omega( y^{\prime}|x^{\prime})\mu_{x^{\prime}, y^{\prime}}\\&=\sum_{x^{\prime}, y^{\prime}}\left(\sum_{\vec{x}}\pi(\vec{x},x^{\prime})\right)\left(\sum_{\vec{y}}\pi(\vec{y}, y^{\prime}|\vec{x},x^{\prime})\right)\mu_{x^{\prime}, y^{\prime}}\\
    &=\sum_{\vec{x},x^{\prime}, \vec{y}, y^{\prime}}\pi(\vec{x}, x^{\prime}, \vec{y}, y^{\prime})\mu_{x^{\prime}, y^{\prime}}.
\end{align}
These imply that by convexity of $\Psi_{\alpha}(\cdot||\cdot)$, we have
\begin{align}
    \Psi_{\alpha}(p||q)&\le \sum_{\vec{x},x^{\prime}, \vec{y}, y^{\prime}}\pi(\vec{x},x^{\prime}, \vec{y}, y^{\prime})\Psi_{\alpha}\left(\sum_{i,j}\omega(A_i)\omega(B_j|x_i)\mu_{x_i, y^{(j)}_i}||\mu_{x^{\prime}, y^{\prime}}\right),\label{eqn:A_alpha}\\
    \Psi_{\alpha}(q||p)&\le \sum_{\vec{x},x^{\prime}, \vec{y}, y^{\prime}}\pi(\vec{x},x^{\prime}, \vec{y}, y^{\prime})\Psi_{\alpha}\left(\mu_{x^{\prime}, y^{\prime}}||\sum_{i,j}\omega(A_i)\omega(B_j|x_i)\mu_{x_i, y^{(j)}_i}\right)\label{eqn:B_alpha}.
\end{align}

\textbf{Construct coupling.} Now we are going to construct a specific $\pi$ to bound $\Psi_{\alpha}(p||q)$ and $\Psi_{\alpha}(q||p)$. The sampling process of $\pi(\vec{x},x^{\prime},\vec{y}, y^{\prime})$ goes as follows:

\begin{itemize}
    \item first sample $x_0\sim \omega(\cdot|A_0)$, and denote the size of $x_0$ by $L$;
    \item for each $i=1, 2,..., K$, sample $x_i$ by randomly drawing a size-$i$ subset of the removed edges and adding the subset to $x_0$;
    \item let $x^{\prime}=x_0$;
    \item sample $y^{(0)}_0\sim \omega(\cdot|B_0, x_0)$;
    \item then sample $y^{(1)}_0$ by randomly replacing one node in $y_0^{(0)}$ with the new node $n$;
    \item for each $i=1, ..., K$, 
    \begin{itemize}
        \item sample $y_i^{(0)}$ by randomly choosing $i$ nodes (not in $y^{(0)}_0$) to and adding them to $y^{(0)}_0$;
        \item sample $y_i^{(1)}$ by first randomly replacing one node in $y_0^{(0)}$ with the removed node $n$ (call this intermediate sample $\tilde{y}$) and then randomly choosing $i$ nodes (not in $\tilde{y}$ and not include the new node $n$) and adding to $\tilde{y}$.
    \end{itemize}
    \item let $y^{\prime}=y^{(0)}_0$.
\end{itemize}
Note that this particularly construction yields the following factorization:
\begin{align}
    \pi(\vec{x}, x^{\prime}, \vec{y}, y^{\prime})&=\pi(\vec{x},x^{\prime})\pi(\vec{y}, y^{\prime}|\vec{x}, x^{\prime}),\\
    \pi(\vec{x}, x^{\prime})&=\pi(x_0)\pi(x^{\prime}|x_0)\prod_{i=1}^K\pi(x_i|x_0),\\
    \pi(\vec{y}, y^{\prime}|\vec{x},x^{\prime})&=\pi(y^{(0)}_0)\pi(y^{\prime}|y^{(0)}_0)\pi(y^{(1)}_0|y^{(0)}_0)\prod_{i=1}^K\pi(y^{(0)}_i|y^{(0)}_0)\pi(y^{(1)}_i|y^{(0)}_0).
\end{align}
Now let's verify the coupling constraints Eq.\eqref{eqn:coupling1} to \eqref{eqn:coupling4}. For Eq.\eqref{eqn:coupling1}, when $i=0$, we have
$
     \sum_{\vec{x}\backslash \{x_0\}, x^{\prime}
    }\pi(\vec{x}, x^{\prime})=\pi(x_0)=\omega(x_0|A_0)
$ as desired. When $i\ge 1$, 
\begin{equation}
    \sum_{\vec{x}\backslash \{x_i\}, x^{\prime}
    }\pi(\vec{x}, x^{\prime})=\sum_{x_0}\pi(x_0)\pi(x_i|x_0)=\gamma^L(1-\gamma)^{m-K-L}\cdot \frac{1}{\binom{K}{i}}=\omega(x_i|A_i),
\end{equation}
where we use the basic fact from our construction that $\pi(x_0)=\omega(x_0|A_0)=\gamma^L(1-\gamma)^{m-K-L}$, $\pi(x_i|x_0)=1/\binom{K}{i}$, and $\omega(x_i|A_i)=\gamma^{L+i}(1-\gamma)^{m-L-i}/(\binom{K}{i}\gamma^i(1-\gamma)^{K-i})=\gamma^L(1-\gamma)^{m-K-L}/\binom{K}{i}$. Thus Eqn.\eqref{eqn:coupling1} holds. For Eqn.\eqref{eqn:coupling2}, clearly $\sum_{\vec{x}}\pi(\vec{x}, x^{\prime})=\sum_{x_0}\pi(x_0)\pi(x^{\prime}|x_0)=\sum_{x_0}\pi(x_0)\delta(x^{\prime}-x_0)=\pi(x^{\prime})=\omega(x^{\prime}|A_0)=\gamma^L(1-\gamma)^{m-K-L}=\omega^{\prime}(x^{\prime})$. So Eqn.\eqref{eqn:coupling2} holds as well.

For Eqn.\eqref{eqn:coupling3}, when $i=j=0$, we have $ \sum_{\vec{y}\backslash\{y_0^{(0)}\}, y^{\prime}}\pi(\vec{y}, y^{\prime}|\vec{x}, x^{\prime})=\pi(y_{0}^{(0)})=\omega(y^{(0)}_0|B_0, x_0)$ by our construction. When $i\neq 0, j=0$, recall that $p_{n,m}\equiv 1/n(n-1)\cdots (n-m+1)$ is the inverse of falling factorial from $n$ to $n-m+1$, then
\begin{equation}
\begin{split}
    \sum_{\vec{y}\backslash\{y_i^{(0)}\}, y^{\prime}}\pi(\vec{y}, y^{\prime}|\vec{x}, x^{\prime})=\sum_{y_0^{(0)}}\pi(y_0^{(0)}|x_0)\pi(y^{(0)}_i|y^{(0)}_0, x_i)&=p_{n-1, L}p_{n-L, i}=p_{n-1, L}p_{n-L, |x_i|-L}\\&=\omega(y_i^{(0)}|B_0, x_i).
\end{split}
\end{equation}
Here we use the fact that $|x_i|=|x_0|+i=L+i$.
When $i\neq 0, j=1$, we have 
\begin{equation}
\begin{split}
    \sum_{\vec{y}\backslash\{y_i^{(1)}\}, y^{\prime}}\pi(\vec{y}, y^{\prime}|\vec{x}, x^{\prime})&=\sum_{y_{0}^{(0)}}\pi(y_0^{(0)}|x_0)\pi(y_i^{(1)}|y_0^{(0)}, x_i)=(n-L)\cdot p_{n-1, L}\frac{1}{L}p_{n-L, i}\\&=\frac{1}{L}p_{n-1, L+i-1}=\frac{1}{L}p_{n-1, |x_i|-1}=\omega(y_i^{(1)}|B_1, x_i).
\end{split}
\end{equation}
Finally, $\sum_{\vec{y}}\pi(\vec{y}, y^{\prime}|\vec{x}, x^{\prime})=\omega(y^{\prime}|A_0, x_0)=p_{n-1, L}=p_{n-1, |x^{\prime}|}$. Therefore, the constructed $\pi$ is a valid coupling and thus Eqns\eqref{eqn:A_alpha}, \eqref{eqn:B_alpha} hold.

\textbf{Derive explicit bound.} Now we are going to derive bounds for Eqns.\eqref{eqn:A_alpha} and $\eqref{eqn:B_alpha}$ by plugging $\pi$ into it. Note that coefficients $\{\omega(A_i)\omega(B_j|x_i)\}_{i,j}$ satisfy $\sum_{i,j}\omega(A_i)\omega(B_j|x_i)=1$ and thus $\sum_{i,j\neq 0}\omega(A_i)\omega(B_j|x_i)=1-\omega(A_0)\omega(B_0|x_0)$. Therefore, we can write
\begin{align}
    \sum_{i,j}&\omega(A_i)\omega(B_j|x_i)\mu_{x_i, y^{(j)}_i}=\omega(A_0)\omega(B_0|x_0)\mu_{x_0, y^{(0)}_0}+\sum_{i,j\neq 0}\omega(A_i)\omega(B_j|x_i)\mu_{x_i, y^{(j)}_i}\\
    &=\sum_{i,j\neq 0}\frac{\omega(A_i)\omega(B_j|x_i)}{1-\omega(A_0)\omega(B_0|x_0)}\left(\omega(A_0)\omega(B_0|x_0)\cdot\mu_{x_0, y_0^{(0)}}+(1-\omega(A_0)\omega(B_0|x_0))\cdot\mu_{x_i, y^{(j)}_i}\right).
\end{align}
Notice that $\{\omega(A_i)\omega(B_j|x_i)/(1-\omega(A_0)\omega(B_0|x_0))\}_{i,j\neq 0}$ are coefficients of convex combination. Thus by Jensen's inequality, Eqn.\eqref{eqn:A_alpha} becomes
\begin{equation}
\begin{split}
%\resizebox{1\textwidth}{!}{$
    \Psi_{\alpha}(p||q)\le &\sum_{\vec{x},x^{\prime}, \vec{y}, y^{\prime}, i,j\neq 0}\pi(\vec{x}, x^{\prime}, \vec{y}, y^{\prime})\frac{\omega(A_i)\omega(B_j|x_i)}{1-\omega(A_0)\omega(B_0|x_0)}\\&\cdot\Psi_{\alpha}\left(\omega(A_0)\omega(B_0|x_0)\cdot\mu_{x_0, y_0^{(0)}}+(1-\omega(A_0)\omega(B_0|x_0))\cdot\mu_{x_i, y^{(j)}_i}||\mu_{x^{\prime},y^{\prime}}\right),
\end{split}
    \end{equation}
and similarly for $\Psi_{\alpha}(q||p)$. It is worthy noticing that this Jensen's inequality is generally not tight. Fortunately, if we assume the sensitivity of Gaussian mechanism $\mu_{x_i, y_i^{(j)}}$ is a constant regardless of number of removals $i$ and replacements $j$, then the Jensen'inequality is indeed tight. 

The coupling $\pi$ ensures that $x_0=x^{\prime}, y_0^{(0)}=y^{\prime}$ and $(x_i, y^{(j)}_i)$ is ``neighboring'' to $(x_0, y_0^{(0)})$, i.e., the difference of the mean of $\mu_{x_0, y^{(0)}_0}$ and of $\mu_{x_i, y_i^{(j)}}$ is bounded. On the other hand, the coefficient $\omega(B_0|x_0)=1-|x_0|/n$ depends on the size of $x_0$, which is Binomial distributed $|x_0|\sim\text{Bin}(m ,\gamma)$. That means the divergence $\Psi_{\alpha}(\cdot||\cdot)$ varies with different $|x_0|$. To reflect this dependency, we write the sum conditioning on different $|x_0|=\ell$:
\begin{equation}
\begin{split}
    \Psi_{\alpha}(p||q)&\le\\ &\sum_{\ell}\text{Bin}(\ell|m,\gamma)\sum_{\vec{x},x^{\prime}, \vec{y}, y^{\prime}, i,j\neq 0}\pi(\vec{x}, x^{\prime}, \vec{y}, y^{\prime}||x_0|=\ell)\frac{\omega(A_i)\omega(B_j|x_i)}{1-\omega(A_0)\omega(B_0|x_0)}\\&\cdot\Psi_{\alpha}\left(\omega(A_0)\omega(B_0|x_0)\cdot\mu_{x_0, y_0^{(0)}}+(1-\omega(A_0)\omega(B_0|x_0))\cdot\mu_{x_i, y^{(j)}_i}||\mu_{x^{\prime},y^{\prime}}\right).
\end{split}
    \end{equation}
Now to get rid of the coefficients $\pi$, we take supremum over all possible neighboring $x_0\sim x_i$ and $y_0^{(0)}\sim y_i^{(j)}$, subject to $|x_0|=\ell$. That is, we are interested in
\begin{equation}
\begin{split}
   A_\alpha(i,j,\ell)\equiv \sup_{\vec{x},\vec{y}, |x_0|=\ell}\Psi_{\alpha}\left(\omega(A_0)\omega(B_0|x_0)\cdot\mu_{x_0, y_0^{(0)}}+(1-\omega(A_0)\omega(B_0|x_0))\cdot\mu_{x_i, y^{(j)}_i}||\mu_{x_0,y_0^{(0)}}\right).
\end{split}
    \end{equation}
Since $\mu_{x_i, y_i^{(j)}}$ is isotropic Gaussian, $\Psi_{\alpha}(\cdot||\cdot)$ is rotational invariant, and the supremum reduces to univariate Gaussian:
\begin{equation}
\begin{split}
    A_\alpha&(i,j,\ell)\\&= \Psi_{\alpha}\left(\omega(A_0)\omega(B_0|x_0)\cdot N(0, \sigma^2)+(1-\omega(A_0)\omega(B_0|x_0))\cdot N(L_2(i, j, \ell), \sigma^2)||N(0,\sigma^2)\right),
\end{split}
    \end{equation}
where $L_2(i,j, \ell)$ is the L2-sensitivity of neighboring $x_0\sim x_i, y_0^{(0)}\sim y_i^{(j)}$ under $|x_0|=\ell$. Note that Theorem~\ref{thm:amp_rdp} assumes $L_2$ is a constant $C$ (WLGO we further take $L_2=C=1$) regardless of $i,j, \ell$ if we use adaptive clipping. As a result, $A_{\alpha}(i,j, \ell)=A_{\alpha}(\ell)$ is only a function of $\ell$ (as $\omega(B_0|x_0)=1-\ell/n$), and
\begin{equation}
\begin{split}
    \Psi_{\alpha}(p||q)&\le \sum_{\ell}\text{Bin}(\ell|m,\gamma)\sum_{\vec{x},x^{\prime}, \vec{y}, y^{\prime}, i,j\neq 0}\pi(\vec{x}, x^{\prime}, \vec{y}, y^{\prime}||x_0|=\ell)\frac{\omega(A_i)\omega(B_j|x_i)}{1-\omega(A_0)\omega(B_0|x_0)}A_{\alpha}(\ell)\\
    &=\sum_{\ell}\text{Bin}(\ell|m,\gamma)A_{\alpha}(\ell).
\end{split}
    \end{equation}

For another side $\Psi_{\alpha}(q||p)$, we can follow the same procedure and use the fact $\Psi_{\alpha}(N(0, \sigma^2)||(1-q)N(0, \sigma^2)+qN(c, \sigma^2))\le \Psi_{\alpha}((1-q)N(0, \sigma^2)+qN(c, \sigma^2)||N(0, \sigma^2))$ \cite{mironov2019r}. This will lead to the same upper bound $\Psi_{\alpha}(q||p)\le \sum_{\ell}\text{Bin}(\ell|m,\gamma)A_{\alpha}(\ell)$. Therefore, the amplified $\epsilon(\alpha)$ satisfies
\begin{equation}
    \epsilon(\alpha)\le \frac{1}{\alpha-1}\log\left\{\sum_{\ell}\text{Bin}(\ell|m,\gamma)A_{\alpha}(\ell)\right\}
\end{equation}

The only thing left is to calculate $A_{\alpha}(\ell)$. By defining $\Gamma_\ell\equiv 1-\omega(A_0)\omega(B_0|x_0)=1-(1-\gamma)^K(1-\ell/n)$, we can write
\begin{equation}
   A_{\alpha}(\ell)=\Psi_{\alpha}((1-\Gamma_\ell)\mathcal{N}(0,\sigma^2)+\Gamma_{\ell}\mathcal{N}(1, \sigma^2)||\mathcal{N}(0, \sigma^2)),
\end{equation}
which is well studied in previous literature and can be numerically calculated up to any desired precision. See \cite{mironov2019r}. Finally, to generalize the case where each positive sample can be paired with $k_{\mathrm{neg}}>1$ negative samples, we can think of first copy positive samples $k_{\mathrm{neg}}$ times and follow the same derivation above. It is thus equivalent to replace $\ell$ by $k_{\mathrm{neg}}\ell$ in $\Gamma_{\ell}$, with others remaining the same.
\end{proof}

\section{Amplification Bound for Linear Sensitivity (Standard Clipping)}
\label{appdx:linear_sen}
In this section, we derive privacy bounds for cardinality-dependent sampling as stated in Theorem~\ref{thm:amp_rdp}, but with standard clipping in place of adaptive clipping. Recall that if we apply standard gradient clipping, the sensitivity is $C\cdot (|\mathcal{B}_+(u^*)|+2\cdot |\mathcal{B}_-(u^*)|)$ according to Eq.\eqref{eq:sensitivity}.
\begin{theorem}
 Consider a composite dataset $D=(D^{(1)}, D^{(2)})$ of size $|D^{(1)}|=m,|D^{(2)}|=n$. Consider a cardinality-dependent coupled sampling $S(D)$: (1) $\mathcal{B}^{
(1)}\sim \mathrm{Poisson}_{\gamma}(D^{(1)})$ is Poisson sampling with rate $\gamma$; (2) $\mathcal{B}^{(2)}\sim \mathrm{Sample_{WOR}}(D^{(2)};k_{\mathrm{neg}}\cdot|\mathcal{B}^{(1)}|)$  is sampling without replacement of size $k_{\mathrm{neg}}\cdot|\mathcal{B}^{(1)}|$. Suppose a function $f$ satisfies 
\begin{align}
  &\norm{f(\mathcal{B}^{(1)},\mathcal{B}^{(2)})-f(\mathcal{B}^{(1), \prime},\mathcal{B}^{(2),\prime})}  \le C\cdot (k+2), \quad \text{if } \mathcal{B}^{(1)}\sim_{\Delta,k}\mathcal{B}^{(1),\prime}, \mathcal{B}^{(2)}\sim_{r}\mathcal{B}^{(2), \prime},\\
  &\norm{f(\mathcal{B}^{(1)},\mathcal{B}^{(2)})-f(\mathcal{B}^{(1), \prime},\mathcal{B}^{(2)})}  \le C\cdot k, \quad \text{if } \mathcal{B}^{(1)}\sim_{\Delta,k}\mathcal{B}^{(1),\prime}
\end{align}
for any $k\le K$. Then for any $\alpha\ge 1$, Gaussian mechanism $f(S(D))+\mathcal{N}(0, \sigma^2C^2I)$ is $(\alpha, \varepsilon(\alpha))$-RDP with 

% \begin{equation}
%\label{eq:rdp_amp}
% \begin{split}
%     %\varepsilon(\alpha)=\frac{1}{\alpha-1}\log \mathbb{E}_{\ell\sim\mathrm{Bin}(m, \gamma)}A(\Gamma_\ell).
%     \varepsilon(\alpha)=\frac{1}{\alpha-1} \max&\left\{\log\mathbb{E}_{\ell\sim\mathrm{Bin}(m, \gamma)} \Psi_{\alpha}\left(\sum_{i=0}^K\sum_{j=0}^1\omega_{i,j}(\ell)\cdot\mathcal{N}(i+2j,\sigma^2)\Vert\mathcal{N}(0, \sigma^2)\right)\right., \\
%     &\log\mathbb{E}_{\ell\sim\mathrm{Bin}(m, \gamma)} \Psi_{\alpha}\left(\mathcal{N}(0, \sigma^2)\Vert\sum_{i=0}^K\sum_{j=0}^1\omega_{i,j}(\ell)\cdot\mathcal{N}(i+2j,\sigma^2)\right) \left.\right\Bigg\}
% \end{split}
% \end{equation}

\begin{equation}
\begin{split}
    \varepsilon(\alpha)=\frac{1}{\alpha-1} \max&\Bigg \{\log\mathbb{E}_{\ell\sim\mathrm{Bin}(m, \gamma)} \Psi_{\alpha}\left(\sum_{i=0}^K\sum_{j=0}^1\omega_{i,j}(\ell)\cdot\mathcal{N}(i+2j,\sigma^2)\Big\|\mathcal{N}(0, \sigma^2)\right), \\
    &\log\mathbb{E}_{\ell\sim\mathrm{Bin}(m, \gamma)} \Psi_{\alpha}\left(\mathcal{N}(0, \sigma^2)\Big\|\sum_{i=0}^K\sum_{j=0}^1\omega_{i,j}(\ell)\cdot\mathcal{N}(i+2j,\sigma^2)\right)\Bigg\}
\end{split}
\end{equation}

Here $\mathrm{Bin}(m, \gamma)$ stands for the Binomial distribution, $\omega_{i,0}(\ell):={K\choose i}\gamma^i(1-\gamma)^{K-i}(1-\ell \cdot k_{\mathrm{neg}}/n)$ and $\omega_{i,1}(\ell):={K\choose i}\gamma^i(1-\gamma)^{K-i}\ell\cdot k_{\mathrm{neg}} /n$, and $\Psi_{\alpha}(P\Vert Q)\coloneqq\mathbb{E}_{x\sim Q}(P(x)/Q(x))^{\alpha}$ for two distributions $P, Q$.
\label{thm:amp_rdp2}
\end{theorem}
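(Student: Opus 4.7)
The plan is to reuse the conditional coupling $\pi(\vec{x}, x', \vec{y}, y')$ constructed in Appendix~\ref{appx:proof2} verbatim, since its definition depends only on the marginal sampling laws (Poisson subsampling on $D^{(1)}$, WOR on $D^{(2)}$) and not on the function $f$ or its sensitivity profile. Under this coupling, $p$ and $q$ decompose as
\begin{equation*}
p = \sum_{\vec{x}, x', \vec{y}, y'} \pi(\vec{x}, x', \vec{y}, y') \sum_{i, j}\omega(A_i)\omega(B_j \mid x_i)\, \mu_{x_i, y_i^{(j)}}, \qquad q = \sum_{\vec{x}, x', \vec{y}, y'} \pi(\vec{x}, x', \vec{y}, y')\, \mu_{x', y'},
\end{equation*}
exactly as in the adaptive-clipping proof.

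Next, by joint convexity of $\Psi_\alpha$ in both arguments, I would obtain
\begin{align*}
\Psi_\alpha(p \Vert q) & \le \mathbb{E}_\pi \, \Psi_\alpha\!\left(\sum_{i,j} \omega(A_i)\omega(B_j \mid x_i)\, \mu_{x_i, y_i^{(j)}} \,\Big\Vert\, \mu_{x', y'}\right), \\
\Psi_\alpha(q \Vert p) & \le \mathbb{E}_\pi \, \Psi_\alpha\!\left(\mu_{x', y'} \,\Big\Vert\, \sum_{i,j} \omega(A_i)\omega(B_j \mid x_i)\, \mu_{x_i, y_i^{(j)}}\right).
\end{align*}
Crucially, I would \emph{not} apply a second Jensen's step to contract the $(i,j)\neq(0,0)$ components into a single Gaussian: that contraction was tight in Theorem~\ref{thm:amp_rdp} only because the L2-shift was a constant $C$ across all off-diagonal $(i,j)$, which fails under standard clipping.

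Conditioning on $|x_0|=\ell$ and taking the supremum over coupling-compatible neighboring configurations, I would then invoke: (i) the assumed sensitivity bound $\|f(x_i, y_i^{(j)}) - f(x_0, y_0^{(0)})\| \le C(i+2j)$, obtained by combining the two hypotheses (removals plus at most one replacement); (ii) rotational invariance of the isotropic Gaussian and of $\Psi_\alpha$; (iii) a worst-case alignment in which the $(K{+}1)\times 2$ mean-shift vectors are placed collinearly along a single axis with magnitudes $C(i+2j)$. After rescaling by $C$, the inner divergence reduces to the univariate form $\Psi_\alpha\!\left(\sum_{i,j}\omega_{i,j}(\ell)\,\mathcal{N}(i{+}2j,\sigma^2)\,\big\Vert\,\mathcal{N}(0,\sigma^2)\right)$, with $\omega_{i,j}(\ell)=\omega(A_i)\omega(B_j\mid |x_0|=\ell)$ matching the statement (since WOR draws $\ell k_{\mathrm{neg}}$ items from $D^{(2)}$, giving $\omega(B_1\mid |x_0|=\ell)=\ell k_{\mathrm{neg}}/n$). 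The same steps applied to the reverse direction yield the second candidate. Taking expectation over $\ell \sim \mathrm{Bin}(m,\gamma)$ (the law of $|\mathcal{B}^{(1)}|$ under Poisson sampling), dividing by $\alpha-1$, and taking the max over the two directions gives the claimed $\varepsilon(\alpha)$.

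The main obstacle is the worst-case alignment step: one must argue that for each fixed $\ell$ there exists a single neighboring configuration making \emph{all} of the $(K{+}1)\times 2$ shifts simultaneously worst-case and collinear. This is plausible because the removals in $\mathcal{B}^{(1)}$ are nested in $i$ (so their per-sample gradient contributions can be chosen collinear under the standard-clipping norm bound) and the $\mathcal{B}^{(2)}$-replacement is a single event whose contribution can be aligned with the same axis; nevertheless, a careful argument is needed to show the supremum is jointly attained. A secondary subtlety is that, unlike the two-Gaussian setting of Theorem~\ref{thm:amp_rdp} where Mironov et al.'s one-sided comparison made both directions collapse into a single bound, here neither direction of the mixture-vs.-Gaussian divergence dominates the other in general, which is exactly why the final bound retains an explicit $\max$ over the two divergences.
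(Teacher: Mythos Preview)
Your proposal is correct and follows essentially the same route as the paper: reuse the coupling from Appendix~\ref{appx:proof2}, apply joint convexity once (without the second Jensen contraction), condition on $|x_0|=\ell$, take the worst case over coupling-compatible configurations, reduce to a univariate Gaussian mixture, average over $\ell\sim\mathrm{Bin}(m,\gamma)$, and retain the explicit $\max$ over both directions. The ``main obstacle'' you flag---showing that the supremum is attained by placing all $(K{+}1)\times 2$ mean shifts collinearly with magnitudes $i+2j$---is not argued from scratch in the paper either; it is dispatched by citing Theorem~3.8 of~\cite{schuchardt2024unified}, which gives precisely this worst-case reduction for Gaussian mixtures under pairwise $\ell_2$ sensitivity constraints of the form $\|\mu(x_i,y_i^{(j)})-\mu(x_{i'},y_{i'}^{(j')})\|\le |i-i'|+2|j-j'|$.
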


\begin{proof}
    The proof first follows the exact same coupling construction in the proof of Theorem~\ref{thm:amp_rdp} in Appendix~\ref{appx:proof2}, up to coupling bounds Equations~\ref{eqn:A_alpha} and \ref{eqn:B_alpha}:
    \begin{align}
    \Psi_{\alpha}(p||q)&\le \sum_{\vec{x},x^{\prime}, \vec{y}, y^{\prime}}\pi(\vec{x},x^{\prime}, \vec{y}, y^{\prime})\Psi_{\alpha}\left(\sum_{i,j}\omega(A_i)\omega(B_j|x_i)\mu_{x_i, y^{(j)}_i}||\mu_{x^{\prime}, y^{\prime}}\right),\\
    \Psi_{\alpha}(q||p)&\le \sum_{\vec{x},x^{\prime}, \vec{y}, y^{\prime}}\pi(\vec{x},x^{\prime}, \vec{y}, y^{\prime})\Psi_{\alpha}\left(\mu_{x^{\prime}, y^{\prime}}||\sum_{i,j}\omega(A_i)\omega(B_j|x_i)\mu_{x_i, y^{(j)}_i}\right).
\end{align}
Again, we first condition on $|x_0|=\ell$ and obtain (we take $\Psi_{\alpha}(p||q)$ for example, similar holds for $\Psi_{\alpha}(q||p)$)
\begin{equation}
    \Psi_{\alpha}(p||q)\le \sum_{\ell}\mathrm{Bin}(\ell|m,\gamma)\sum_{\vec{x}^{\prime},x^{\prime}, \vec{y},y^{\prime}}\pi(\vec{x},x^{\prime},\vec{y},y^{\prime}||x_0|=\ell)\Psi_{\alpha}\left(\sum_{i,j}\omega(A_i)\omega(B_j|x_i)\mu_{x_i, y^{(j)}_i}||\mu_{x^{\prime}, y^{\prime}}\right).
\end{equation}
Recall that in the coupling $\pi$, $\pi(\vec{x},x^{\prime}, \vec{y}, y^{\prime})\neq 0$ if and only if $x^{\prime}=x_0, y^{\prime}=y_)$, $x_i$ has $i$ extra samples than $x_0$, $y_i^{(0)}$ has no removal node $n$ and $y_i^{(1)}$ has one removal node $n$. We can take the worst case and obtain
\begin{align}
    \Psi_{\alpha}(p||q)   \le \sum_{\ell}\mathrm{Bin}(\ell|m,\gamma)\sup_{\vec{x},x^{\prime}, \vec{y},y^{\prime}:\pi(\vec{x},x^{\prime}, \vec{y},y^{\prime}||x_0|=\ell)\neq 0}\Psi_{\alpha}\left(\sum_{i,j}\omega(A_i)\omega(B_j|\ell)\mu_{x_i, y^{(j)}_i}\Vert\mu_{x^{\prime}, y^{\prime}}\right).
\end{align}
here note that $\omega(B_j|x_i)$ only depends on the size of $S(x_i)$, i.e., the subset of $x_i$ without the removal node $n$. We have $|S(x_i)|=|x_0|=\ell$ due to our construction of coupling $\pi$. Given we are using Gaussian mechanism, that is $\mu_{x_i, y_i^{(j)}}=\mathcal{N}(\mu(x_i,y_i^{(j)}), \sigma^2\mathrm{I})$ with sensitivity $\norm{\mu(x_i,y_i^{(j)}), \mu(x_{i^{\prime}},y_{i^{\prime}}^{(j^{\prime})})}\le |i-i^{\prime}|+2\cdot |j-j^{\prime}|$, the worst case is achieved by univariate Gaussian (Theorem 3.8,~\citep{schuchardt2024unified}). Therefore, 
\begin{align}
    \Psi_{\alpha}(p||q)&\le\sum_{\ell}\mathrm{Bin}(\ell|m,\gamma) \Psi_{\alpha}\left(\sum_{i,j}\omega(A_i)\omega(B_j|\ell)\cdot\mathcal{N}(i+2j,\sigma^2)\Vert\mathcal{N}(0,\sigma^2)\right),\\
      \Psi_{\alpha}(q||p)&\le\sum_{\ell}\mathrm{Bin}(\ell|m,\gamma) \Psi_{\alpha}\left(\mathcal{N}(0,\sigma^2)\Vert 
 \sum_{i,j}\omega(A_i)\omega(B_j|\ell)\cdot\mathcal{N}(i+2j,\sigma^2)\right).
\end{align}
Finally, recall that $\omega(A_i)={K\choose i}\gamma^i(1-\gamma)^{K-i}$ and $\omega(B_0|\ell)=1-\ell/n$ and $\omega(B_1|\ell)=\ell/n$. For $k_{\mathrm{neg}}>1$, we can replace $\ell$ by $k_{\mathrm{neg}}\cdot \ell$, as we argued in Appendix~\ref{appx:proof2}. This finishes the proof.
\end{proof}

\begin{figure}[t!]
    \centering
    \includegraphics[width=1\linewidth]{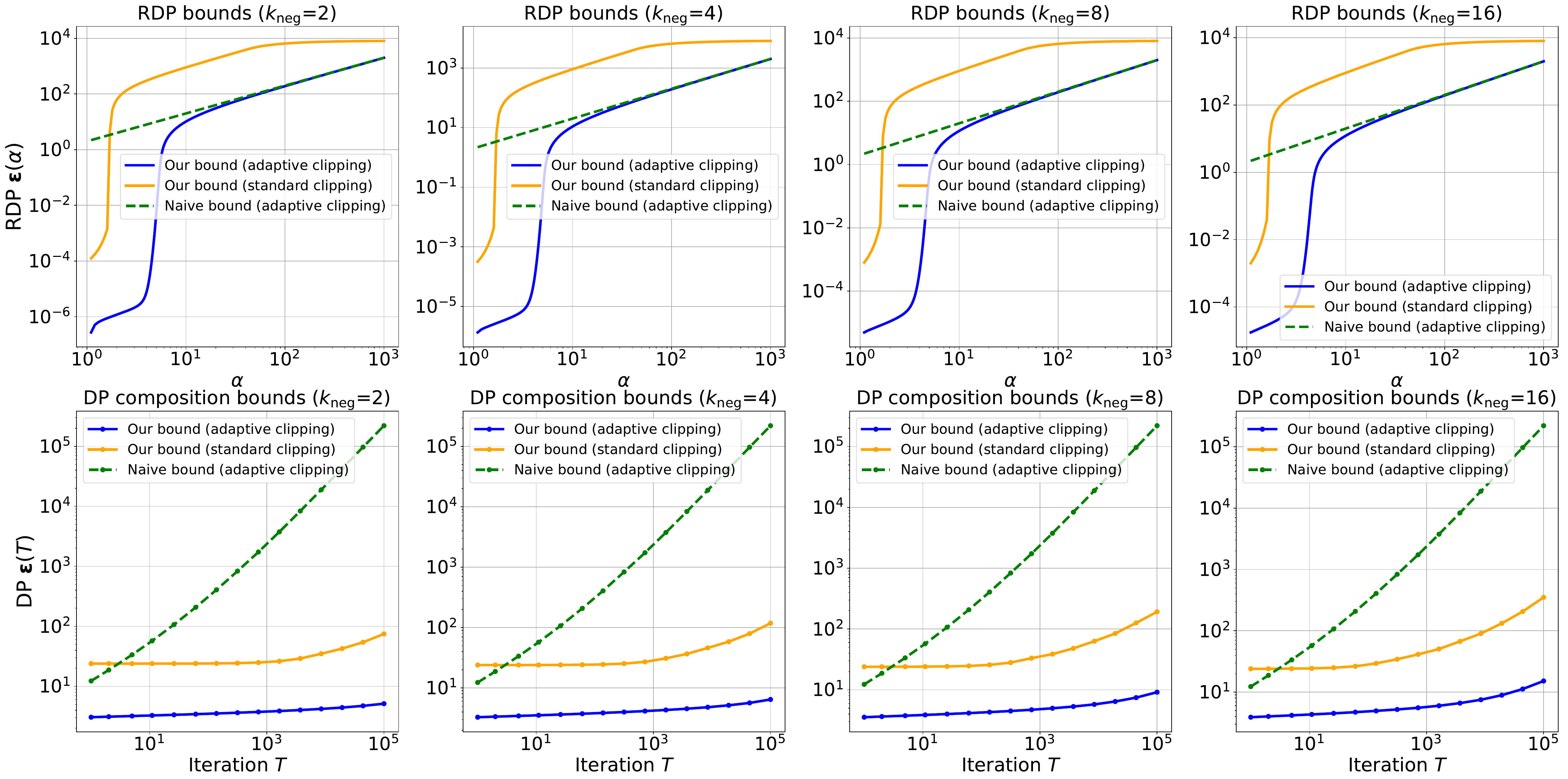}
    \caption{Comparison of per-iteration RDP bound $\varepsilon(\alpha)$ (first row) and DP composition bound $\varepsilon_{\mathrm{DP}}(T)$ over $T$ iterations (second figures), under different number of negative edges per positive $k_{\mathrm{neg}}$. Our bound (adaptive clipping) refers to the bound in Theorem~\ref{thm:amp_rdp}. Our bound (standard clipping) refers to the similar amplification bound with standard clipping in place of adaptive clipping. Naive bound is simply the Gaussian mechanism RDP bound $\varepsilon(\alpha)=\alpha/\sigma^2$.}
    \label{fig:bounds_num_neg}
\end{figure}
\begin{figure}[t!]
    \centering
    \includegraphics[width=1\linewidth]{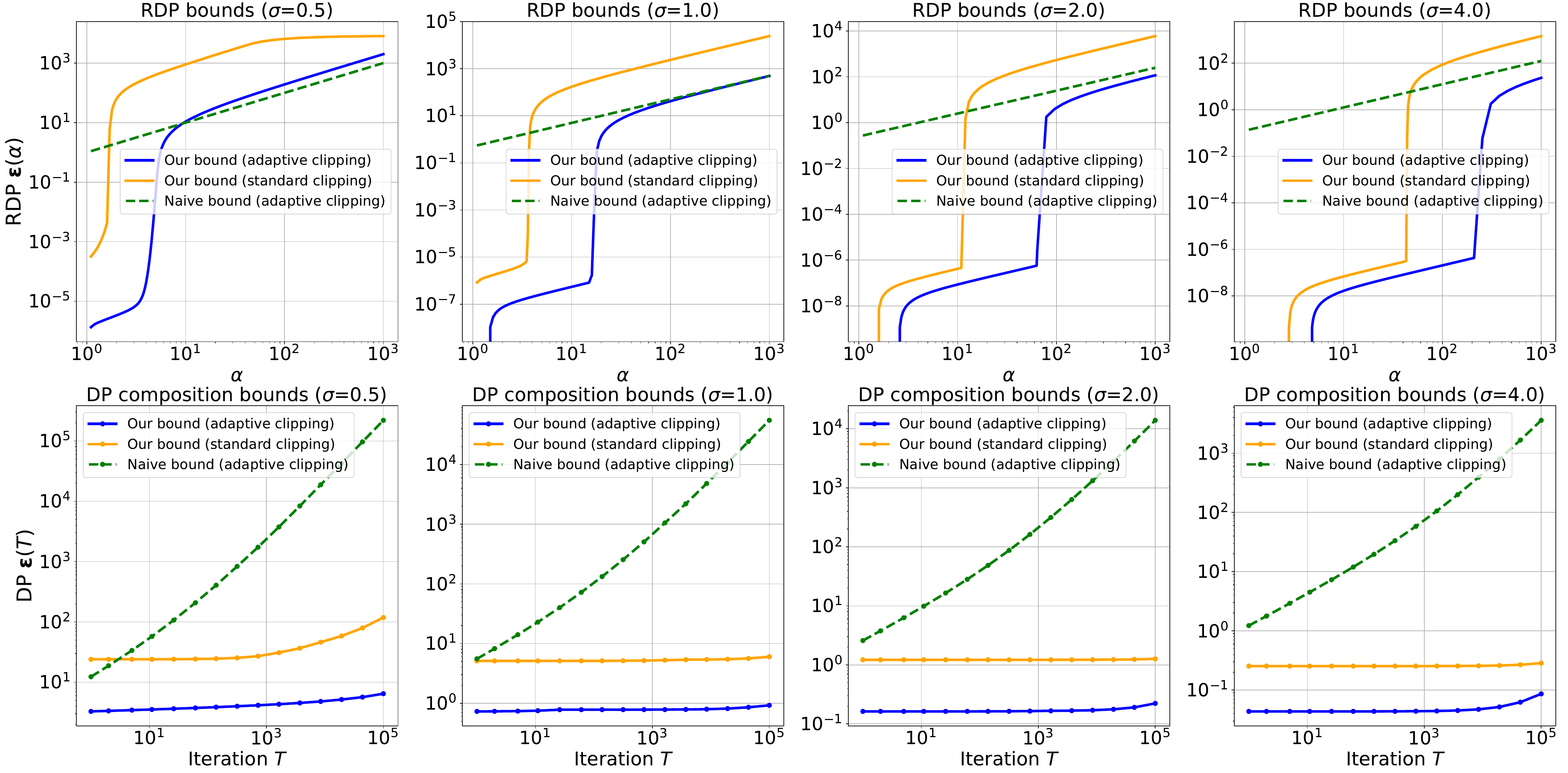}
    \caption{Comparison of per-iteration RDP bound $\varepsilon(\alpha)$ (first row) and DP composition bound $\varepsilon_{\mathrm{DP}}(T)$ over $T$ iterations (second figures), under different noise level $\sigma$. Our bound (adaptive clipping) refers to the bound in Theorem~\ref{thm:amp_rdp}. Our bound (standard clipping) refers to the similar amplification bound with standard clipping in place of adaptive clipping. Naive bound is simply the Gaussian mechanism RDP bound $\varepsilon(\alpha)=\alpha/\sigma^2$.}
    \label{fig:bounds_sigma}
\end{figure}

\section{Numerical Results of RDP and DP Bounds}
\label{appdx:numerical}
We provide more results of comparison of per-iteration RDP and DP composite bounds over $T$ iterations (Figures~\ref{fig:bounds_num_neg} and \ref{fig:bounds_sigma}) at different combinations of parameters. Again, by default we have: number of nodes $n=10^{6}$, number of edges $m=5\times 10^{6}$, capped node degree $K=5$, sampling rate $\gamma=10^{-5}$, Gaussian noise levels $\sigma=0.5$ and number of negative edges per positive edge $k_{\mathrm{neg}}=4$. We vary one of them while keeping others unchanged. To obtain composite  DP bounds, the composite RDP parameters are translated into DP parameters using the following formula.
\begin{theorem}[\citep{balle2020hypothesis}] If an algorithm $M$ is $(\alpha, \varepsilon)$-RDP, then it is $(\varepsilon+\log((\alpha-1)/\alpha)-(\log\delta +\log \alpha)/(\alpha-1), \delta)$-DP for any $0<\delta< 1$.
\label{thm:translate}
\end{theorem}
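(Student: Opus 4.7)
The plan is to convert the $(\alpha,\varepsilon)$-RDP moment bound into a $(\varepsilon',\delta)$-DP guarantee by controlling the hockey-stick divergence $H_{e^{\varepsilon'}}(P\|Q) := \int (P(x) - e^{\varepsilon'}Q(x))_+\,dx$ for any pair of output distributions $P,Q$ on neighboring inputs. I would take as the target the classical reformulation that $(\varepsilon',\delta)$-DP is equivalent to $H_{e^{\varepsilon'}}(P\|Q)\le \delta$, since this turns the conversion into an inequality purely between two $f$-divergences (hockey-stick on one side, R\'enyi on the other) and thus removes any dependence on the event $A$ in the DP definition.

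First, I would decompose the output space according to the privacy loss random variable $L(x) := \log(P(x)/Q(x))$. For any threshold $t$, splitting any event $E$ into $E\cap\{L\le t\}$ and $E\cap\{L>t\}$ gives
\begin{equation*}
P(E)\;\leq\;e^{t}Q(E)\;+\;\mathbb{P}_P(L>t).
\end{equation*}
The RDP hypothesis $\mathbb{E}_P[e^{(\alpha-1)L}]\le e^{(\alpha-1)\varepsilon}$ together with Markov's inequality yields $\mathbb{P}_P(L>t)\le e^{(\alpha-1)(\varepsilon-t)}$. Setting the tail equal to $\delta$ here would recover only the looser Canonne--Kamath--Steinke bound $\varepsilon_{\text{CKS}}' = \varepsilon + \log(1/\delta)/(\alpha-1)$, so this step alone is not enough.

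To get the sharper form carrying the additional correction $\log((\alpha-1)/\alpha) - \log\alpha/(\alpha-1)$, I would replace the crude tail inequality with a tight variational argument. Writing $W=P/Q$ and $c=e^{\varepsilon'}$, the hockey-stick divergence equals $\mathbb{E}_Q[(W-c)_+]$, and the problem reduces to
\begin{equation*}
\sup_{W\ge 0}\;\mathbb{E}_Q[(W-c)_+]\quad\text{s.t.}\quad\mathbb{E}_Q[W]=1,\;\mathbb{E}_Q[W^\alpha]\le e^{(\alpha-1)\varepsilon}.
\end{equation*}
By Lagrangian duality the extremizer is a thresholded likelihood ratio supported on at most two points, and the KKT conditions pin it down explicitly in terms of the Lagrange multipliers associated with the two constraints. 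Plugging the extremizer back gives the supremum in closed form; equating it to $\delta$ and inverting for $c = e^{\varepsilon'}$ produces exactly the stated expression.

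The main obstacle is tracking the constants through this optimization. Naive Markov is too lossy, and the precise correction terms $\log((\alpha-1)/\alpha)$ and $-\log\alpha/(\alpha-1)$ emerge only from the first-order optimality conditions of the KKT system (equivalently, from a Young/H\"older inequality with an optimally tuned exponent rather than the fixed exponent $\alpha$). The transcendental relation determining the optimum must be solved in closed form, and then the logarithms must be grouped correctly to match the stated $\varepsilon'$. Once that computation is done cleanly, the rest is bookkeeping: combine the bound on the hockey-stick divergence with the equivalence in the first paragraph to conclude $(\varepsilon',\delta)$-DP for every $\delta\in(0,1)$.
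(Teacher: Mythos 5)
The paper does not prove this statement; it is cited directly from \citep{balle2020hypothesis}, so there is no in-paper proof to compare against. Evaluating your argument on its own terms: the setup in your first two paragraphs is correct, including the reduction to bounding the hockey-stick divergence $H_{e^{\varepsilon'}}(P\|Q)=\mathbb{E}_Q[(W-c)_+]$ with $W=P/Q$ and $c=e^{\varepsilon'}$, and the observation that a raw threshold-plus-Markov argument only yields $\varepsilon'=\varepsilon+\log(1/\delta)/(\alpha-1)$.

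The gap is in your third step. You pose the optimization $\sup_{W\ge 0}\mathbb{E}_Q[(W-c)_+]$ subject to \emph{both} $\mathbb{E}_Q[W]=1$ and $\mathbb{E}_Q[W^\alpha]\le e^{(\alpha-1)\varepsilon}$, and claim that solving the KKT system and inverting produces exactly the stated $\varepsilon'$. That is not what happens: retaining the normalization constraint $\mathbb{E}_Q[W]=1$ gives a strictly tighter conversion whose stationarity conditions are transcendental (this is the tight-but-implicit conversion in Balle et al., not the closed-form one you are asked to prove), and the resulting relation among $c,\varepsilon,\alpha,\delta$ cannot be solved for $c$ in closed form. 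The stated formula corresponds to the \emph{relaxed} problem with the normalization constraint dropped, which is legitimate since dropping a constraint can only enlarge the supremum. Once you drop it, the argument collapses to a one-variable calculation: prove the pointwise inequality
\[
\Bigl(1-\frac{c}{w}\Bigr)_+ \;\le\; \kappa\,w^{\alpha-1}\qquad\text{with}\qquad \kappa=\frac{1}{\alpha}\Bigl(\frac{\alpha-1}{c\alpha}\Bigr)^{\alpha-1},
\]
where the optimal $\kappa$ is obtained by maximizing $(1-c/w)w^{1-\alpha}$ over $w>c$ (the maximizer is $w^*=c\alpha/(\alpha-1)$; this is ordinary single-variable calculus, not a KKT system), then write
\[
H_{e^{\varepsilon'}}(P\|Q)=\mathbb{E}_P\bigl[(1-c/W)_+\bigr]\le\kappa\,\mathbb{E}_P\bigl[W^{\alpha-1}\bigr]=\kappa\,\mathbb{E}_Q\bigl[W^{\alpha}\bigr]\le\kappa\,e^{(\alpha-1)\varepsilon},
\]
set the right side equal to $\delta$, and solve for $\log c=\varepsilon'$; this gives exactly the claimed expression. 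Your parenthetical about a ``Young/H\"older inequality with an optimally tuned exponent'' also misdescribes this step: the exponent $\alpha-1$ is fixed to match the R\'enyi order, and what is optimized is the multiplicative constant $\kappa$.
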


\section{More Experimental Details and Results}
\label{appx:exp}

%\textbf{Experimental details.} 

\textbf{Dataset.} There are four text-attributed graphs among two subdomain pairs, i.e., citation relations of papers written by authors from USA and China in Microsoft Academia Graphs (MAG, \cite{sinha2015overview}) and co-purchased relations in clothing and sports categories from Amazon Shopping Graphs (AMAZ, \cite{mcauley2015image}). The statistics of the real-world dataset are summarized in Table \ref{tab:dataset}. We followed the same procedure of preprocessing datasets and performing evaluation for all models as \cite{yin2024privately}. For degree capping, we adapted the approach from \cite{daigavane2021node} to drop edges and obtain the degree-capped training set.

\textbf{Experimental details.} Each LLM is fine-tuned through the proposed Algorithm \ref{alg:node_dp} using the InfoNCE loss. The overall privacy loss is tracked through Theorem \ref{thm:amp_rdp} and converted to $(\epsilon,\delta)$-DP, where $\delta$ is set to $1/|\mathcal E_{\text{train}}|$, the size of the relation set used for training after degree capping. Note that we actually adopt Adam optimizer to update model parameters in Algorithm \ref{alg:node_dp}, which has the same privacy guarantee of SGD-style update in the original form, due to the post-processing property of DP~\citep{kingma2014adam,abadi2016deep}.

\textbf{Compute Resources.} We use a server with two AMD EPYC 7543 CPUs, 512GB DRAM, and NVIDIA Quadro RTX 6000 (24GB) GPUs for experiments of BERT-based models and A100 (80GB) GPUs for Llama2-7B models. The codebase is built on PyTorch 2.1.2, Transformers 4.23.0, PEFT 0.10.0, and Opacus 1.4.1. The source code is attached and should be used with the specified version of Transformers and PEFT packages from HuggingFace and the Opacus library above.

\textbf{Impacts of node degree capping.} Table~\ref{tab:degree} further shows the effects of node degree capping for non-private training. Interestingly, the one with node degree capping usually outperforms the one without node degree capping. We hypothesis that the node degree capping could prevent overfitting 
to the high-degree nodes and help generalization, as we argued for the adaptive clipping.   

\begin{table}[htp]
    \centering
    \caption{Dataset statistics and experimental setup for evaluation.}
    \resizebox{0.95\textwidth}{!}{
    \begin{tabular}{l|rr|rr|r|l}
        \toprule
        Dataset & \#Entity & \#Relation &\#Entity (Test) & \#Classes & \#Relation (Test) & Test Domain \\\midrule
        AMAZ-Cloth & 960,613 & 4,626,125 & 476,510 & 9 & 10,000 & AMAZ-Sports \\
        AMAZ-Sports & 357,936 & 2,024,691 & 129,669 & 16 & 10,000 & AMAZ-Cloth \\
        MAG-USA & 132,558 & 702,482 & 6,653 & 40 & 63,635 & MAG-CHN \\
        MAG-CHN & 101,952 & 285,991 & 6,534 & 40 & 34,603 & MAG-USA \\
        \bottomrule
    \end{tabular}}
    \label{tab:dataset}
    \vspace{-4mm}
\end{table}

\begin{table*}[htp]
    \centering
    \caption{Results on zero-shot relation prediction of model fine-tuned with and without degree capping.}
    \resizebox{0.95\textwidth}{!}{
    \begin{tabular}{llrrrrrrrrr}
        \toprule
        \multirow{2}{*}{Privacy} & \multirow{2}{*}{Pretrained Models} & \multicolumn{2}{c}{MAG(CHN$\rightarrow$USA)} & \multicolumn{2}{c}{MAG(USA$\rightarrow$CHN)} & \multicolumn{2}{c}{AMAZ(Sports$\rightarrow$Cloth)} & \multicolumn{2}{c}{AMAZ(Cloth$\rightarrow$Sports)}\\
        & & PREC@1 & MRR & PREC@1 & MRR & PREC@1 & MRR & PREC@1 & MRR\\\midrule
        & BERT.base & 28.07 & 39.11 & 41.93 & 53.91 & 36.13 & 47.07 & 29.84 & 39.61\\
        % \rowcolor{gray!35}
        & BERT.large & 26.37 & 37.73 & 40.90 & 53.16 & 36.89 & 47.50 & 29.30 & 39.76\\
        % \rowcolor{gray!35}
        \multirow{-3}{*}{$\epsilon=\infty$ (no capping)} & Llama2-7B & 32.80 & 46.67 & 45.65 & 58.59 & 41.01 & 52.39 & 29.21 & 41.44\\\midrule
        & BERT.base & 28.92 & 40.05 & 42.34 & 54.58 & 37.47 & 48.22 & 30.13 & 39.98\\
        % \rowcolor{gray!35}
        & BERT.large & 27.67 & 39.23 & 40.40 & 52.79 & 36.25 & 46.90 & 30.75 & 41.36\\
        % \rowcolor{gray!35}
        \multirow{-3}{*}{$\epsilon=\infty$ (K=5)} & Llama2-7B & 39.68 & 53.35 & 46.29 & 59.96 & 38.73 & 50.95 & 32.54 & 45.16 \\
        \bottomrule
    \end{tabular}}
    \label{tab:degree}
\end{table*}

\section{Limitations and Future Works}
\label{apx:limitations}
For adaptive clipping, we mainly focused on comparison between clipping by constant (i.e., standard clipping) and clipping by number of node occurrence (which leads to a constant sensitivity). It will be an interesting future direction to further explore the privacy-utility trade-offs of a general adaptive clipping strategy based some functions of node occurrence. 

For the privacy amplification bounds of cardinality-dependent coupled sampling, the tightness of the bound was not yet explored, which can be also an interesting future work.

\section{Broader Impacts}
\label{apx:impacts}
This work develops a node-level differentially private DP-SGD framework for relational learning, advancing the theory and practice of privacy-preserving machine learning on graph-structured data. Graph data are ubiquitous in modern machine learning applications, and our work helps protect the sensitive attributes or existence of individuals in a network while still enabling meaningful relational inference.

We do not foresee significant negative societal impacts from this work, since our contribution is theoretical and does not involve the release of high-risk models or data.

%%%%%%%%%%%%%%%%%%%%%%%%%%%%%%%%%%%%%%%%%%%%%%%%%%%%%%%%%%%%

\end{document}